





\PassOptionsToClass{pdflatex}{sn-jnl}

\documentclass[sns-vancuover]{sn-jnl}

\usepackage{tabularx}
\usepackage{booktabs}
\usepackage{amssymb}
\usepackage{pifont}
\usepackage{amsmath}
\usepackage{amsthm}
\usepackage{amssymb}
\usepackage{mathtools}
\usepackage[capitalise]{cleveref}
\usepackage{algorithm}
\usepackage{algorithmicx}
\usepackage{subfigure}
\newenvironment{newstuff}{%
    \leavevmode\color{blue}\ignorespaces%
}{%
}%


\jyear{2021}%

\theoremstyle{thmstyleone}%
\newtheorem{theorem}{Theorem}
%

\theoremstyle{thmstyletwo}%
\newtheorem{remark}{Remark}%

\theoremstyle{thmstylethree}%

\raggedbottom


\def\##1\#{\begin{align}#1\end{align}}
\def\$#1\${\begin{align*}#1\end{align*}}
\def\given{\,\vert\,}

\def\tr{\mathop{\text{tr}}\kern.2ex}

\def\P{{\mathbb P}}
\def\E{{\mathbb E}}

\def \inv {^{-1}}

\def \sq {^{2}}

\long\def\comment#1{}

\def\tr{\mathop{\text{Tr}}}

\def\cA{{\mathcal{A}}}
\def\cC{{\mathcal{C}}}
\def\cB{{\mathcal{B}}}
\def\cE{{\mathcal{E}}}

\def\cX{{\mathcal{X}}}
\def\cY{{\mathcal{Y}}}
\def\cZ{{\mathcal{Z}}}

\def\tr{{\text{Tr}}}

\providecommand{\hnormsq}[1]{\tfrac{1}{2}\|#1\|^2}

\newcommand{\yes}{\checkmark}
\newcommand{\no}{\ding{55}}

\newcommand*\diff{\mathop{}\!\mathrm{d}}
\newcommand{\defeq}{\vcentcolon=}

\def\real{{\mathbb{R}}}
\def\R{{\real}}

\newcommand{\commeff}{communication-efficient}

\newcommand{\la}{\langle}
\newcommand{\ra}{\rangle}



\def \ALGO {{\operatorname*{{LocalAdaSEG}}}}
\def \dualgap {{\operatorname*{{DualGap}}}}
\def \head {{\operatorname*{{head}}}}
\def \tail {{\operatorname*{{tail}}}}
\def \sumT {{\sum_{t=1}^T}}
\def \sumttoTm {{\sum_{t=1}^{T-1}}}
\def \sumttotaust {{\sum_{t= 1}^{\tau^*_m }}}
\def \sumttotaustm {{\sum_{t= 1}^{\tau^*_m - 1}}}
\def \sumtfromtaustmtoT {{\sum_{t=\tau^*_m +1}^{T}}}

\def \sumTinSp {{\sum_{t\in S+1}}}
\def \sumTnotinSp {{\sum_{t \notin S+1}}}

\def \sumM {{\sum_{m=1}^M}}
\def \sumtautotm {{\sum_{\tau =1}^{t-1}}}

\def \deltamt {{\delta_t^m}}
\def \deltamtau {{\delta_{\tau}^m}}

\def \zmt {{z^m_t}}
\def \tzmt {{\tilde{z}^m_t}}
\def \tzmtm {{\tilde{z}^m_{t-1}}}
\def \tzmtmo {{\tilde{z}^{\circ}_{t-1}}}
\def \tzmtmst {{\tilde{z}^{m,*}_{t-1}}}
\def \etamt {{\eta^m_t}}
\def \etamtau {{\eta^m_{\tau}}}
\def \etamtsq {{(\eta^m_t)^2}}

\def \cVoneT {{\mathcal{V}_1(T)}}
\def \cVmT {{\mathcal{V}_m(T)}}

\def \etamT {{\eta^m_T}}
\def \ooetamt {{\frac{1}{\eta^m_t}}}
\def \Zmt {{Z^m_t}}
\def \Zmtsq {{(Z^m_t)^2}}

\def \Zmtausq {{(Z^m_\tau)^2}}

\def \gmt {{g^m_t}}
\def \Mmt {{M^m_t}}
\def \ximt {{\xi^m_t}}

\def \tO {{\tilde{O}}}

\newtheorem{assumption}{Assumption}
\newtheorem{lemma}[theorem]{Lemma}

\begin{document}

\title[LocalAdaSEG]{Local AdaGrad-Type Algorithm for Stochastic Convex-Concave Optimization}


\author[1,2]{\fnm{Luofeng} \sur{Liao}}\email{ll3530@columbia.edu}
\author*[2]{\fnm{Li} \sur{Shen}}\email{mathshenli@gmail.com}
\author[2]{\fnm{Jia} \sur{Duan}}\email{xuelandj@gmail.com}
\author[3]{\fnm{Mladen} \sur{Kolar}}\email{mkolar@chicagobooth.edu}
\author[2]{\fnm{Dacheng} \sur{Tao}}\email{dacheng.tao@gmail.com}

\affil[1]{\orgdiv{IEOR}, \orgname{Columbia Unviersity}, \orgaddress{ \state{NY}, \country{USA}}}

\affil[2]{\orgdiv{JD Explore Academy}, \orgname{JD.com Inc}, \orgaddress{ \city{Beijing}, \country{China}}}

\affil[3]{\orgdiv{Booth School of Business}, \orgname{University of Chicago}, \orgaddress{ \state{IL}, \country{USA}}}


\abstract{Large scale convex-concave minimax problems arise in numerous applications, including game theory, robust training, and training of generative adversarial networks. Despite their wide applicability, solving such problems efficiently and effectively is challenging in the presence of large amounts of data using existing stochastic minimax methods. We study a class of stochastic minimax methods and develop a communication-efficient distributed stochastic extragradient algorithm, LocalAdaSEG, with an adaptive learning rate suitable for solving convex-concave minimax problems in the Parameter-Server model. LocalAdaSEG has three main features: (i) a periodic communication strategy that reduces the communication cost between workers and the server; (ii) an adaptive learning rate that is computed locally and allows for tuning-free implementation; and (iii) theoretically, a nearly linear speed-up with respect to the dominant variance term, arising from the estimation of the stochastic gradient, is proven in both the smooth and nonsmooth convex-concave settings. LocalAdaSEG is used to solve a stochastic bilinear game, and train a generative adversarial network. We compare LocalAdaSEG against several existing optimizers for minimax problems and demonstrate its efficacy through several experiments in both homogeneous and heterogeneous settings.
}

\keywords{Stochastic Minimax Problem, Adaptive Optimization, Distributed Computation}

\maketitle

\section{Introduction}\label{sec:intro}

Stochastic minimax optimization problems
arise in applications ranging from 
game theory \cite{neumann1928theorie}, 
robust optimization \cite{delage2010distributionally}, 
and AUC Maximization \cite{guo2020communication}, 
to adversarial learning \cite{wang2019towards}
and training of generative adversarial networks (GANs) \cite{goodfellow2014generative}.
In this work, we consider
\begin{align}\label{spp}
  \min_{x\in\mathcal{X}} \max_{y\in\mathcal{Y}} 
  \; \bigg\{ F(x,y) 
  =\int_\Xi f(x,y, \xi) P(\diff \xi)\bigg\},  
\end{align}
where $\cX\subseteq \mathbb{X}$, $\cY\subseteq \mathbb{Y}$ 
are nonempty compact convex sets,
$\mathbb{X}$, $\mathbb{Y}$ are finite dimensional vector spaces,
$\xi$ is a random vector with an unknown probability distribution $P$ supported on a set $\Xi$, and 
$f:\cX\times \cY \times \Xi \to \R$ is a real valued function, which may be nonsmooth. 
Throughout the paper, we assume that the 
expectation $ \E _{\xi\sim P} [f(x,y,\xi)] $ is well defined and finite. 
For all $\xi\in \Xi$, we assume that the function $F(x,y)$ is convex in 
$x\in \cX$ and concave in $y\in \cY$. 
In addition, we assume that $F(x,y)$ is a Lipschitz continuous function.

There are \textbf{three main challenges} in developing
an efficient solver for the large-scale minimax problem \eqref{spp}.
First, the solver should generate converging iterates. In contrast to convex optimization, convergence results for minimax problems are harder to obtain.
Second, the solver should be able to take advantage of parallel computing in a communication-efficient way. Only then can it be applied to problems with large-scale datasets, which are often distributed across multiple workers. 
Third, it is desirable for the solver to choose learning rates in an adaptive manner. It is well known that, in minimax problems, solver performance is susceptible to learning rates. We discuss these 
challenges in detail below.

First, it has been shown that direct application of the (stochastic) gradient descent ascent ((S)GDA)
to solve \eqref{spp} may result in divergence of the
iterates \cite{mertikopoulos2018optimistic,
daskalakis2018training,
gidel2019negative,
mertikopoulos2018cycles}. 
Possible ways to overcome the divergence issue 
are to apply the primal-dual hybrid gradient (PDHG) or (stochastic) extragradient method 
and their variants \cite{mertikopoulos2018optimistic,
daskalakis2018training,gidel2018a,azizian2020tight,liu2020towards,zhao2021accelerated,NEURIPS2020_52aaa62e}.

Second,
it is often desirable to have a \commeff\ distributed solver to solve the stochastic minimax problem
\eqref{spp}. 
The first reason being that the minimax problem \eqref{spp} is often instantiated 
as a finite-sum problem with large-scale datasets (with the distribution $P$
being the empirical distribution over millions of data points), and thus storing and manipulating datasets on multiple workers is a must.
For example, when problem \eqref{spp} is 
specified as BigGAN \cite{brock2018large} over ImageNet \cite{deng2009imagenet}, 
the number of training samples is as many as 14 million. 
Traditional distributed SGDA on the problem \eqref{spp}
may suffer from a considerable communication
burden; reducing communication complexity of the algorithm is a major concern in our paper.
The second reason is that, in some scenarios, data are distributed on mobile devices (such as cell phones or smart watches), and due to privacy concerns, local data must stay on the device. Furthermore, frequent communication among devices is not feasible due to failures of mobile devices (network connectivity, battery level, etc.). This further motivates the design of \commeff\ distributed solvers to eliminate central data storage and improve communication efficiency. For these reasons, communication-efficient distributed solvers for minimax problems have been investigated recently
\cite{beznosikov2021distributed,deng2020local,hou2021efficient,mingruiliu2020decentralized}. 

Third, the performance of stochastic minimax solvers for \eqref{spp} is highly dependent on the learning rate tuning mechanism \cite{heusel2017gans,antonakopoulos2021adaptive}. And yet, designing a solver for \eqref{spp} with an adaptive learning rate is much more challenging compared to the convex case; the value of $F$ at an iterate $(x,y)$ does \emph{not} serve as a performance criterion. For example, for classical minimization problems, the learning rate can be tuned based on the loss evaluated at the current iterate, which directly quantifies how close the iterate is to the minimum. However, such an approach does not extend to minimax problems and, therefore, a more sophisticated approach is required for tuning the learning rate. Development of adaptive learning rate tuning mechanisms for large scale stochastic minimax problems has been explored only recently \cite{ bach2019universal,
babanezhad2020geometry,
ene2020adaptive,
antonakopoulos2021adaptive,
liu2020towards}. 
Hence, we ask

\begin{center}
  \emph{
Can we develop an efficient algorithm for the stochastic minimax problem \eqref{spp} 
that enjoys convergence guarantees, 
 communication-efficiency
and adaptivity \textbf{simultaneously}?}
\end{center}

\begin{figure*}
\centering
\label{Diagram}
\includegraphics[scale=.3]{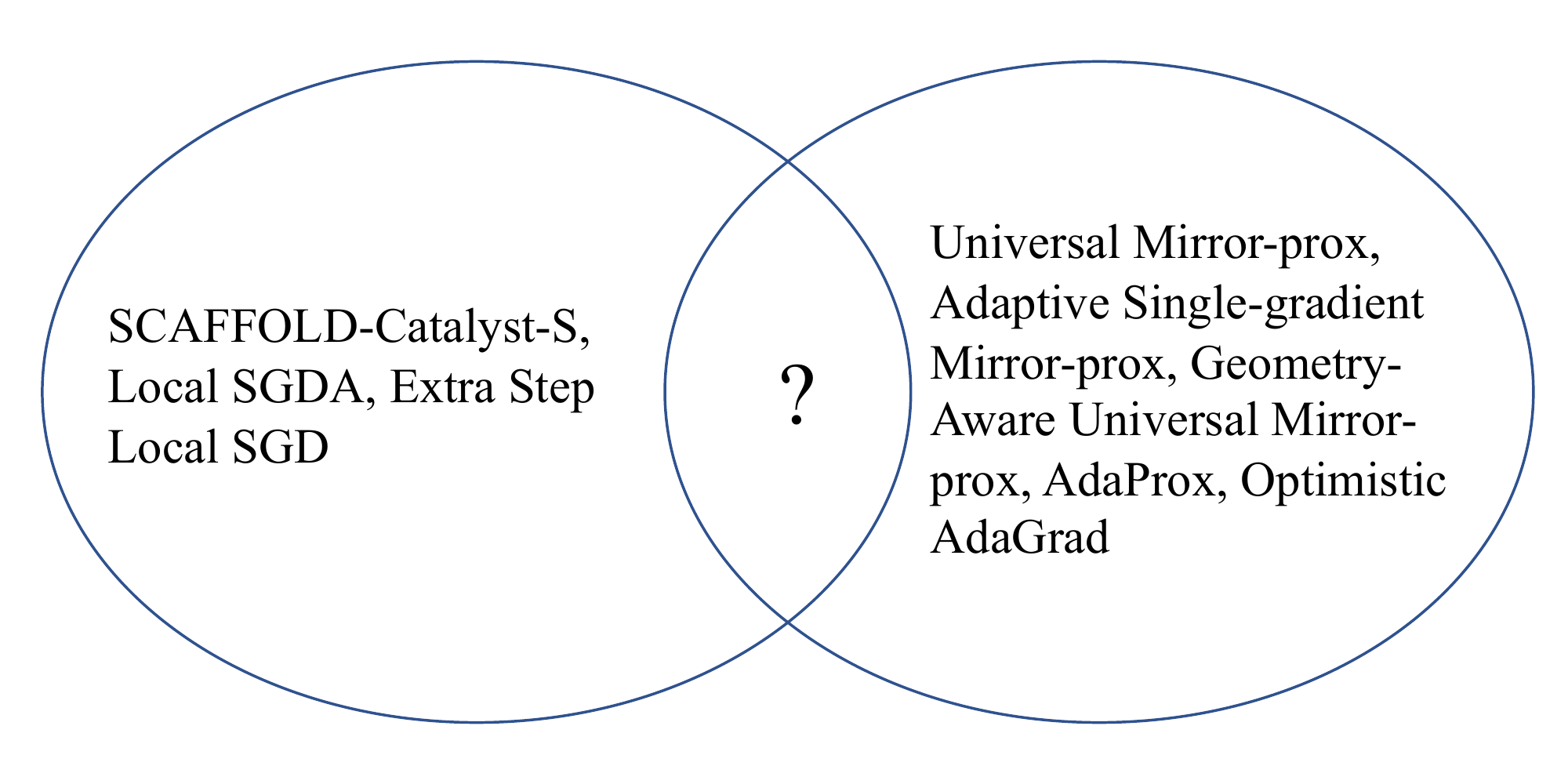}
\caption{A Venn Diagram for related works. Left circle: Communication-efficient methods for stochastic minimax problems. Right circle: Adaptive methods for stochastic minimax problems.}
\end{figure*}

We provide an affirmative answer to this question
and develop $\ALGO$ (Local Adaptive Stochastic Extragradient)
algorithm. Our contributions are three-fold: \\

{\bf{Novel communication-efficient distributed minimax algorithm.}}\ 
Fig. \ref{Diagram}~illustrates the difference between $\ALGO$ algorithm and the existing works. $\ALGO$ falls under the umbrella of the Parameter-Server model \cite{smola2010architecture} and adopts a periodic communication mechanism to reduce the communication cost between the server and the workers, similar to Local SGD/FedAvg \cite{Yu2019onthelinear,stich2018local,Li2020On} in federated learning \cite{mcmahan2021advances}. In addition, in each worker, a local stochastic extragradient algorithm with an adaptive learning rate is performed independently with multiple iterations. Every once in a while, current iterates and adaptive learning rates from all workers are sent to the server. The server computes a weighted average of the iterates, where the weights are constructed from the received local adaptive learning rates. We emphasize that adaptive learning in each worker is distinct from others and is automatically updated according to local data as is done in \cite{chen2021quantized,beznosikov2021distributed}, and different from the existing adaptive distributed algorithms \cite{xie2019local,reddi2021adaptive,chen2021cada}.\\

{\bf{Theoretically optimal convergence rate.}}\ 
Let $M$ denote the number of workers, $\sigma$ denote the variance of stochastic gradients, and $T$ denote the number of local iterations on each worker. For stochastic convex-concave minimax problems, we establish the rate in terms of the duality gap metric \cite{nemirovski2004prox,lin2020near} as $\tilde{O}({\sigma }{/}{\sqrt{MT}})$ in the \emph{nonsmooth and noise-dominant} case and the rate $\tilde{O}({\sigma }{/}{{\sqrt{MT}}}+\text{ higher-order terms})$ in \emph{smooth case with slow cumulative gradient growth}. The terms depending on the variance $\sigma$ achieve the statistical lower bound and are not improvable without further assumptions.
Therefore, the $\ALGO$ algorithm enjoys the linear speed-up property in the stochastic gradient variance term due to the periodic communication mechanism. \\

{\bf{Experimental verification}.}\  
We conduct several experiments on the stochastic bilinear game and the Wasserstein GAN \cite{arjovsky2017wasserstein} to verify the efficiency and effectiveness of the $\ALGO$ algorithm. We also extend the $\ALGO$ algorithm to solve the challenging federated GANs in a heterogeneous setting. The experimental results agree with the theoretical guarantees and demonstrate the superiority of $\ALGO$ against several existing minimax optimizers, such as SEGDA \cite{nemirovski2004prox},
 UMP \cite{bach2019universal}, ASMP \cite{ene2020adaptive}, 
 LocalSEGDA \cite{beznosikov2021distributed}, LocalSGDA \cite{deng2020local}, 
 and Local Adam \cite{beznosikov2021distributed}.  

 \section{Related Work}

Although there has been a lot of work on minimax optimization,
due to space constraints, we summarize only the most closely related
work. Our work is related to the literature on 
stochastic minimax algorithms,
adaptive minimax algorithms, and 
distributed minimax algorithms. We defer a detailed discussion of related work to \cref{sec:related_work} in the appendix.

    

Our work and the proposed $\ALGO$ contribute to the literature described above. To our knowledge, the proposed $\ALGO$ algorithm is the first distributed communication-efficient algorithm for the stochastic minimax problem and simultaneously supports the adaptive learning rate and minibatch size. Moreover, $\ALGO$ communicates only periodically to improve communication efficiency
and uses a local adaptive learning rate, computed on local data in each worker, to improve the efficiency of computation. In addition, $\ALGO$ can also be applied in a non-smooth setting with the convergence guarantee. $\ALGO$ can be seen as a distributed extension of \cite{bach2019universal} with period communication as local SGD \cite{stich2018local}. We note that only very recently a local adaptive stochastic minimax algorithm, called Local Adam, has been used heuristically to train GANs without a convergence guarantee \cite{beznosikov2021distributed}. We summarize the relationship with the existing literature in Table \ref{table:1}.

\begin{table*}[h!]
    \centering
    \begin{tabularx}{\linewidth}{|X |c |c |c|} 
        \hline
     Stochastic minimax algorithms & Nonsmooth ? & Comm. eff. ?& Adaptive ? 
     \\
      \hline\hline
     Mirror SA \cite{nemirovski2009robust},
     SMP \cite{Juditsky2011solving},
     SAMP \cite{chen2017accelerated},
     Optimal Stochastic PDHG-type \cite{zhao2021accelerated}
     & \yes & \no & \no 
     \\
     \hline
    SCAFFOLD-Catalyst-S \cite{hou2021efficient}, Local SGDA \cite{deng2020local},
    Extra Step Local SGD \cite{beznosikov2021distributed}  & \no &  \yes & \no  
    \\ 
    \hline
     Universal Mirror-prox \cite{bach2019universal},
     Adaptive Single-gradient Mirror-prox \cite{ene2020adaptive},
     Geometry-Aware Universal Mirror-prox
     \cite{babanezhad2020geometry},
      AdaProx \cite{antonakopoulos2021adaptive} & \yes & \no & \yes  
     \\
    \hline
    Optimistic AdaGrad \cite{liu2020towards} &  \no * & \no & \yes \\
      \hline\hline
     Our $\ALGO$  & \yes & \yes & \yes  \\
     \hline
    \end{tabularx}
    \caption{Comparison to related works on adaptive or communication-efficient approaches to stochastic minimax problems. Here "Nonsmooth ?" asks whether the algorithm enjoys theoretical guarantees in the nonsmooth convex-concave setting; "Comm. eff.\ ?" asks whether the proposed algorithm is communication-efficient; "Adaptive ?" asks whether the proposed algorithm requires knowledge of problem parameters.  "*": The work of \cite{liu2020towards} discusses non-convex non-concave minimax problems.}
    \label{table:1}
  \end{table*}
  
  \section{Methodology}
  
  \subsection{Notations and Assumptions}\label{sec:assum}

A point $(x^*,y^*)\in \cX\times \cY$ is called a saddle-point for the minimax problem in~\eqref{spp} if for all $(x,y)\in \cX\times\cY$,
\begin{align}
  F(x^*,y)\leq F(x^*,y^*)\leq F(x,y^*).
\end{align}
Under the assumptions stated in \cref{sec:intro}, the corresponding primal, $\min_x \{\max_y F(x,y)\}$,
and dual problem,  $\max_y \{ \min_x F(x,y)\}$, 
have optimal solutions and equal optimal values, denoted $F^*$.
The pairs of optimal solutions $(x^*,y^*)$ form 
the set of saddle-points of $F$ on $\cX\times\cY$.
We denote 
  $\mathbb{Z} = \mathbb{X}\times\mathbb{Y}$, 
  $\mathcal{Z} = \cX\times\cY$, 
  $z = (x,y) \in \mathcal{Z}$, and 
  $z^* = (x^*,y^*) \in \mathcal{Z}$.  
  We use $\| \cdot \|_\cX$, $\|\cdot\|_\cY$, and  $\|\cdot\|_\mathcal{Z}$ 
  to denote the Euclidean norms on $\mathbb{X}$, $\mathbb{Y}$, $\mathbb{Z}$, respectively,
  and let $\|\cdot \|_{\cX,*}$, $\| \cdot \|_{\cY,*}$ and $\| \cdot \|_{\mathcal{Z},*}$ 
  denote the corresponding dual norms. 
  With this notation, 
  $\|z\|_\mathcal{Z} = \sqrt{ \| x\|_\cX \sq + \| y \|_\cY\sq }$ and 
  $\| z \|_{\cZ,*} = \sqrt{\| x\|_{\cX,*}\sq + \| y \|_{\cY,*}\sq }$. 
  Throughout the paper, we focus on the Euclidean setting, but
  note that the results can readily generalize to non-Euclidean cases.

We are interested in finding a saddle-point of $F$ 
  over $\cX\times\cY$. For a candidate solution
  $\tilde{z}=(\tilde{x},\tilde{y}) \in \mathcal{Z}$, 
  we measure its quality by the duality gap, defined as
  \begin{align}\label{dualgap}
  \dualgap(\tilde{z})\defeq \max_{y\in \cY} F(\tilde{x}, y) - \min_{x\in\cX} F(x,\tilde{y}).
  \end{align}
  The duality gap is commonly used as a performance 
  criterion for general convex-concave minimax problems 
  (see, e.g., \cite{nemirovski2004prox,lin2020near}). 
  Note that for all $z\in\cZ$ 
  it holds $\dualgap(z)\geq 0$ and $\dualgap(z)=0$ 
  if and only if $z$ is a saddle-point.
  
  For the stochastic minimax problem \eqref{spp},
  we assume that neither the function $F(x,y)$ nor
  its sub/supgradients in $x$ and $y$ are available.
  Instead, we assume access to an unbiased stochastic 
  oracle $G(x,y,\xi) = [G_x(x,y,\xi), -G_y(x,y,\xi)]$, 
  such that the vector $\E_\xi[G(x,y,\xi)]$ is well-defined
  and $\E_\xi[G(x,y,\xi)] \in [\partial_x F(x,y), -\partial_y F(x,y)]$. 
  For notational convenience, we let
  \begin{align}\label{eq:gradient}
  \tilde{G}(z) \defeq G(x,y,\xi) ,\quad G(z) \defeq \E_\xi[G(x,y,\xi)] .
  \end{align}
  Below, we impose assumptions on the minimax problem~\eqref{spp} 
  and the stochastic gradient oracle~\eqref{eq:gradient}.
  \begin{assumption}[Bounded Domain]\label{as:bddomain}
  There exists $D$ such that
  $\sup_{z\in\cZ} \frac{1}{2}\|z\| \sq \leq D\sq$.
  \end{assumption}
  \begin{assumption}[Bounded Stochastic Gradients] \label{as:bdsg}
  There exists $G$ such
  that $\sup_{z\in\cZ}\|\tilde{G}(z) \|_* \leq G$, P-almost surely.
  \end{assumption}
  
Domain boundedness \cref{as:bddomain} is commonly assumed in the convex-concave minimax literature; see references in \cref{sec:intro}. However, we note that the assumption might be removed in certain settings. For example, \cite{Chen2014,monteiro2011complexity} use a perturbation-based variant of the duality gap as the convergence criterion, and \cite{antonakopoulos2021adaptive} handles unbounded domains via the notion of local norms, while \cite{zhao2021accelerated} handles unbounded domains with access to a convex optimization oracle. The almost sure boundedness Assumption~\ref{as:bdsg} on the gradient oracle seems restrictive but is common in the literature on adaptive stochastic gradient methods (see, e.g., \cite{duchi2011adaptive,chen2018on, bach2019universal, liu2020towards}). In Remark~\ref{rm:unbdgradient} we discuss how to extend our analysis to unbounded oracles.
  
\begin{assumption}[Bounded Variance] \label{as:bdvar}
   There exists $\sigma$ such that  
  $\E_{\xi}\big[ \| G(z) - \tilde{G} (z)\|_*\sq\given z\big] \leq \sigma\sq$ for $P$-almost every $z$.
\end{assumption}
  
We separately analyze the case when the saddle function $F$ is differentiable with Lipschitz gradients.
  
\begin{assumption}[Smoothness] \label{as:smooth}
  Assume that for all $z,z'\in\cZ$, we have $\| G(z) - G(z')\|_*\leq L \|z-z' \|$.
\end{assumption}

\subsection{$\ALGO$ Algorithm}
We introduce the $\ALGO$ algorithm used to solve \eqref{spp} and describe its main features. Algorithm~\ref{algo} details the procedure.

  \begin{algorithm}[h!]
  \caption{\quad $\ALGO(G_0,D; K,M,R;\alpha)$} \label{algo}
  \begin{algorithmic}[1]
  \State \textbf{Input}: $G_0$, a guess on the upper bound of gradients, 
  $D$, the diameter of the set $\cZ$, 
  $K$, communication interval, 
  $M$, the number of workers, 
  $R$, number of rounds, 
  $\alpha$, base learning rate.
  \State \textbf{Initialize}: $\eta^m_1 = D\alpha /G_0$, 
  $\tilde{z}_0=\tilde{z}^m_0=\tilde{z}^{m,*}_0 = 0$ for all $m$, and 
  $S \defeq \{0,K,2K,\dots, RK\}$.
  \For{$t=1,\dots, T=RK$, parallel for workers $m=1,\dots, M$}
  \State  
  update learning rate $ \etamt = $
  \$
   {D\alpha } \big/ {\sqrt{ G_0^2 + \sum_{\tau=1}^{t-1} 
      \frac{
          \textstyle{\| z^m_{\tau}-\tilde{z}^{m,*}_{\tau-1} \|^2 + \| z^m_{\tau}-\tilde{z}^m_{\tau } \|^2} 
      }{
          {{5} (\etamtau)^{2}} }}
      }
  \$
  \label{line:stepsize}
  \If{$t -1\in S$} \label{line:communicatebegin}
  \State worker $m$: send $(\etamt, \tzmtm)$ to server 
  \State server: compute $\tzmtmo$, the weighted average of $\{ \tzmtm\}_{m\in[M]}$, and broadcast it to workers
  \[
  w^m_t = 
  \frac{
      (\etamt)\inv 
  }{
      \textstyle\sum_{m'=1}^M (\eta^{m'}_t)\inv 
  }
  , \; \tzmtmo = \sumM w^m_t \cdot \tzmtm
  \] 
  \label{line:weights}
  \State worker $m$: set $\tzmtmst = \tzmtmo$ \label{line:sync} \label{line:communicateend}
  \Else
  \State set $\tzmtmst = \tzmtm$ \label{line:nosync}
  \EndIf
  \State update
  \begin{align*}
  \zmt &= 
  \Pi_{\cZ}[ \tzmtmst - \etamt \Mmt ]
  &&\text{ with } \Mmt = \tilde{G}(\tzmtmst)
  \\
  \tzmt &= 
  \Pi_{\cZ} [ \tzmtmst - \etamt \gmt ] 
  &&\text{ with } \gmt = \tilde{G}(\zmt)
  \end{align*} \label{line:extragradient}
  \EndFor
  \State \textbf{Output}: $\frac{1}{TM} \sumM\sumT z^m_t$
  \end{algorithmic}
  \end{algorithm}

\textbf{The Parameter-Server model.}\
$\ALGO$ uses $M$ parallel workers which, in each of $R$ rounds, independently execute $K$ steps of extragradient updates (Line~\ref{line:extragradient}). The adaptive learning rate is computed solely based on  iterates occurred in the local worker (Line~\ref{line:stepsize}). Let $S \defeq \{0,K,2K,\dots, RK=T\}$ denote the time points of communication. At a time of communication  ($t\in S+1$, Lines~\ref{line:communicatebegin}--\ref{line:communicateend}), the workers communicate and compute the weighted iterate, $\tzmtmo$, defined in Line~\ref{line:weights}. Then the next round begins with a common iterate $\tzmtmo$. Finally, $\ALGO$ outputs the average of the sequence $\{\zmt \}_{m\in[M],t\in[T]}$. Overall, each worker computes $T=KR$ extragradient steps locally, 
for a total of $2MT$ stochastic gradient calls (since each extragradient step, Line~\ref{line:extragradient}, requires two calls of gradient oracle) with $R$ rounds of communication (every $K$ steps of computation). 
  
\textbf{Extragradient step.}\ 
At the time when no communication happens ($t-1\notin S $), Line~\ref{line:extragradient} reduces to
\begin{align*}
\zmt &= 
\Pi_{\cZ}[ \tzmtm - \etamt \Mmt ]
&&\text{ with } \Mmt = \tilde{G}(\tzmtm) ,
\\
\tzmt &= 
\Pi_{\cZ} [ \tzmtm - \etamt \gmt ] 
&&\text{ with } \gmt = \tilde{G}(\zmt) ,
\end{align*}
where $\Pi_\cZ(z) = \operatorname*{argmin}_{z'\in\cZ} \|z - z'\|_2 $ is the projection operator onto the compact set $\cZ$.
The above update is just the extragradient (EG) algorithm \cite{korpel1976}
that is commonly used to solve minimax problems; see references in \cref{sec:intro}.

\textbf{Periodic averaging weights.}\ 
The proposed weighted averaging scheme in Line~\ref{line:weights} is different from existing works on local SGD and Local Adam \cite{beznosikov2021distributed}. At the time of averaging ($t\!-\! 1\!\in\! S$), $\ALGO$ pulls the averaged iterate towards the local iterate with a smaller learning rate. For the homogeneous case studied in this paper, we expect $w^m\sim 1/M$.

\textbf{Intuition of local adaptive learning rate scheme.}\ 
The adaptive learning rate scheme (Line~\ref{line:stepsize}) follows that of Bach and Levy \cite{bach2019universal} closely. To develop intuition, consider the deterministic setting where $\sigma = 0$ and define $(\deltamt)\sq \defeq \|\gmt\|_*\sq + \|\Mmt\|_*\sq $. If we ignore the projection operation, the learning rate $\etamt$ would look like $\etamt \sim 1/(1+\sum_{\tau=1}^{t-1} (\deltamtau)^2)^{1/2}$. In the nonsmooth case, the subgradients might not vanish as we approach the solution (in the case of convex optimization, consider the function $f(x)=|x|$ near $0$), and we only have $\liminf_{t\to\infty} \deltamt > 0$. This implies $\etamt$ will vanish at the rate $1/\sqrt{t}$, which is the optimal learning rate scheme for nonsmooth convex-concave problems \cite{bach2019universal, antonakopoulos2021adaptive}. For the smooth case, one might expect the sequence $\{\deltamt \}_t$ to be square-summable and $\etamt \to \eta_{\infty}^m > 0$, in which case the learning rate does not vanish. Additionally, the adaptive learning rate for each worker is locally updated to exploit the problem structure available in worker's local dataset. This makes our local adaptive learning rate scheme distinct compared to existing distributed adaptive algorithms for minimization problems \cite{xie2019local,reddi2021adaptive,chen2021cada}. Very recently, \cite{beznosikov2021distributed} used local Adam for training  conditional GANs efficiently, but they provide theoretical guarantees only for the local extragradient without adaptivity.

\textbf{Adaptivity to $(G,L,\sigma)$.}\ 
Our algorithm does not require knowledge of problem parameters such as the size of the gradients $G$, the smoothness $L$, or the variance of gradient estimates $\sigma$. Instead, we only need an initial guess of $G$, denoted $G_0$, and the diameter of the feasible set, $D$. Following \cite{bach2019universal}, we define 
\begin{align}\label{eq:defgamma}
\gamma \defeq \max\{G /G_0, G_0/G\} \geq 1. 
\end{align}
This quantity measures how good our guess is and appears in the convergence guarantees for the algorithm. Our algorithm still requires knowledge of the problem class, as we need to use a different base learning rate, $\alpha$, for smooth and nonsmooth problems; see Theorems~\ref{thm:nonsmooth} and \ref{thm:smooth}, respectively. 

\subsection{Convergence Results}\label{sec:convergence}

We state two theorems characterizing the convergence rate of $\ALGO$ for the smooth and nonsmooth problems. We use the notation $\tilde{O}$ to hide absolute constants and logarithmic factors of $T=KR$ and problem parameters. The proofs are given in \cref{sec:proof:thmnonsmooth} and \cref{sec:proof:thmsmooth} of the appendix. Recall the definition of $\gamma$ in~\eqref{eq:defgamma}.

\begin{theorem}[Nonsmooth Case] \label{thm:nonsmooth}
Assume that Assumptions \ref{as:bddomain}, \ref{as:bdsg}, and \ref{as:bdvar} hold. 
Let $\bar z = \ALGO(G_0, D; K,M,R;1)$.
Then
\[\E[\operatorname*{DualGap}(\bar z)] = \tilde{O}\bigg( \frac{\gamma G D}{\sqrt{T}} + \frac{\sigma D}{\sqrt{MT}}\bigg)\,.\]
\end{theorem}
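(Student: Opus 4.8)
I would run the classical template: turn the duality gap into a weighted regret, bound the regret with the extragradient ``prox'' inequality summed over iterations, and control the adaptive step size by an AdaGrad‑type telescoping argument in the spirit of Bach--Levy \cite{bach2019universal}, with the periodic averaging handled as in Local SGD. Concretely, since $F$ is convex--concave and $\bar z=\frac{1}{TM}\sum_{m,t}z^m_t$, Jensen's inequality and the subgradient inequality (with the selection $G(z^m_t)\in[\partial_xF,-\partial_yF]$) give, for every $u\in\cZ$,
\[
\dualgap(\bar z)\ \le\ \frac{1}{TM}\,\sup_{u\in\cZ}\ \sum_{m=1}^{M}\sum_{t=1}^{T}\big\langle G(z^m_t),\,z^m_t-u\big\rangle .
\]
Split $G(z^m_t)=\gmt+\epsilon^m_t$ with $\gmt=\tilde G(z^m_t)$ and $\epsilon^m_t=\gmt-G(z^m_t)$. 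Because $\zmt$ is measurable before the fresh sample defining $\gmt$ is drawn, and workers run on independent randomness, $\{\epsilon^m_t\}$ is orthogonal with $\E[\langle\epsilon^m_t,\zmt\rangle]=0$; hence $\E[\sup_u\sum_{m,t}\langle\epsilon^m_t,z^m_t-u\rangle]\le 2D\,\E\|\sum_{m,t}\epsilon^m_t\|_*\le 2D\sqrt{\sum_{m,t}\E\|\epsilon^m_t\|_*^2}\le 2D\sigma\sqrt{MT}$ by \cref{as:bddomain,as:bdvar}. Dividing by $TM$, this is exactly the $\sigma D/\sqrt{MT}$ term.

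\textbf{Per‑step inequality and summation.} For each worker $m$ and step $t$, writing $w^m_{t-1}:=\tzmtmst$, the optimality conditions of the two projections in Line~\ref{line:extragradient} yield, for every $u\in\cZ$,
\[
\etamt\langle\gmt,\zmt-u\rangle\ \le\ \tfrac12\|w^m_{t-1}-u\|^2-\tfrac12\|\tzmt-u\|^2-\tfrac12\Delta^m_t+(\etamt)^2\|\gmt-\Mmt\|_*^2 ,
\]
where $\Delta^m_t:=\|\zmt-\tzmtmst\|^2+\|\zmt-\tzmt\|^2$ is precisely the numerator of the step‑size denominator increment. I then divide by $\etamt$ and sum over $t$ and $m$. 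The crucial point is that the averaging weights $w^m_t\propto(\etamt)^{-1}$ are tuned so that at a communication step ($t-1\in S$) the reciprocal‑step‑size‑weighted potential is not increased: by convexity of $\|\cdot\|^2$ and $\big(\sum_{m}(\etamt)^{-1}\big)w^{m'}_t=(\eta^{m'}_t)^{-1}$ one gets $\sum_m(\etamt)^{-1}\|\tzmtmo-u\|^2\le\sum_m(\etamt)^{-1}\|\tzmtm-u\|^2$. Consequently the weighted telescoping/Abel sum collapses (using $\tilde z^m_0=0$ and monotonicity of $(\etamt)^{-1}$) to $O\big(\sum_m D^2/\eta^m_T\big)$, the $-\tfrac12\Delta^m_t$ terms are kept for later cancellation, and we are left to control $\sum_{m,t}\etamt\|\gmt-\Mmt\|_*^2$.

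\textbf{Controlling the step size and the error term.} By non‑expansiveness of $\Pi_\cZ$ and \cref{as:bdsg}, $\|\zmt-\tzmtmst\|\le\etamt\|\Mmt\|_*\le\etamt G$ and $\|\zmt-\tzmt\|\le\etamt\|\Mmt-\gmt\|_*\le2\etamt G$, so each increment satisfies $\Delta^m_\tau/\big(5(\etamtau)^2\big)\le G^2$ — this is the role of the constant $5$ — whence $1/\eta^m_T\le(G_0+G\sqrt T)/D$ (with $\alpha=1$). Dividing by $TM$ and using $\gamma=\max\{G/G_0,G_0/G\}$, the $\sum_m D^2/\eta^m_T$ contribution is $O(\gamma GD/\sqrt T)$. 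For the residual $\sum_{m,t}\etamt\|\gmt-\Mmt\|_*^2$ I would follow Bach--Levy: the non‑expansiveness bound $\|\zmt-\tzmt\|\le\etamt\|\gmt-\Mmt\|_*$ ties $\|\gmt-\Mmt\|_*^2$ to the step‑size increment $\Delta^m_t/(\etamt)^2$ (partially cancelling the kept $-\tfrac12\Delta^m_t$ terms), and the standard AdaGrad inequality $\sum_t\etamt\rho^m_t\le2D\sqrt{G_0^2+\sum_\tau\rho^m_\tau}\le 2D(G_0+G\sqrt T)$ then closes the bound at order $\tilde{O}(\gamma GD/\sqrt T)$ per worker; the logarithmic $\log T$ factors enter here and are absorbed into $\tilde{O}(\cdot)$. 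Adding the three contributions and taking expectations yields $\E[\dualgap(\bar z)]=\tilde{O}\big(\gamma GD/\sqrt T+\sigma D/\sqrt{MT}\big)$.

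\textbf{Main obstacle.} The hard part is the residual adaptive error term $\sum_{m,t}\etamt\|\gmt-\Mmt\|_*^2$ in the \emph{nonsmooth} regime: without a Lipschitz bound one cannot cancel it against the negative movement terms cleanly, and on active projection faces the realized movement can under‑represent $\|\gmt-\Mmt\|_*$ (so the learning rate may not decay as $1/\sqrt t$). Handling this — essentially arguing that whenever the movements, hence the step‑size denominator, stagnate the iterate is close to a variational‑inequality solution and the corresponding regret is already nonpositive — is the technical crux, and is exactly why the movement‑based local step size (and the constant $5$) is chosen as it is. A secondary subtlety is the summation step: the periodic averaging would break a naive per‑worker telescoping were it not for the reciprocal‑step‑size weighting $w^m_t\propto(\etamt)^{-1}$, which is what makes the argument compatible with heterogeneous, data‑dependent local learning rates.
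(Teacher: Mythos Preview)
Your overall template matches the paper's proof exactly: reduce the duality gap to $\sup_{z}\sum_{m,t}\langle G(\zmt),\zmt-z\rangle$, apply the one-step extragradient inequality (the paper's Lemma~\ref{lm:onestep}), split off the martingale noise to get $\sigma D\sqrt{MT}$, telescope the potential term using the key observation that the reciprocal-stepsize weighting makes the averaging step contractive ($B\le 0$ in the paper's Step~3), and bound $D^2/\eta^m_T$ via $(Z^m_t)^2\le G^2$. All of this is correct and is precisely what the paper does.

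The gap is in your treatment of the residual. You apply Young's inequality at the per-step level to replace $\|\gmt-\Mmt\|_*\cdot\|\zmt-\tzmt\|$ by $(\etamt)^2\|\gmt-\Mmt\|_*^2$, and then propose to control $\sum_t\etamt\|\gmt-\Mmt\|_*^2$ by ``tying'' $\|\gmt-\Mmt\|_*^2$ to the step-size increment $(Z^m_t)^2$ via non-expansiveness and invoking $\sum_t\etamt\rho^m_t\le 2D\sqrt{G_0^2+\sum_\tau\rho^m_\tau}$. This does not work: non-expansiveness gives only $\|\zmt-\tzmt\|\le\etamt\|\gmt-\Mmt\|_*$, i.e.\ a \emph{lower} bound on $\|\gmt-\Mmt\|_*^2$ in terms of the movement, so you cannot upper-bound $\|\gmt-\Mmt\|_*^2$ by $(Z^m_t)^2$. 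And the AdaGrad telescoping inequality applies only when $\rho^m_t=(Z^m_t)^2$ (since $\etamt$ is built from those increments), not when $\rho^m_t=\|\gmt-\Mmt\|_*^2$. Your ``Main obstacle'' paragraph correctly diagnoses exactly this failure mode (projection can make the movement under-represent the gradient mismatch, so the learning rate need not decay), but the variational-inequality/stagnation argument you sketch is neither carried out nor what the paper does.

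The paper's actual resolution is much simpler and avoids the obstacle entirely: it does \emph{not} apply Young's inequality. It keeps the cross term
\[
IV=\sum_{m,t}\|\gmt-\Mmt\|_*\cdot\|\zmt-\tzmt\|,
\]
bounds $\|\gmt-\Mmt\|_*\le 2G$ directly by Assumption~\ref{as:bdsg}, and then controls the remaining factor via Cauchy--Schwarz in $t$ and the logarithmic AdaGrad lemma:
\[
\sum_t\|\zmt-\tzmt\|\le\sqrt{T}\sqrt{\sum_t 5(\etamt)^2(Z^m_t)^2}
= D\sqrt{5T}\sqrt{\sum_t\frac{(Z^m_t)^2}{G_0^2+\sum_{\tau<t}(Z^m_\tau)^2}}
= D\sqrt{T}\cdot\tilde O(\gamma),
\]
using Lemma~\ref{lm:boundwithlog}. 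This gives $IV=\tilde O(\gamma GDM\sqrt{T})$ with no need for cancellation against the negative movement term $III$, which is simply discarded ($III\le 0$). So the fix is: do not square via Young; use the gradient bound $2G$ on the first factor and the movement-based step size on the second.
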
 

\begin{theorem}[Smooth Case] 
    \label{thm:smooth}
    Assume that Assumptions \ref{as:bddomain}, \ref{as:bdsg}, \ref{as:bdvar}, and \ref{as:smooth} hold.
    Let $\bar z = \ALGO(G_0, D;K,M,R;1/\sqrt{M})$. 
    Define the cumulative norms of stochastic gradients occurred on worker $m$ as
    \begin{align} \label{eq:defcVmT}
    \cVmT \defeq \E \left[\sqrt{\sumT \| \gmt \|_*\sq + \| \Mmt \|_*\sq}\right].
    \end{align}
    Then 
    \#
    \E[\operatorname*{DualGap}(\bar z)] =
    \tilde{O}   \!\bigg(\! \frac{\sigma D}{{\sqrt{MT}}}
    \!+\! \frac{D \sqrt{M}  \cVoneT}{T} 
    \!+\! \frac{\gamma \sq LD\sq M^{-1/2}}{T}
    \!+\! \frac{\gamma G D \sqrt{M}}{T}
    \bigg) .
    \label{eq:bound2}
    \#
\end{theorem}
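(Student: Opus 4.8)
The plan is to pass from the duality gap to a uniform regret bound and then apply, per worker, the adaptive mirror-prox machinery of Bach and Levy \cite{bach2019universal}, handling the periodic communication separately. By convexity--concavity of $F$ and Jensen's inequality,
\[
\E[\dualgap(\bar z)] \;\le\; \frac{1}{TM}\,\E\sup_{u\in\cZ}\sum_{m=1}^{M}\sum_{t=1}^{T}\langle G(\zmt),\,\zmt-u\rangle .
\]
Writing $G(\zmt)=\gmt+\ximt$ with $\ximt:=G(\zmt)-\gmt$, the contribution of $\ximt$ to the supremum is at most $\sum_{m,t}\langle\ximt,\zmt\rangle+\sqrt2\,D\,\|\sum_{m,t}\ximt\|_*$. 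Since $\zmt$ is measurable with respect to the $\sigma$-algebra before the sample defining $\gmt$ is drawn, $\{\ximt\}$ forms a martingale-difference array with zero conditional mean that is moreover independent across workers; hence the first term has zero expectation and $\E\|\sum_{m,t}\ximt\|_*^2=\sum_{m,t}\E\|\ximt\|_*^2\le MT\sigma^2$. This already produces the linear-speedup term $\tilde O(\sigma D/\sqrt{MT})$, and it is the single place where cross-worker independence of the noise enters. It then remains to bound $\E\sup_u\sum_{m,t}\langle\gmt,\zmt-u\rangle$.

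For this I would use the standard Euclidean extragradient one-step inequality. Writing $b^m_t:=\|\zmt-\tzmtmst\|^2+\|\zmt-\tzmt\|^2$, the update in Line~\ref{line:extragradient} satisfies, for every $u\in\cZ$,
\[
\etamt\langle\gmt,\zmt-u\rangle \;\le\; \tfrac12\|\tzmtmst-u\|^2-\tfrac12\|\tzmt-u\|^2-\tfrac12\,b^m_t+\etamt\langle\gmt-\Mmt,\,\zmt-\tzmt\rangle .
\]
Dividing by $\etamt$ and summing over $t$ and $m$, the terms $\tfrac1{2\etamt}\|\tzmtmst-u\|^2-\tfrac1{2\etamt}\|\tzmt-u\|^2$ telescope within each round (using that $1/\etamt$ is non-decreasing in $t$); at a communication step the anchor $\tzmtmst$ is replaced by $\tzmtmo=\sum_{m'}w^{m'}_t\,\tzmtm$, and the crucial observation is that the weights $w^{m'}_t\propto 1/\eta^{m'}_t$ are exactly those making $\sum_m\tfrac1{2\etamt}\|\tzmtmo-u\|^2\le\sum_m\tfrac1{2\etamt}\|\tzmtm-u\|^2$ by Jensen --- so the consensus step costs nothing and the telescoping closes as if the workers ran in isolation. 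Using $\|z\|\le\sqrt2\,D$ on $\cZ$, one obtains, uniformly in $u$,
\[
\sum_{m,t}\langle\gmt,\zmt-u\rangle \;\le\; \underbrace{\tfrac{5D}{\alpha}\textstyle\sum_m\sqrt{A^m_T}}_{T_1}\;-\;\underbrace{\tfrac12\textstyle\sum_{m,t}\tfrac{b^m_t}{\etamt}}_{T_2}\;+\;\underbrace{\textstyle\sum_{m,t}\langle\gmt-\Mmt,\zmt-\tzmt\rangle}_{T_3},
\]
where $A^m_t=G_0^2+\tfrac15\sum_{\tau<t}b^m_\tau/(\eta^m_\tau)^2$ is the denominator appearing in $\etamt$.

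The core of the argument is the self-bounding of the adaptive stepsize together with a careful accounting of the negative reservoir $T_2$. Non-expansiveness of $\Pi_\cZ$ and the triangle inequality give $b^m_\tau\le 3(\eta^m_\tau)^2(\|g^m_\tau\|_*^2+\|M^m_\tau\|_*^2)$, so $A^m_T\le G_0^2+\sum_{\tau\le T}(\|g^m_\tau\|_*^2+\|M^m_\tau\|_*^2)$ and, by $\sqrt{a+b}\le\sqrt a+\sqrt b$, $\E\sqrt{A^m_T}\le G_0+\cVmT$; dividing $\E[T_1]$ by $TM$ and using $\alpha=1/\sqrt M$ together with $G_0\le\gamma G$ yields the terms $\tfrac{D\sqrt M\,\cVoneT}{T}$ and $\tfrac{\gamma GD\sqrt M}{T}$. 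For $T_3$ I would decompose $\gmt-\Mmt=[G(\zmt)-G(\tzmtmst)]+\ximt-\zeta^m_t$, with $\zeta^m_t$ the noise at $\tzmtmst$. Smoothness bounds the first part by $L\|\zmt-\tzmtmst\|\,\|\zmt-\tzmt\|$, which after Young's inequality is dominated by the corresponding piece of $-T_2$ on every step where $\etamt$ has already fallen below an $O(1/L)$ threshold (equivalently $A^m_t\gtrsim(D\alpha L)^2$); on the remaining short transient phase --- finitely many steps on which $\etamt$ still exceeds the threshold --- the contribution is bounded using $b^m_t=O(D^2)$ and the growth of $A^m_t$ there, and works out to the $\gamma^2 LD^2M^{-1/2}/T$ term (the $\gamma^2$ because a poor guess inflates $1/G_0^2\asymp\gamma^2/G^2$). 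The noise parts of $T_3$ are split by comparing each stochastic extragradient point to its deterministic counterpart: the pieces paired with $\mathcal F$-measurable increments have zero conditional mean, while the residuals are bounded by $\etamt$ times the squared dual norm of a noise term and absorbed, via an AdaGrad-type telescoping inequality, into the leftover of $T_2$ and a $\tilde O$ of lower-order variance terms. Choosing $\alpha=1/\sqrt M$ is what balances the telescoping cost in $T_1$ against the aggregation gain.

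I expect the main obstacle to be precisely this last bookkeeping: the single negative reservoir $T_2=-\tfrac12\sum_{m,t}b^m_t/\etamt$ must simultaneously (i) dominate the smoothness cross-term, (ii) absorb the dependent remainders of the stochastic cross-terms, and (iii) be traded through the self-bounding inequality $A^m_T\lesssim G_0^2+\tfrac1{D\alpha}\sqrt{A^m_T}\sum_t b^m_t/\etamt$ so as to keep $\E[T_1]$ at the order $\tfrac D\alpha\sum_m(G_0+\cVmT)$ --- all while keeping the genuinely dominant $\sigma$-dependence at the $1/\sqrt{MT}$ rate, i.e. confined to the linear-in-$u$ martingale term and absent from every cross-term. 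The constant $5$ in the denominator of $\etamt$ and the base rate $\alpha=M^{-1/2}$ are calibrated exactly so that (i)--(iii) close; verifying this, and tracking the logarithmic factors hidden in $\tilde O$ that are generated by the harmonic-type sums $\sum_t\etamt=D\alpha\sum_t(A^m_t)^{-1/2}$ arising in the transient and noise estimates, is where the substance of the proof lies.
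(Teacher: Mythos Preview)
Your overall plan is the paper's: pass to a regret bound, isolate the martingale noise, telescope the anchor terms across rounds using the weighted consensus (your Jensen observation is exactly how the paper shows that contribution is $\le 0$), and split the smoothness cross-term by the stopping time $\tau^*_m$ at which $\etamt$ drops below $1/(2L)$. Your direct bound $\E\sqrt{A^m_T}\le G_0+\cVmT$ via $b^m_\tau\lesssim(\eta^m_\tau)^2(\|g^m_\tau\|_*^2+\|M^m_\tau\|_*^2)$ is in fact a little cleaner than the paper's route, which applies the square-root telescoping lemma twice to reach the same place.

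The one substantive deviation is your treatment of the stochastic part of $T_3$, and there the plan does not close as stated. You want $\sigma$ to be ``absent from every cross-term'' by centering against deterministic extragradient counterparts and absorbing residuals of order $\etamt\|\text{noise}\|_*^2$ into the leftover of $T_2$. Two obstructions: (a) $z^m_t-\tilde z^m_t$ depends on \emph{both} fresh noise draws at step $t$, so after any such splitting the uncentered remainders summed over $m,t$ are at least of order $\sum_{m,t}\E[\etamt]\,\sigma^2=O(MT\cdot D\alpha\sigma^2/G_0)$, which does not vanish with $T$; and no ``AdaGrad-type telescoping'' applies here because the numerator $\|\text{noise}\|_*^2$ is unrelated to the $(Z^m_\tau)^2$ accumulated in the denominator of $\etamt$; (b) there is no lower bound on $b^m_t$ in terms of the noise, so the head of $T_2$ cannot absorb these residuals. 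The paper does not attempt this refinement. It simply bounds the noise cross-term $V=\sum_{m,t}\|\zeta^m_t\|_*\,\|z^m_t-\tilde z^m_t\|$ by a global Cauchy--Schwarz,
\[
\E[V]\;\le\;2\sigma\sqrt{MT}\cdot\Big(\E\Big[\textstyle\sum_{m,t}5(\etamt)^2(Z^m_t)^2\Big]\Big)^{1/2},
\]
and controls the second factor by the log-sum inequality $\sum_t (Z^m_t)^2/A^m_t\le 6\gamma^2+2\log(1+\gamma^2 T)$, obtaining $\tilde O(D\alpha\sqrt{M}\,\gamma)=\tilde O(\gamma D)$ at $\alpha=1/\sqrt M$. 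Hence $\E[V]=\tilde O(\gamma\sigma D\sqrt{MT})$, the same order as the martingale term $I$: $\sigma$ is \emph{not} confined to the linear-in-$u$ term, and it need not be for the stated bound. This is also the place where $\alpha=1/\sqrt M$ actually matters --- it balances the $1/\alpha$ in $T_1$ against the $\alpha\sqrt M$ in $V$.

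One smaller point: on the transient phase of the smoothness head, the crude bound $b^m_t=O(D^2)$ does not by itself control the number of pre-threshold steps (there is no positive lower bound on $(Z^m_t)^2$). The paper uses the log-sum inequality $\sum_{t\le\tau^*_m}(Z^m_t)^2/A^m_t\le 6\gamma^2+2\log(A^m_{\tau^*_m}/G_0^2)$ together with $\sqrt{A^m_{\tau^*_m}}\le 2D\alpha L$, yielding $\mathcal C_{\rm head}=\tilde O(\alpha^2 D^2 L M\gamma^2)=\tilde O(\gamma^2 LD^2)$ at $\alpha=1/\sqrt M$.
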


\begin{remark}[The term $\cVoneT$.]\label{remark-linear-speedup}
Note that by symmetry $\cVmT = \mathcal{V}_1(T)$ for all $m$. Although a trivial bound on $\cVoneT$ is $\cVoneT\leq G\sqrt{2T}$, 
typically we have $\cVoneT \ll \sqrt{T}$ in practice \cite{duchi2011adaptive, Reddi2018on, chen2018universal, chen2018on, liu2020towards}, especially in the sparse data scenarios. For example, consider the bilinear saddle-point problem $\min_{x\in\cX}\max_{y\in\cY} \big\{x^\top ( \sum_{i=1}^n p_i M_i) y\big\}$, where a larger weight $p_i>0$ means the matrix $M_i$ appears more frequently in the dataset. When most of matrices with large weights are row-sparse and column-sparse, the quantity $\cVoneT$ is much smaller than $G\sqrt{2T}$. Theorem~\ref{thm:smooth_noV}, in the appendix, shows that with a different choice of the base learning rate $\alpha$ one can obtain a near linear speed-up result, which removes the dependence on $\cVoneT$: for large $T$, 
\[
\E[\operatorname*{DualGap}(\bar z)] = 
\tilde{O}\left( \frac{\sigma D}{{\sqrt{MT^{1-2\epsilon}}}}
+ \frac{\gamma\sq LD\sq}{T^{1-2\epsilon}}
+ \frac{LD\sq M }{T}
+ \frac{\gamma G D M^{3/2}}{T^{1+\epsilon}}
\right),
\]
for any $\epsilon \in (0, \tfrac12 )$.
Following the discussion in \cite{chen2018universal,liu2020towards}, when the cumulative growth of the stochastic gradient is slow, i.e., $\cVoneT = O(T^{b})$ for some $0<b<\tfrac{1}{2}$,
then the second term in~\eqref{eq:bound2} is $O(DM^{3/2}/T^{1-b})$
and
linear speed-up is achieved, since
as $T\to\infty$, the dominating term become $ O(\sigma D / \sqrt{MT})$. 
\end{remark}

\begin{remark}[Extension to unbounded stochastic gradient oracle]
\label{rm:unbdgradient}
Our analysis can be extended 
to unbounded homogeneous and light-tailed oracles using the following argument.
Let 
\$
\|G\|_\infty\defeq \sup_{z\in\cZ} \| G(z) \|_*<\infty ,
\$
which upper bounds the expectation of the 
SG oracle. Assume $\|\tilde{G}(z) - G(z)\|_* / \| G\|_\infty$ is 
independent of $z$ and follows the distribution 
of the absolute value of a standard normal. 
Define the set $\cZ'\defeq\{ \zmt, \tzmtmst\}_{t,m}$
of all iterates.
For any $0< \delta < 1$, define the event
\begin{align*}
\cE \defeq \Big\{
     \max_{z'\in\cZ'} \|\tilde{G}(z')-G(z')\|_*  \leq  G_{T,\delta } \defeq 
      \| G\|_\infty  \cdot  \big(\sqrt{2\log(4MT)} +\sqrt{2\log(2/\delta)}\big)
\Big\}.
\end{align*}
Then $\P(\cE) \geq 1-\delta$; see Appendix \ref{app:unbdsg}. 
We can repeat the proof of Theorem~\ref{thm:nonsmooth} and Theorem~\ref{thm:smooth}
on the event $\cE$ and 
interpret our results with $G$ replaced by $G_{T,\delta }$, 
which effectively substitutes $G$ with $\| G\|_\infty$ at
the cost of an extra $\log(T)$ factor.
\end{remark}

\begin{remark}[Baseline 1: Minibatch EG] \label{rm:minibatch_EG}

We comment on the performance of an obvious baseline that implements minibatch stochastic EG using $M$ workers. Suppose the algorithm takes $R$ extragradient steps, with each step using a minibatch of size $KM$, resulting in a procedure that communicates exactly $R$ times. The performance of such a minibatch EG for general nonsmooth and smooth minimax problems \cite{bach2019universal,ene2020adaptive} is, respectively,\footnote{These bounds hold due to Theorem 4 of
\cite{ene2020adaptive}, whose rates for nonsmooth and smooth problems are
of the form $O(R(G+\sigma)/\sqrt{T})$ and $O(\beta R\sq / T + R\sigma/\sqrt{T})$, respectively.
The claim follows 
with $\sigma$ in the original theorem 
statement replaced by $\sigma/\sqrt{KM}$, 
$\beta$ by $L$, $R$ by $D$, $G$ by $\|G\|_\infty$, and $T$ by $R$.}
\$ 
{O}\bigg(\frac{\sigma D}{\sqrt{KMR}} + \frac{\| G\|_\infty D}{\sqrt{R}} \bigg)
 \text{ and } 
{O}\bigg(\frac{\sigma D}{\sqrt{KMR}} + \frac{LD\sq}{R} \bigg)\,.
\$
Under the same computation and communication structure, our algorithm enjoys adaptivity, achieves the same linear speed-up in the variance term $\frac{\sigma D}{\sqrt{KMR}}$, and improves dependence on the gradient upper bound $\|G\|_\infty$ and the smoothness parameter $L$, which is a desirable property for problems where these parameters are large.
\end{remark}

\begin{remark}[Baseline 2: EG on a single worker]
Another natural baseline is to run EG on a single worker for $T$ iterations with batch-size equal to one. The convergence rates for this procedure in nonsmooth and smooth cases
are $O(\sigma D /\sqrt{T} + \|G \|_\infty D / \sqrt{T})$ 
and $O(\sigma D /\sqrt{T} + L D\sq/T)$, respectively.
In the smooth case, EG on a single worker is inferior to minibatch EG, since the dominant term for the former is $1/\sqrt{T}$, but it is $1/\sqrt{MT}$ for the latter. On the other hand, in the nonsmooth case, minibatch EG reduces the variance term, but the term involving the deterministic part degrades. Therefore, in the nonsmooth case, we can only claim  that the minibatch EG is better than the single-worker mode in the noise-dominant regime $\sigma = \Omega(\|G\|_\infty \sqrt{M})$.
\end{remark}

\begin{remark} [On the choice of $K$]
Consider the baseline minibatch EG (see \cref{rm:minibatch_EG}) which runs as follows: the algorithm takes $R$ extragradient steps, with each step using a minibatch of size $KM$, resulting in a procedure that communicates exactly $R$ times. Note this procedure has exactly the same computation and communication structure as $\ALGO$, facilitating a fair comparison. In the non-smooth case, our theory shows that $\ALGO$ dominates minibatch EG regardless of the choice $K$. Therefore, let us focus the discussion on the \emph{smooth loss with slow gradient growth case}. Suppose that the gradient growth term $\mathcal{V}_{m}(T):=\mathbb{E} \big[(\sum_{t=1}^{T}\left\|g_{t}^{m}\right\|_{*}^{2}+\left\|M_{t}^{m}\right\|_{*}^{2})^{1/2} \big]$ admits a rate $\mathcal{V}_{m}(T) = O(T^b)$ for some $0 < b < 1/2$. Theorem 2 then shows that $\ALGO$ enjoys a convergence rate (ignoring problem parameters $L, D$ and $G$)
\begin{align*}
    \frac{1}{\sqrt{MKR}} + \frac{\sqrt{M}}{(KR)^{1-b}} + \frac{\sqrt{M}}{ KR} \;,
\end{align*} 
where $M$ is the number of machines, $R$ the communication rounds, and $K$ is the length between two communications. The minibatch EG attains the convergence rate 
\begin{align*}
    \frac{1}{\sqrt{MKR}} + \frac{1}{R} \;.
\end{align*}
Both algorithms achieve linear speedup, i.e., the dominant term is $O(\sigma/\sqrt{MKR})$.
In order for $\ALGO$ to be comparable with minibatch EG in the higher order term, we set $\sqrt{M}/(KR)^{1-b} = \Theta(1/R)$ and $\sqrt{M}/(KR) = O(1/R)$ and obtain $K = \Theta(\sqrt{M}T^b)$. With this choice of $K$, $\ALGO$ achieves a communication efficiency no worse than minibatch EG with the crucial advantage of being tuning-free. Compared with case of optimizing strongly-convex functions, local SGD needs $K = O(\sqrt{T})$ to achieve linear speedup \cite{stich2018local}.
The discussion here is purely theoretical, since the exponent of gradient growth $b$ is hard to estimate in practice.
\end{remark}

\paragraph{Proof Sketch of Theorem \ref{thm:smooth}}
    We present a proof sketch for the smooth case. Recall the update formula 
    \begin{align*}
        \zmt &= 
        \Pi_{\cZ}[ \tzmtmst - \etamt \Mmt ]
        &&\text{ with } \Mmt = \tilde{G}(\tzmtmst),
        \\
        \tzmt &= 
        \Pi_{\cZ} [ \tzmtmst - \etamt \gmt ] 
        &&\text{ with } \gmt = \tilde{G}(\zmt).
        \end{align*}
\cref{fig:computation} provides a computation diagram and illustrates the relationship between the above variables.
\begin{figure}[ht]
        \centering
        \includegraphics[scale = .6]{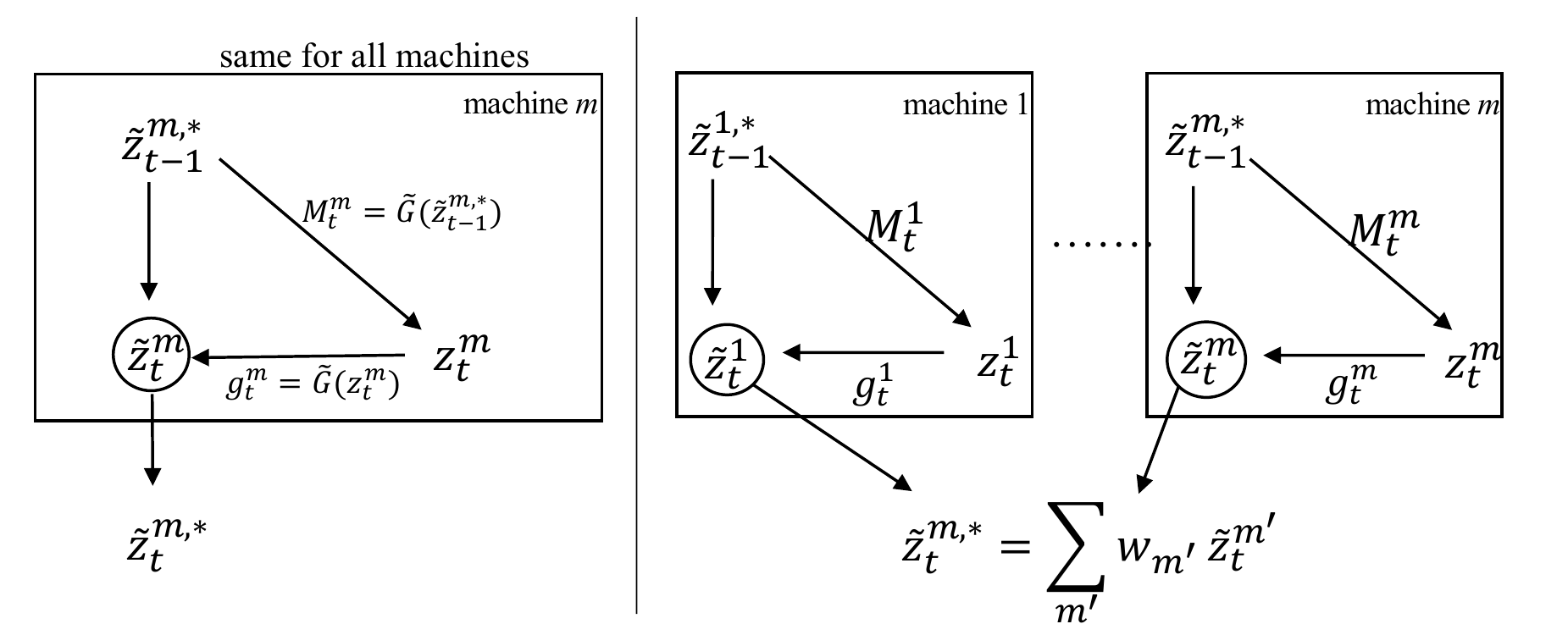}
        \caption{The computation diagram for $\ALGO$. Left panel: computation on machine $m$ when no communication ($t\notin S$). Right panel: computation on machine $m$ when on communication round ($t\in S$)}
        \label{fig:computation}
\end{figure}

We define the noise in the gradient operator $G$ by
\$\ximt \defeq G(\zmt) - \gmt = G(\zmt) - \tilde{G}(\zmt).\$ 
Moreover, we define a gradient-like quantity 
$$\left(Z_{t}^{m}\right)^{2}:=\frac{\left\|z_{t}^{m}-\tilde{z}_{t-1}^{m, *}\right\|^{2}+\left\|z_{t}^{m}-\tilde{z}_{t}^{m}\right\|^{2}}{5\left(\eta_{t}^{m}\right)^{2}}.
$$ 
If we ignore the projection operator in the update, the term $(Z_{t}^{m})$ will be of a similar scale as the gradients $\tilde G(\zmt)$ and $\tilde G(\tzmt)$.

We begin with the following decomposition: for all $z\in \cZ$,
\begin{align*}
   &  \sumT\sumM \big\langle \zmt - z, G(\zmt) \big\rangle 
    \\
&= \sumT\sumM \big\langle  \zmt - z, \ximt \rangle+ \sumT\sumM \langle \zmt - z, \gmt \rangle
\\ 
&\leq \underbracket{\sumT\sumM \big\langle  \zmt - z, \ximt \big\rangle}_{{I}(z)} 
\\
&\quad + \underbracket{\sumT\sumM \frac{1}{\etamt}\Big(\hnormsq{z-\tzmtmst} - \hnormsq{z-\tzmt}\Big) }_{{II}(z)}
\\ 
& \quad \underbracket{ - \sumT\sumM  \frac{1}{\etamt}\Big(\hnormsq{\zmt-\tzmtmst} + \hnormsq{\zmt-\tzmt}\Big)}_{{III}} 
\\
& \quad + \underbracket{\sumT\sumM \|\gmt-\Mmt \|_* \cdot \|\zmt -\tzmt\|}_{{IV}}, 
\end{align*}
where we have used a descent lemma for EG updates common in the literature (Lemma 4 in our paper).
The reason we care about the above quantity is that by the convexity-concavity of the problem, the duality gap metric can be upper-bounded by this term.

Next, we analyze each term separately. The term $I(z)$ characterizes the noise of the problem and eventually contributes to the noise term $\frac{\sigma}{\sqrt{KMR}}$. For the term $II$ we use a telescoping argument and show that it can be upper bounded by $\sum_{m,t}\etamt \Zmtsq$. The telescoping argument can be applied due to the averaging weights $w^m_t$ in the algorithm. The term $III$ is negative. We keep the tail part of $III$ which cancels the tail part of the term $IV$. For the term $IV$ we use the smoothness property of the problem and show that it can be bounded by $\sum_{m,t} \etamtsq \Zmtsq$. Finally, two sums of the form $\sum_{m,t}\etamt \Zmtsq$ and $\sum_{m,t} \etamtsq \Zmtsq$ remain to be handled. For this we use the well-known basic inequality $\sum_{i=1}^{n} {a_{i}}/({a_{0}+\sum_{j=1}^{i-1} a_{j}}) = O(\log(1+\sum_i a_i))$ and $\sum_{i=1}^{n} {a_{i}}/{\sqrt{a_{0}+\sum_{j=1}^{i-1} a_{j}}} = \Theta (\sqrt{\sum_i a_i})$ for positive numbers $a_i$'s.

Nonadaptive local algorithms rely on choosing a vanishing stepsize that is usually inversely proportional to a prespecified number of total iterations $T$. The freedom to choose the stepsize based on a prespecified $T$ is crucial in the proofs of these algorithms and allows canceling of the asynchronicity of updates caused by local updates and the bias in those updates caused by data heterogeneity. This is the case for both convex optimization and convex-concave optimization. However, in the adaptive algorithm regimes, such a proof technique is clearly not viable.

Our algorithm requires a carefully designed iterates averaging scheme, with weight inversely proportional to stepsize. Such averaging-scheme is designed to account for the asynchronicity of local iterates and is automatically determined by the optimization process. This is what enables the extension of an Adam-type stepsize to parallel settings, which is highly nontrivial.

\section{Experiments}\label{sec:experiments}

We apply $\ALGO$ to the stochastic bilinear minimax problem introduced in \cite{gidel2018a,beznosikov2021distributed} and train the Wasserstein generative adversarial neural network (Wasserstein GAN) \cite{arjovsky2017wasserstein}. For the homogeneous setting, to demonstrate the efficiency of our proposed algorithm, we compare $\ALGO$ with minibatch stochastic extragradient gradient descent (MB-SEGDA) \cite{nemirovski2004prox}, minibatch universal mirror-prox (MB-UMP) \cite{bach2019universal},
minibatch adaptive single-gradient mirror-Prox (MB-ASMP) \cite{ene2020adaptive},
extra step local SGD (LocalSEGDA) \cite{beznosikov2021distributed}, and 
local stochastic gradient descent ascent (LocalSGDA) \cite{deng2020local}.
We further extend the proposed $\ALGO$ algorithm to solve federated WGANs with a heterogeneous dataset to verify its efficiency. 
\begin{newstuff}
To validate the practicality of $\ALGO$, we also train the BigGAN \cite{brock2018large} over CIFAR10 dataset under the heterogeneous setting. 
\end{newstuff}
In this setting, we also compare $\ALGO$ with Local Adam \cite{beznosikov2021distributed}. We emphasize here that whether Local Adam converges is still an open question, even for the stochastic convex-concave setting. 

\begin{figure*}
	\centering
	\subfigure{
		\includegraphics[width = 0.45\textwidth]
		{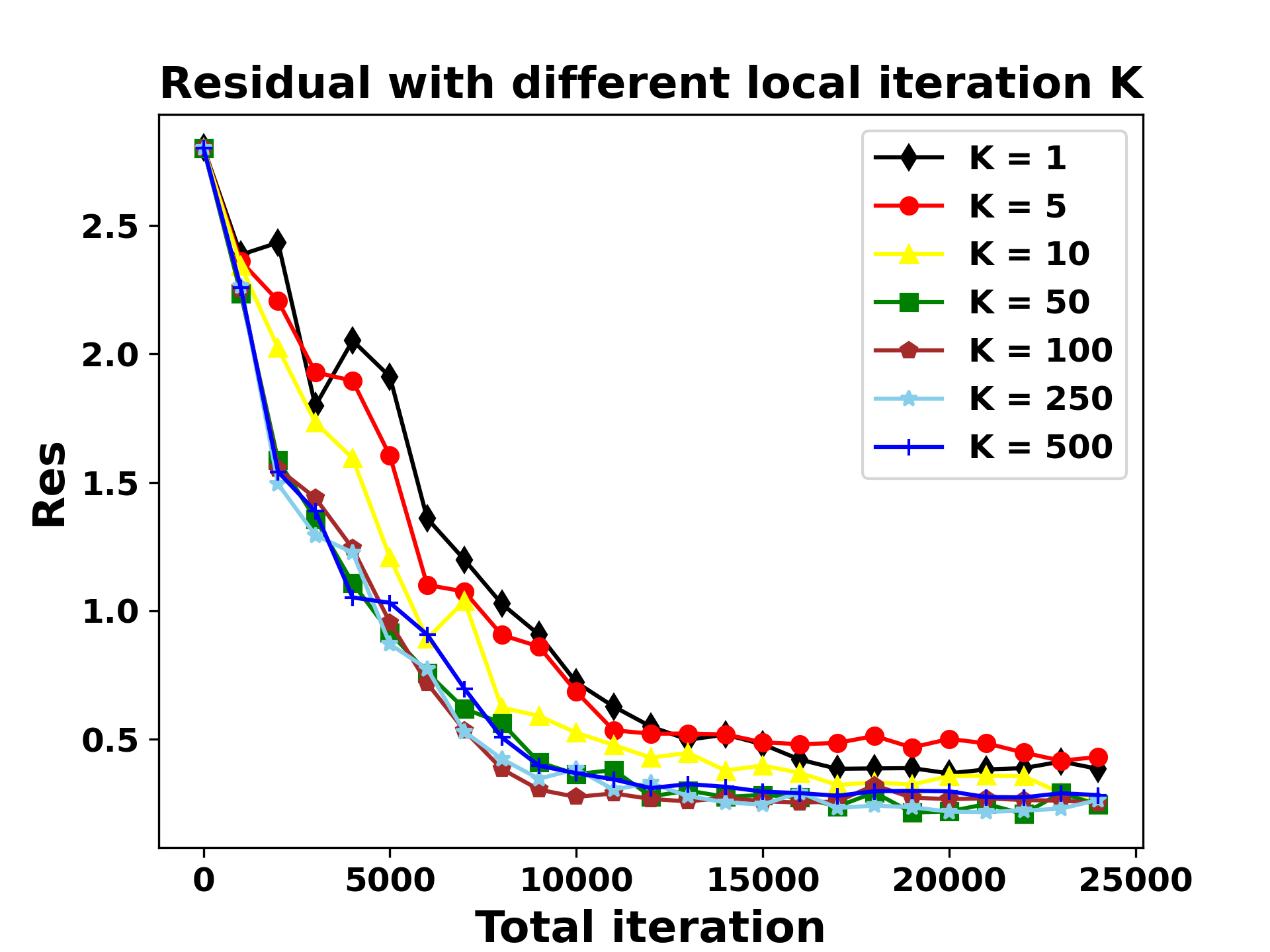}}
	\subfigure[]{
		\includegraphics[width = 0.45\textwidth]
		{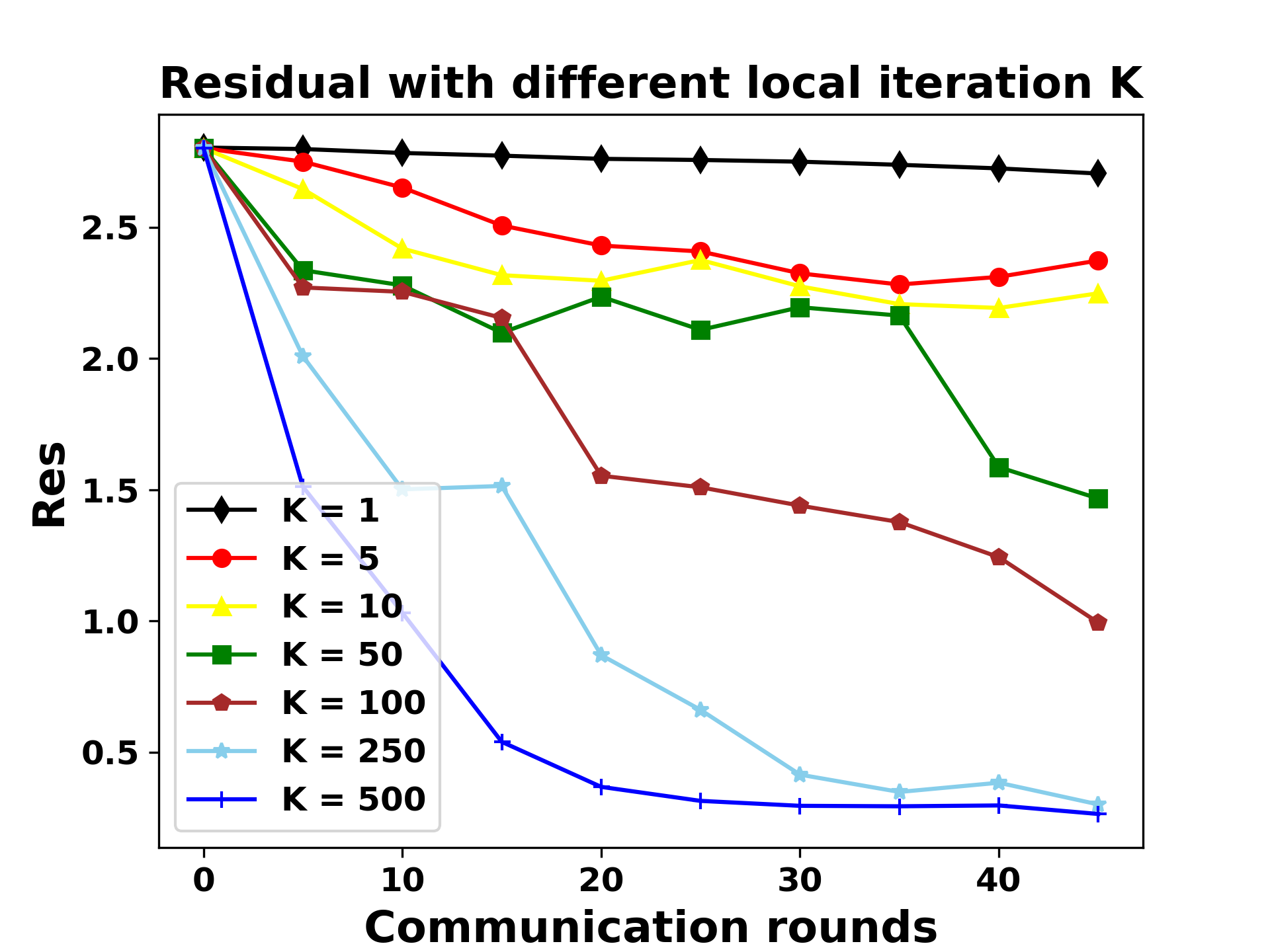}}
	\subfigure[]{
		\includegraphics[width = 0.45\textwidth]
		{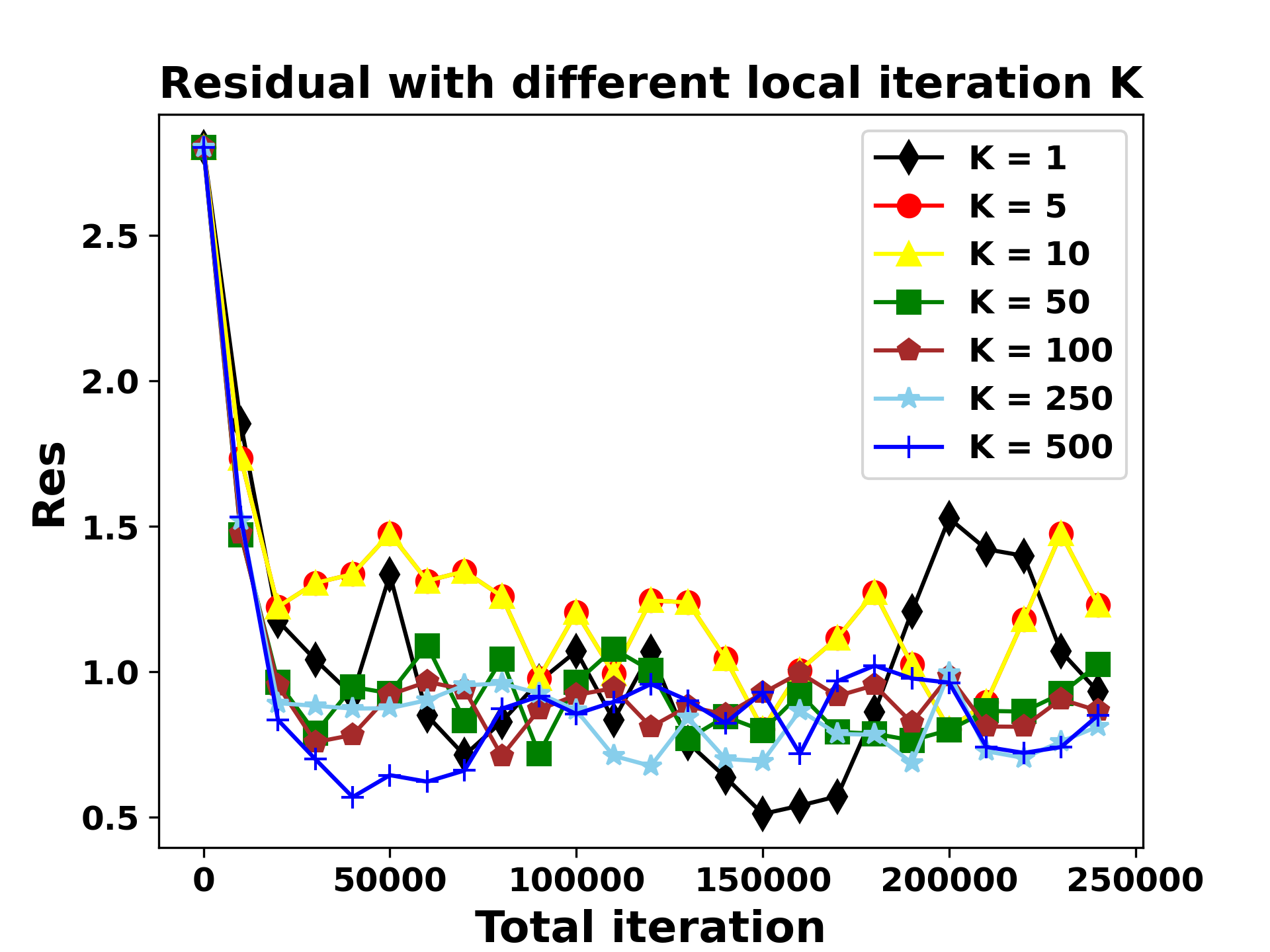}}
	\subfigure[]{
		\includegraphics[width = 0.45\textwidth]
		{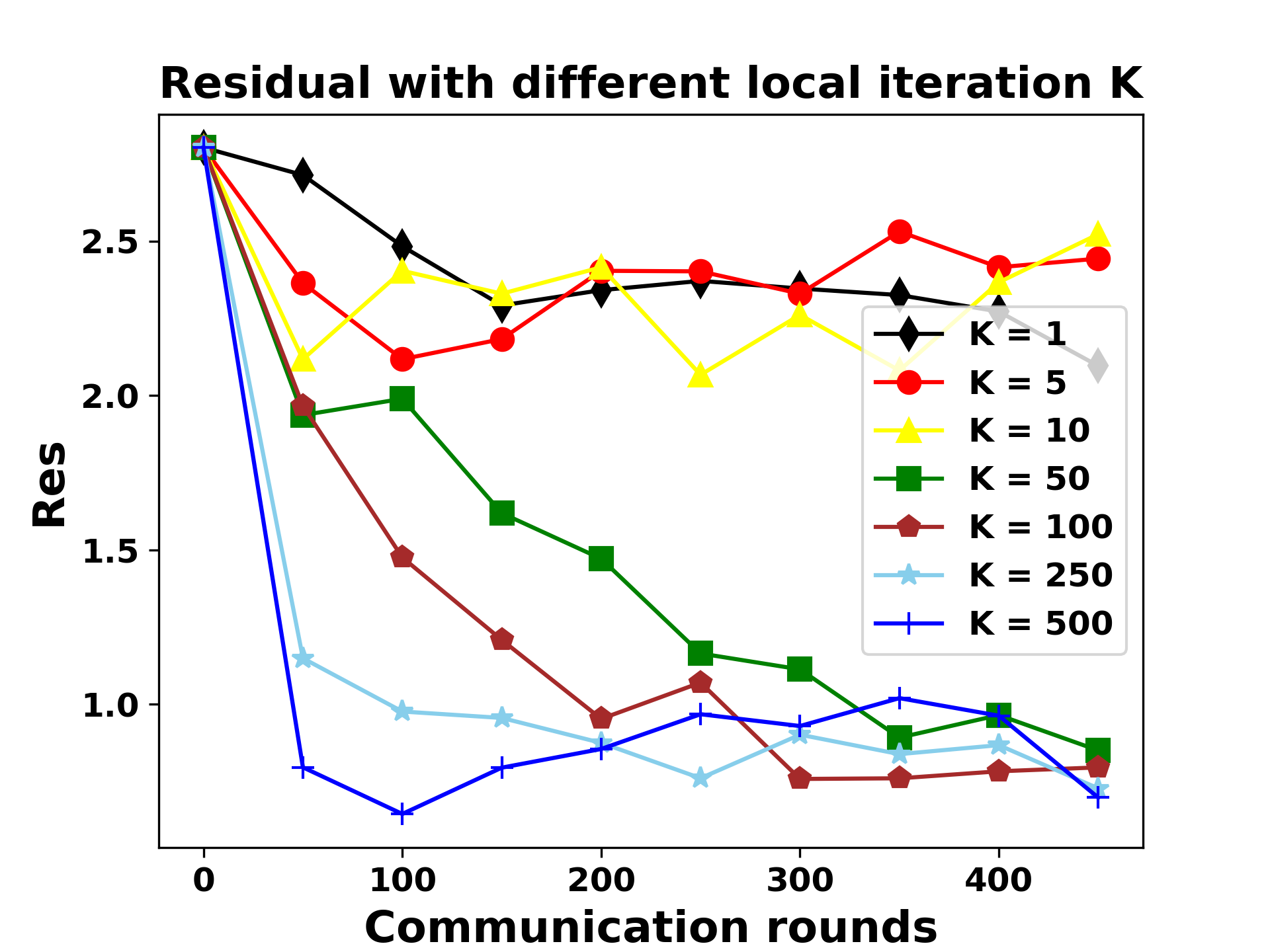}}
		%
	
	\caption{\small Subfigures (a)-(b) and (c)-(d) plot the residual of $\ALGO$ 
	against the total number of iterations $T$ and communications $R$, 
	with varying numbers of local iterations $K$.
	We also investigate the effect of noise level ($\sigma = 0.1$ in (a)(b) and $\sigma = 0.5$ in (c)(d)). }
	\label{fig:diff_local_iter}
\end{figure*}

\begin{figure*}
	\centering
	\subfigure[]{
    	\includegraphics[width = 0.45\textwidth]
    	{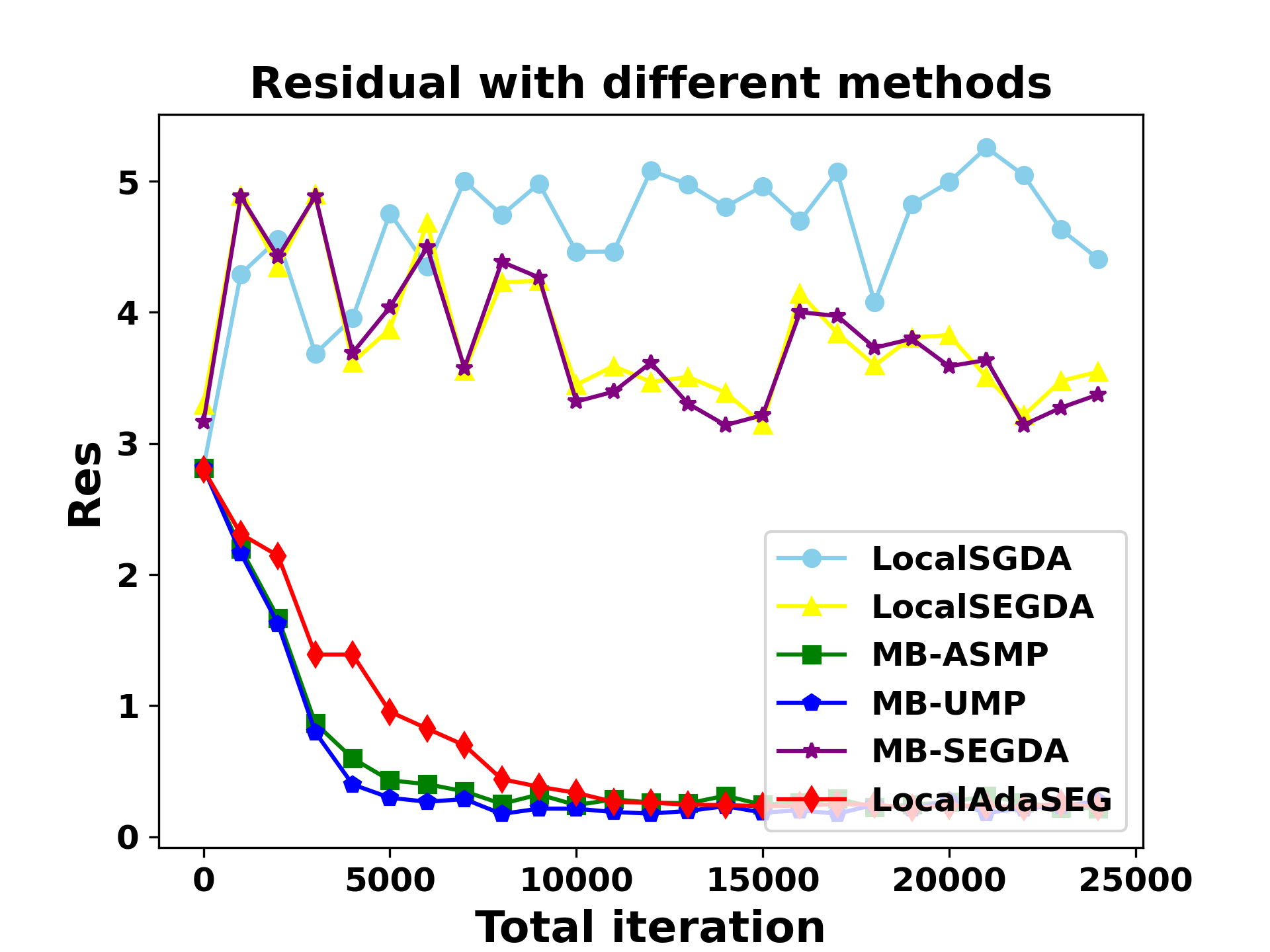}}
	\subfigure[]{
		\includegraphics[width = 0.45\textwidth]
		{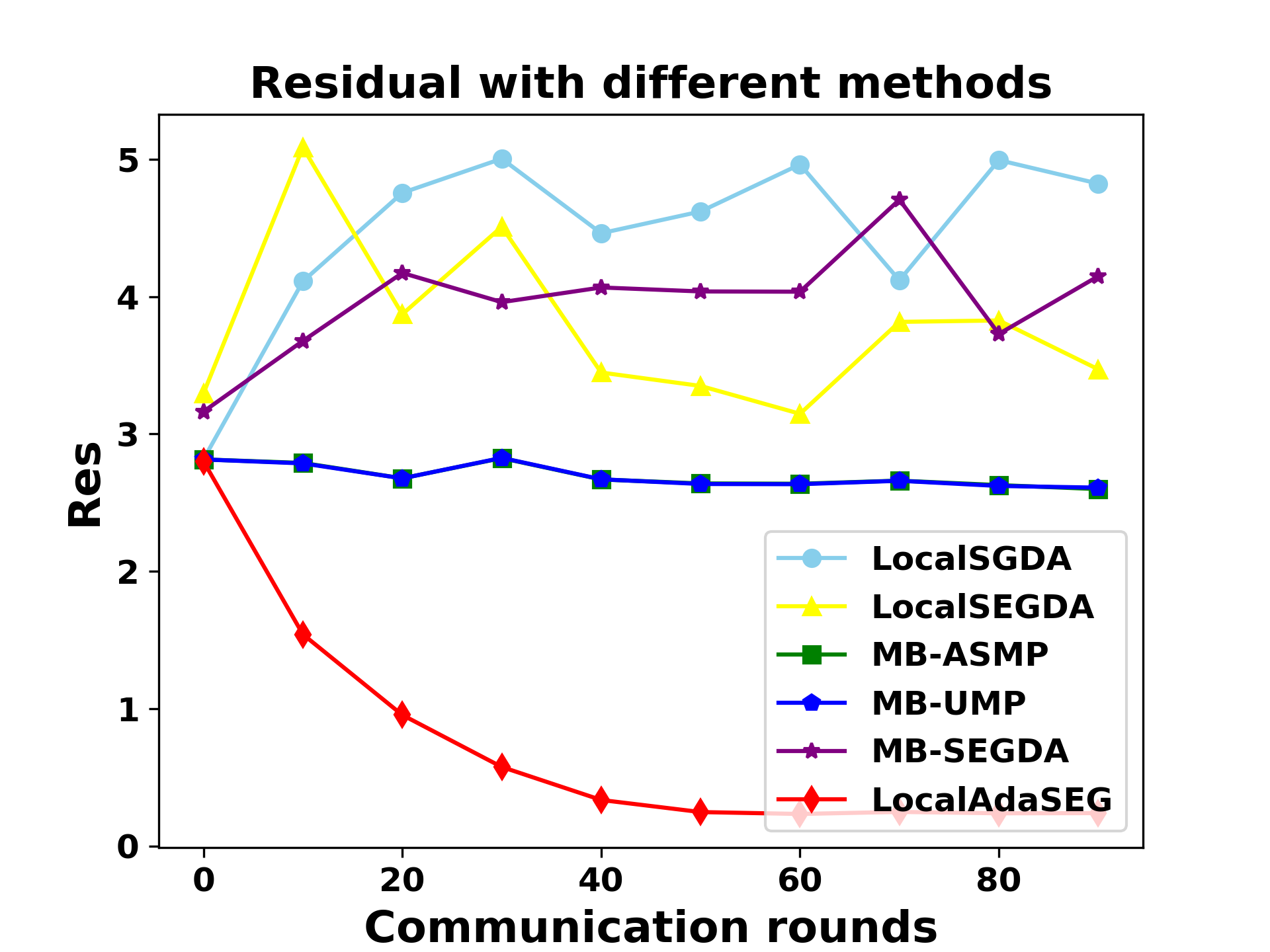}}

	\subfigure[]{
		\includegraphics[width = 0.45\textwidth]
		{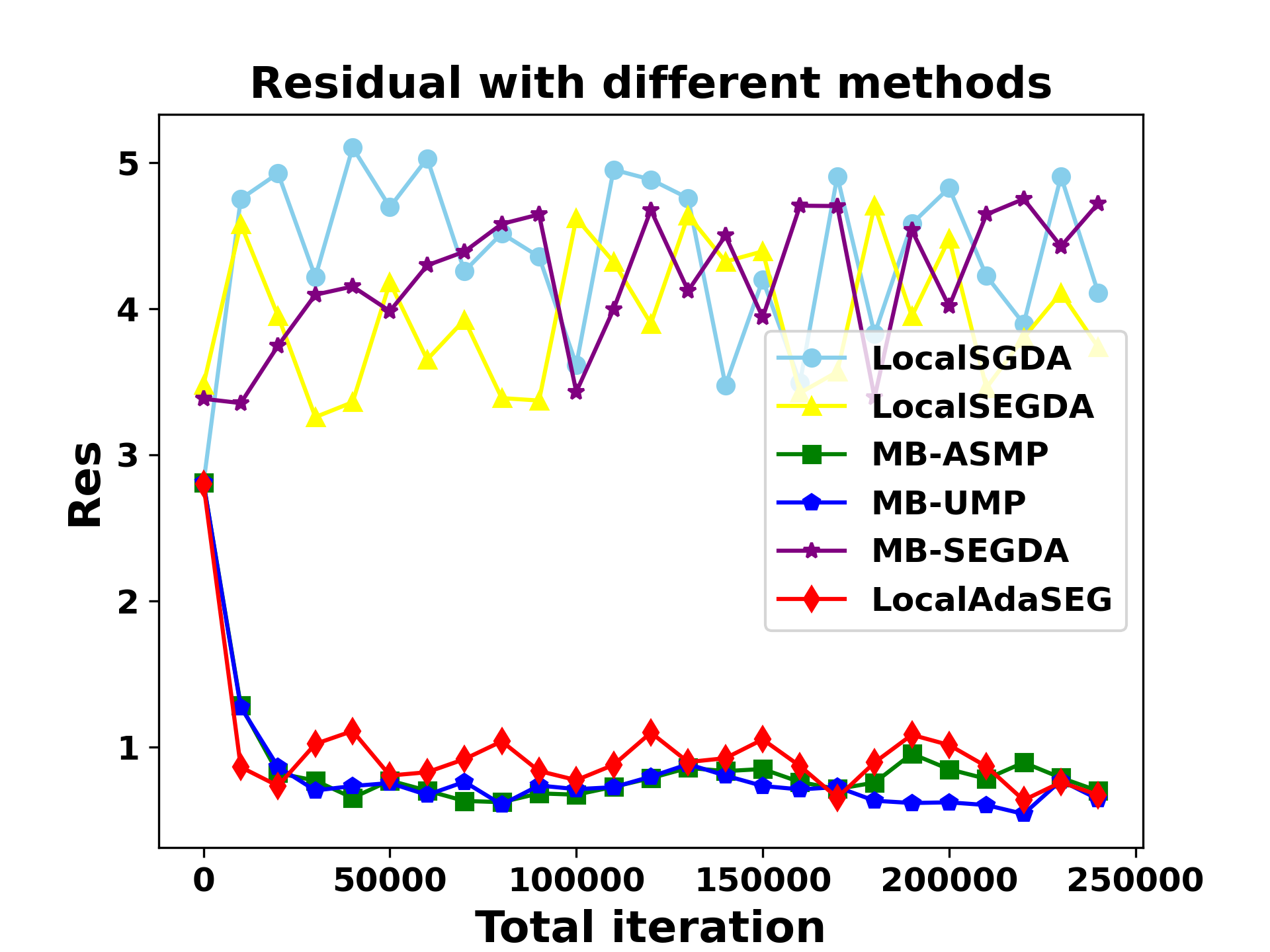}}
	\subfigure[]{
		\includegraphics[width = 0.45\textwidth]
		{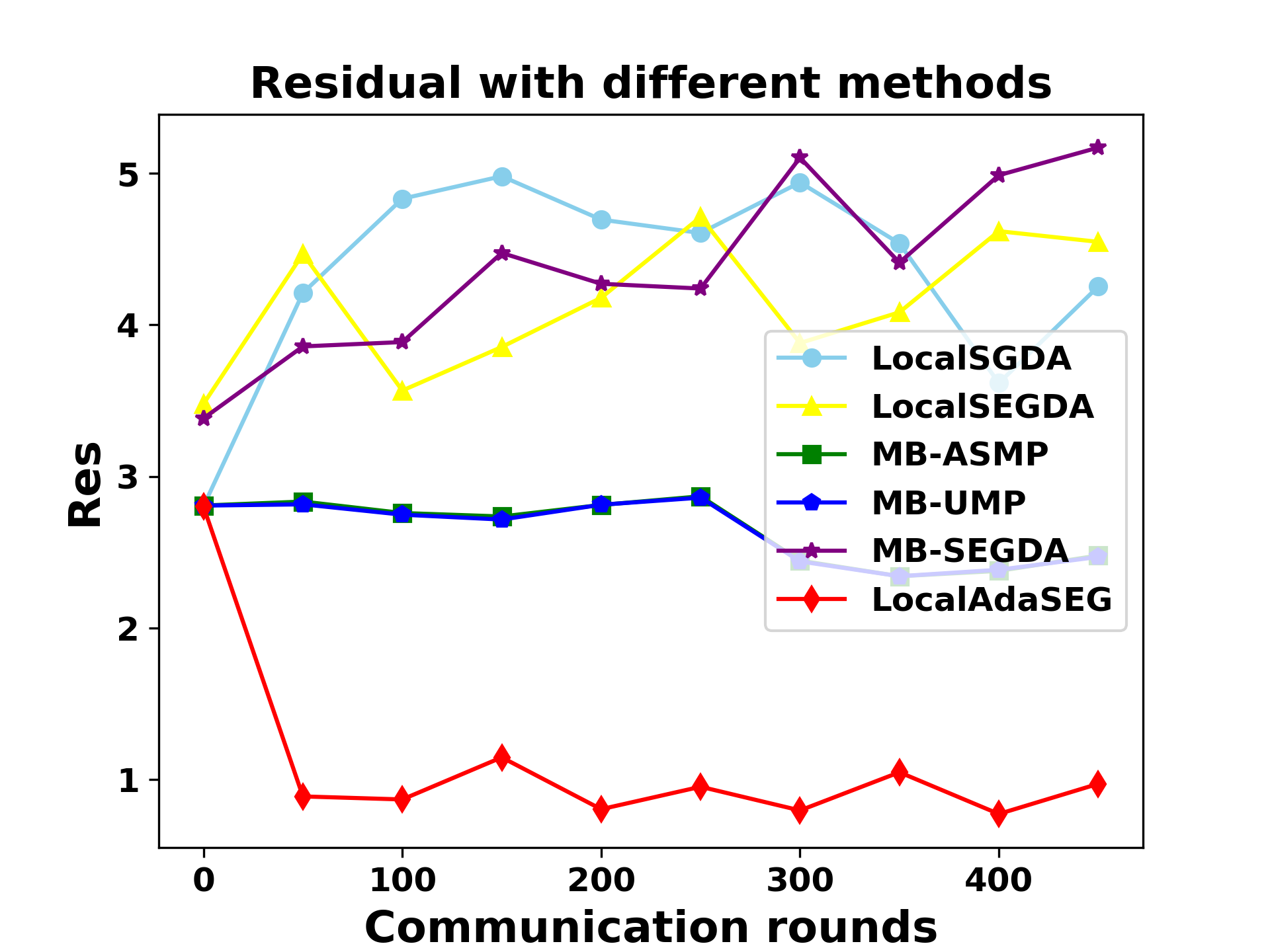}}

	\caption{\small Subfigures (a)-(b) and (c)-(d) compare $\ALGO$ with existing optimizers.
	We plot the residuals against the total number of iterations $T$ and communications $R$
	with different noise levels ($\sigma = 0.1$ in (a)(b) and  $\sigma = 0.5$ in (c)(d)).}
	\label{fig:diff_local_iter_agaisnt_SOTA}
	\vspace{0cm}
\end{figure*}

\subsection{Stochastic bilinear minimax problem}
We consider the stochastic bilinear minimax problem with box constraints 
\#
\label{spp-bilinear}
\min_{x\in C^{n}}\max_{y\in C^n} F(x,y) \#
where
\$
   F(x, y) \defeq  
    \E_{\xi\sim P}\big[x^{\top}Ay + (b+\xi)^{\top}x +(c+\xi)^{\top}y \big],
\$
Here $C^n = [-1,1]^n$ is a box in $\R^n$, the tuple $(A,b,c)$ is deterministic, and the perturbation variable $\xi$ follows the normal distribution with variance $\sigma$. We define the KKT residual ${\rm Res}(x,y)$ as: 
\$ 
{\rm Res}(x,y)^2   \defeq &  \|x- \Pi_{C^n}\big(x-(Ay+b)\big)\|^2 
 \\
 & + \|y- \Pi_{C^n}\big(y+(Ax+c)\big)\|^2.
\$
It is not hard to verify that given $(x^*,y^*) \in \mathbb{R}^{n}\times \mathbb{R}^{n}$, 
${\rm Res}(x^*,y^*) = 0$ if and only if $(x^*,y^*)$ belongs to the set of saddle-points of the bilinear minimax problem \eqref{spp-bilinear}. During experiments, we use ${\rm Res}(x,y)$ to measure the quality of the approximate solution obtained by different optimizers.

{\bf Dataset Generation.}\ 
We uniformly generate $b$ and $c$ in $[-1, 1]^n$ with $n = 10$. The symmetric matrix $A$ is constructed as
$A = {\bar{A}}/{\max\big(|b|_{\max}, |c|_{\max}\big)}$, where $\bar{A}\in [-1, 1]^{n \times n}$ is a random symmetric matrix. 
We emphasize that $A$ is merely symmetric, but not semi-definite.  To simulate the distributed environment, we distribute $(A, b, c)$ to $M$ workers, where $M = 4$. Each worker solves the above bilinear problem locally with an optimization algorithm. We instantiate $\ALGO$ with different numbers of local iterations $K \in \{1, 5, 10, 50, 100, 250, 500\}$, and different noise levels $\sigma \in \{0.1, 0.5\}$, shown in \cref{fig:diff_local_iter}. A larger $\sigma$ indicates more noise in the stochastic gradients, making problem~\eqref{spp-bilinear} harder. Furthermore, we compare $\ALGO$ by setting the local iteration $K = 50$ against several existing optimizers, illustrated in \cref{fig:diff_local_iter_agaisnt_SOTA}.

{\bf Experimental Results.}\ 
In \cref{fig:diff_local_iter}, $\ALGO$ provides stable convergence results under different configurations of local iterations $K$ and noise levels $\sigma$. Figure (b)(d) illustrates that a suitably large $K$ could accelerate the convergence speed of $\ALGO$.
Figure (a)(c) illustrates that a large variance would result in unstable optimization trajectories. The findings of the experiment agree with our theoretical predictions: (i) a larger $T=KR$ improves convergence; (ii) the variance term dominates the convergence rate of $\ALGO$; a large variance term will slow down $\ALGO$.  In \cref{fig:diff_local_iter_agaisnt_SOTA}, (a)(c) illustrate that adaptive variants of stochastic minimax optimizers, i.e., $\ALGO$, MB-UMP, and MB-ASMP, achieve better performance compared to standard ones such as LocalSGDA, LocalSEGDA, and MB-SEGDA, whose learning rates are hard to tune for minimax problems. Furthermore, when compared in terms of communication rounds in (b)(d), $\ALGO$ converges faster than other distributed stochastic minimax optimizers,
demonstrating the superiority of $\ALGO$.  


To validate the performance of our proposed method, we conduct the comparison of the asynchronous case and the synchronous case of $\ALGO$ for the stochastic bilinear minimax problem. We also compare asynchronous and synchronous cases with the single-thread version (SEGDA with MKR iterations) from the aspects of residual and wallclock time. Finally, we evaluate the quantity of $V_t$ with the update $t$. The experimental details are described in Appendix \ref{app:add_exp}. As can be seen in \cref{fig:additional_exp} (in Appendix \ref{app:add_exp}), compared with synchronous cases, asynchronicity only affects the convergence rate that is slower than the synchronous version with respect to the communication rounds. Compared to SEGDA of MKR iterations, our proposed $\ALGO$ can achieve more stable and better performance. Regarding the quantity of $Vt$, it is really much smaller than the dominant variance term.

\subsection{Wasserstein GAN}

We train Wasserstein GAN (WGAN) to validate the efficiency of $\ALGO$ on a real-world application task. This is a challenging minimax problem as the objectives of both generator and discriminator are non-convex and non-concave. The description of the problem and implementation details are placed in \cref{wgan-discription}.


{\bf Experimental results.}
Fig. \ref{fig:wgan_iid_agaisnt_SOTA} and \ref{fig:wgan_noniid_agaisnt_SOTA}  (in \cref{wgan-discription})
compare MB-UMP, MB-ASMP, LocalAdam and $\ALGO$ in a homogeneous and 
heterogeneous setting, respectively.
In \cref{fig:wgan_iid_agaisnt_SOTA}(a) and \cref{fig:wgan_noniid_agaisnt_SOTA}(a), 
MB-UMP, MB-ASMP, LocalAdam and $\ALGO$ quickly converge to a solution with a low FID value. 
However, when compared in terms of communication rounds in
\cref{fig:wgan_iid_agaisnt_SOTA}(b) and \cref{fig:wgan_noniid_agaisnt_SOTA}(b), 
$\ALGO$ and Local Adam converge faster than other optimizers and reach 
a satisfactory solution in
just a few rounds. In \cref{fig:wgan_iid_agaisnt_SOTA}(c)
and \cref{fig:wgan_noniid_agaisnt_SOTA}(c),
all the listed optimizers achieve a high IS. 
In particular, the IS of $\ALGO$ and Local Adam increases much faster
with less communication than MB-UMP, MB-ASMP as shown in 
\cref{fig:wgan_iid_agaisnt_SOTA}(d) and \cref{fig:wgan_noniid_agaisnt_SOTA}(d).  

In \cref{fig:wgan_under_diff_dist} and \cref{fig:compare_wgan_under_diff_dist}, we show and compare the FID and IS of $\ALGO$ with other optimizers under different data distributions. As can be seen from \cref{fig:wgan_under_diff_dist}, $\ALGO$ converges faster when the Dirichlet distribution parameter $\alpha$ decreases. In \cref{fig:compare_wgan_under_diff_dist}, when data distribution changes, our $\ALGO$ can still converge faster than other existing optimizers.

\subsection{BigGAN}
To validate the practicability of our proposed $\ALGO$ method, we apply LocaAdaSEG to train the large-scale BigGAN \cite{brock2018large} model over the CIFAR10 dataset. The description of BigGAN and parameter setup are placed in $\cref{biggan_cifar10}$.

{\bf Experimental results.}
\cref{fig:bigGAN_FID_IS} illustrates the comparison of the FID and IS against communication rounds by using $\ALGO$ and existing optimizers.
As can be seen from  \cref{fig:bigGAN_FID_IS}(a), $\ALGO$ and Local Adam can reach a satisfactory FID value in a few rounds.  Similarly, from \cref{fig:bigGAN_FID_IS}(b), we can see that the IS value of $\ALGO$ and Local Adam is much higher than that of MB-UMP and MB-ASMP. In a word, the FID and IS values of $\ALGO$ and Local Adam converge much faster than that of other optimizers.

\paragraph{Additional Discussions} To end this section, we briefly discuss the limitation of current work.

Theoretical limitations. Our theory is applicable to the homogeneous setting, meaning each worker has access to data from one distribution. However, in practice, data heterogeneity is a main factor practitioners must take into account for distributed learning. 
    We briefly discuss technical challenges here.
    For the heterogeneous case, the theory for \emph{non-adaptive} algorithms relies on choosing a very small stepsize, usually inverse proportional to a prespecified number of total iterations $T$. The freedom to choose the stepsize based on a prespecified $T$ is crucial in those proofs and enables canceling the bias caused by local updates, a.k.a.\ client drifts. The same situation also occurs in the convex optimization case. However, our goal is to have an adaptive algorithm that does not depend on the problem parameters or a prespecified $T$. For this reason, we leave such an important open question for future work. 

Experimental limitations.
    In the scale of the dataset, we experimented with should be increased to showcase the computation benefit of the proposed algorithm. 
    At the current stage we have experimented with MNIST data and further, add CIFAR 10 experiments after reviewers' suggestions.
    Application to other ultra-large datasets such as ImageNet requires significant engineering efforts and will be left for future investigation. We should emphasize that our paper mainly contributes to the theoretical understanding of adaptive algorithms in distributed settings.

\section{Conclusion}\label{sec:conclusion}
We proposed an adaptive communication-efficient distributed stochastic extragradient algorithm in the Parameter-Server model for stochastic convex-concave minimax problem, $\ALGO$. We theoretically showed $\ALGO$ that achieves the optimal convergence rate with a linear speed-up property for both nonsmooth and smooth objectives. Experiments verify our theoretical results and demonstrate the efficiency of $\ALGO$. 

For future work, since that the current analysis merely holds for the homogeneous setting, a promising direction is to extend the theoretical result of $\ALGO$ to the heterogeneous setting that better models various real-world applications, such as federated GANs \cite{beznosikov2021distributed} and robust federated learning \cite{NEURIPS2020_ac450d10}. In addition, extending theoretical results from the stochastic convex-concave setting to the stochastic nonconvex-(non)concave setting is an interesting and challenging research direction.

\section*{Declarations}

\begin{itemize}
\item Funding
    (This work is supported by the Major Science and Technology Innovation 2030 “Brain Science and Brain-like Research” key project (No. 2021ZD0201405).)
\item Conflict of interest/Competing interests (The authors declare that they have no conflict of interest.)
\item Ethics approval 
    (Not Applicable.)
\item Consent to participate (Not Applicable.)
\item Consent for publication (Not Applicable.)
\item Availability of data and materials (The data used in this work is all public.)
\item Code availability (The codes of the proposed method will be released after
publishing.)
\item Authors' contributions (All authors contributed to the study conception
and design. The first draft of the manuscript was written by Luofeng Liao, and all authors commented on previous versions of the manuscript. All authors read and approved the final manuscript.)
\end{itemize}

\noindent
If any of the sections are not relevant to your manuscript, please include the heading and write `Not applicable' for that section. 

\bigskip
\begin{flushleft}%
Editorial Policies for:

\bigskip\noindent
Springer journals and proceedings: \url{https://www.springer.com/gp/editorial-policies}

\bigskip\noindent
Nature Portfolio journals: \url{https://www.nature.com/nature-research/editorial-policies}

\bigskip\noindent
\textit{Scientific Reports}: \url{https://www.nature.com/srep/journal-policies/editorial-policies}

\bigskip\noindent
BMC journals: \url{https://www.biomedcentral.com/getpublished/editorial-policies}
\end{flushleft}

\bibliography{citation.bib}
\bibliographystyle{abbrv}

\clearpage
\begin{appendices}

\section{Related Works} \label{sec:related_work}

\textbf{Stochastic minimax algorithms.}\ 
Stochastic convex-concave minimax problems~\eqref{spp} have been extensively studied in the optimization literature and are usually solved via variants of PDHG or extragradient methods, for example, \cite{Chambolle2010,zhao2021accelerated, nemirovski2004prox, nemirovski2009robust, Juditsky2011solving,judisky2011firstorder, Chen2014,beznosikov2021distributed}. 
\cite{chen2017accelerated} and \cite{Juditsky2011solving} adopted mirror-prox-type methods to tackle the stochastic convex-concave minimax problem with ${O}({1}/{\sqrt{T}})$ convergence rate. \cite{zhao2021accelerated} proposed an accelerated stochastic PDHG-type algorithm with Bergman divergence to solve the stochastic convex-concave minimax problem with a similar ${O}({1}/{\sqrt{T}})$ convergence rate dominated by the stochastic variance term. However, while all these algorithms \cite{chen2017accelerated,Juditsky2011solving,zhao2021accelerated} have achieved the optimal rate according to the low and upper bound for the stochastic convex-concave minimax problem \cite{beznosikov2021distributed}, their performance is highly influenced by the choice of the learning rate, which is either using sufficiently small constants or diminishing learning rates. \\
 
\textbf{Adaptive minimax algorithms.}\ 
Adaptive learning rate in stochastic optimization is first developed for minimization problems \cite{duchi2011adaptive}. Its variants \cite{kingma2017adam,Reddi2018on, zou2019sufficient,chen2021towards,chen2021quantized} are widely used to train deep learning models. The key feature of the adaptive learning rate is that it can automatically adjust the learning rate during the training process and achieve faster convergence. Recently, the adaptive learning rate has also been developed for minimax algorithms to accelerate the training process, since the learning rate in stochastic minimax algorithm is hard to tune based on the minimax loss, as compared to minimization problems. Several recent papers have tried to analyze the convergence rate of adaptive extragradient in the convex-concave minimax settings. The universal mirror-prox method \cite{bach2019universal} proposed a new adaptive learning rate technique that adapts to problem parameters, such as the unknown Lipschitz parameter, and achieves optimal convergence rates in stochastic setting. \cite{babanezhad2020geometry} extended the universal mirror-prox \cite{bach2019universal} by replacing the norm dependence in the learning rate with a general Bregman divergence dependence. \cite{ene2020adaptive} proposed an adaptive stochastic single-call extragradient algorithm for variational inequality problems. \cite{antonakopoulos2021adaptive} proposed a similar adaptive mirror-prox algorithm, but their method handles an unbounded domain by introducing the notion of local norms in the deterministic setting. In addition to the adaptive extragradient methods mentioned above for the general stochastic minimax problem, \cite{yan2020adaptive} proposed an adaptive primal-dual method for expectation-constrained convex stochastic programs, which can be formulated as a minimax optimization with the coupled term being a linear function with dual variable. Training of a GAN model \cite{goodfellow2014generative} corresponds to solving a specific non-convex non-concave minimax problem. Several works have heuristically adopted a stochastic adaptive extragradient for training GANs \cite{gidel2018a,mertikopoulos2018optimistic,beznosikov2021distributed}. Recently, \cite{liu2020towards} studied the convergence behavior of an adaptive optimistic stochastic gradient algorithm for a class of non-convex non-concave minimax problems under the MVI condition to train GANs.  \\
  
\textbf{Distributed minimax algorithms.}\ 
As datasets and deep learning architectures become larger and larger distributed minimax algorithms are needed for GANs and adversarial training. \cite{beznosikov2021distributed} established upper and lower bounds for iteration complexity for  strongly-convex-strongly-concave and convex-concave minimax problems in both a centralized and decentralized setting. However, the convergence rate for their Extra Step Local SGD is established only in a strongly-convex-strongly-concave setting with a linear speed-up property with respect to the number of works; while for their proposed local Adam no convergence results are provided.  \cite{deng2020local} provided convergence guarantees for a primal-dual local stochastic gradient algorithm in the strongly-convex-strongly-concave-setting and several non-convex settings with PL-inequality-type conditions.  \cite{chen2020distributed} and \cite{mingruiliu2020decentralized} studied the convergence of a distributed optimistic stochastic gradient algorithm for non-convex non-concave minimax problems under the pseudomonotonicity condition and MVI condition, respectively. However, their convergence rates hold only for a sufficiently large minibatch size or a sufficiently large number of workers. In addition, there also exist several decentralized or federated algorithms for stochastic strongly-convex-strongly-concave minimax problems \cite{hou2021efficient, rogozin2021decentralized}. In this work, we mainly focus on the centralized setting for the stochastic convex-concave minimax problems.

\section{Appendix to Main Text}\label{appen:bouded-variance}

\subsection{Extension to Unbounded Stochastic Gradient Oracle}
\label{app:unbdsg}
Let $\{Z_i\}_{i=1}^n$ be a sequence of i.i.d.~standard normals. We have the following well-known results (see Appendix A of \cite{Chatterjee2014}):
\begin{align*}
& \P\big(\max_i Z_i > \E[\max_i Z_i]  + t \big) \leq \exp(-t\sq / 2) \text{ for all } t > 0,
\\
& \E[\max_i\vert Z_i\vert] \leq \sqrt{2\log(2n)}.
\end{align*}
With this, we have $\P(\max_i \vert Z_i\vert \geq \sqrt{2\log(2n)}+t)\leq 2\exp(-t\sq/2)$. 
We apply this result to the sequence $\big\{\|G(\zmt) - \tilde{G}(\zmt) \|_*/\|G\|_\infty, \|G(\tzmtmst) - \tilde{G}(\tzmtmst) \|_*/\|G\|_\infty \big\}_{m,t}$, which is a sequence of $2MT$ i.i.d.~standard normals by the homogeneity of the oracle.

\section{Proof of Theorems}\label{appen:proof}

  \begin{lemma} \label{lm:bdimprovement}
      For all $m\in[M]$, consider the sequence $\{\etamt, \tzmtmst, \zmt,\tzmt\}_{t=1}^T$ defined in \cref{algo}. It holds
      \$
          \|\tzmtmst - \zmt \| / \etamt \leq G,\quad
          \|\tzmt - \zmt \| / \etamt \leq G.
      \$
  \end{lemma}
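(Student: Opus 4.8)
The plan is to use only two facts: all of $\tzmtmst$, $\zmt$, $\tzmt$ lie in the compact convex set $\cZ$, and Euclidean projection onto $\cZ$ satisfies $\|\Pi_{\cZ}(u)-v\|\le\|u-v\|$ for all $u$ and all $v\in\cZ$ (a consequence of the obtuse-angle inequality $\langle u-\Pi_{\cZ}(u),\,v-\Pi_{\cZ}(u)\rangle\le 0$). The first fact is immediate: $\tzmt$ and $\zmt$ are projections onto $\cZ$ by construction (or equal the initial point $\tilde{z}_0=0\in\cZ$), hence lie in $\cZ$; and $\tzmtmst$ equals either the previous iterate $\tzmtm\in\cZ$ or, on a communication round, the convex combination $\sum_{m}w^m_t\tzmtm$ of points of $\cZ$ (the weights $w^m_t$ are nonnegative and sum to one), so $\tzmtmst\in\cZ$ as well.

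For the first inequality I would apply the projection property with $u=\tzmtmst-\etamt\Mmt$ and $v=\tzmtmst\in\cZ$:
\[
\|\zmt-\tzmtmst\|=\big\|\Pi_{\cZ}[\tzmtmst-\etamt\Mmt]-\tzmtmst\big\|\le\big\|{-\etamt\Mmt}\big\|=\etamt\|\Mmt\|.
\]
Since $\Mmt=\tilde{G}(\tzmtmst)$ with $\tzmtmst\in\cZ$, Assumption~\ref{as:bdsg} gives $\|\Mmt\|_*\le G$, and in the Euclidean setting $\|\cdot\|=\|\cdot\|_*$; dividing by $\etamt$ yields $\|\tzmtmst-\zmt\|/\etamt\le G$. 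The identical argument with $u=\tzmtmst-\etamt\gmt$ gives $\|\tzmt-\tzmtmst\|\le\etamt\|\gmt\|$, and since $\gmt=\tilde{G}(\zmt)$ with $\zmt\in\cZ$ we again have $\|\gmt\|_*\le G$.

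For the second inequality I would combine the two previous bounds: by the triangle inequality $\|\tzmt-\zmt\|\le\|\tzmt-\tzmtmst\|+\|\tzmtmst-\zmt\|\le\etamt\|\gmt\|+\etamt\|\Mmt\|\le 2\etamt G$, which controls $\|\tzmt-\zmt\|/\etamt$ by a constant multiple of $G$. (A marginally sharper route adds the first-order optimality conditions of the two projection steps, $\langle\tzmt-\tzmtmst+\etamt\gmt,\,\zmt-\tzmt\rangle\ge 0$ and $\langle\zmt-\tzmtmst+\etamt\Mmt,\,\tzmt-\zmt\rangle\ge 0$, which collapse to $\|\tzmt-\zmt\|^2\le\etamt\langle\gmt-\Mmt,\,\zmt-\tzmt\rangle\le\etamt\|\gmt-\Mmt\|_*\,\|\zmt-\tzmt\|$, i.e.\ $\|\tzmt-\zmt\|\le\etamt\|\gmt-\Mmt\|_*\le 2\etamt G$.) There is no genuine obstacle in this short argument; the only thing to watch is the constant in the second inequality, where the natural bound is $2\etamt G$ rather than $\etamt G$ — and this factor is exactly what the constant $5$ in the definition of $\etamt$ is sized to absorb, since it makes $(Z^m_t)^2=\big(\|\zmt-\tzmtmst\|^2+\|\zmt-\tzmt\|^2\big)/\big(5(\etamt)^2\big)\le (G^2+4G^2)/5=G^2$.
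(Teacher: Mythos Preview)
Your argument is correct and follows the same route as the paper: both use the first-order optimality condition of the Euclidean projection (equivalently, nonexpansiveness toward points of $\cZ$) together with Assumption~\ref{as:bdsg} to bound $\|\zmt-\tzmtmst\|\le\etamt\|\Mmt\|_*\le\etamt G$, and analogously $\|\tzmt-\tzmtmst\|\le\etamt\|\gmt\|_*\le\etamt G$.

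Your care with the second inequality is warranted. The paper's ``similar reasoning'' literally yields $\|\tzmt-\tzmtmst\|\le\etamt G$, not $\|\tzmt-\zmt\|\le\etamt G$; in fact, when the lemma is invoked later, the paper itself writes
\[
\|\zmt-\tzmtmst\|^2+\|\zmt-\tzmt\|^2\le 3\|\zmt-\tzmtmst\|^2+2\|\tzmtmst-\tzmt\|^2\le 5G^2(\etamt)^2,
\]
which is exactly your observation that the triangle-inequality bound $\|\tzmt-\zmt\|\le 2\etamt G$ is what holds, and that the constant $5$ in the definition of $(Z^m_t)^2$ is sized to make $(Z^m_t)^2\le G^2$. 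So your reading of the constant is precisely the intended one.
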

  \begin{proof}[Proof of \cref{lm:bdimprovement}]
  Let $I:\cZ \to\cZ^*$ be the identity map which maps an element $z\in\cZ$ to the corresponding element in the dual space $\cZ^*$ (we are considering Euclidean case). The first-order optimality condition of the update rule $\zmt = \Pi_\cZ[\tzmtmst - \etamt \Mmt]$ is 
  \$
  \la \etamt \Mmt + I(\zmt - \tzmtmst), z - \zmt \ra \geq 0, \forall z\in \cZ.
  \$
  Set $z = \tzmtmst $, apply the Cauchy-Schwartz inequality and we obtain 
  \$
  \etamt \|\Mmt\|_* \cdot \| \tzmtmst - \zmt\| & \geq \la \etamt \Mmt, \tzmtmst - \zmt \ra 
  \\
  &\geq \la I(\tzmtmst - \zmt), \tzmtmst - \zmt \ra= \| \tzmtmst - \zmt \|\sq.
  \$
      The second inequality holds due to similar reasoning. We conclude the proof of Lemma~\ref{lm:bdimprovement}.
  \end{proof}
  
  \begin{lemma}[One-step analysis] \label{lm:onestep}
      For all $m\in[M]$, consider the sequence $\{\etamt, \tzmtmst,\Mmt =\tilde{G}(\tzmtmst), \zmt,\gmt = \tilde{G}(\tzmt),\tzmt\}_{t=1}^T$ defined in Algorithm~\ref{algo}. It holds for all $z\in \cZ$, 
      \$
      \langle \zmt - z, \gmt \rangle  \leq \frac{1}{\etamt} \Big(\hnormsq{z-\tzmtmst} - \hnormsq{z-\tzmt}\Big) 
      - \frac{1}{\etamt}\Big(\hnormsq{\zmt-\tzmtmst} + \hnormsq{\zmt-\tzmt}\Big)
      \\ + \|\gmt-\Mmt \|_* \cdot \|\zmt -\tzmt\|.
      \$    
  \end{lemma}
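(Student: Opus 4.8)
The plan is to peel the claimed inequality off from the variational-inequality (first-order optimality) characterizations of the two projection steps in Line~\ref{line:extragradient}, together with the elementary three-point identity $\la a-b,\, c-a\ra = \hnormsq{b-c} - \hnormsq{a-b} - \hnormsq{a-c}$, valid for any $a,b,c$ in the Euclidean setting.

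First I would record the optimality condition for the extrapolated point $\tzmt = \Pi_\cZ[\tzmtmst - \etamt\gmt]$: for every $z\in\cZ$ one has $\la \etamt\gmt + (\tzmt - \tzmtmst),\, z - \tzmt\ra \geq 0$, that is, $\etamt\la \gmt,\, \tzmt - z\ra \leq \la \tzmt - \tzmtmst,\, z - \tzmt\ra$; the three-point identity with $(a,b,c)=(\tzmt,\tzmtmst,z)$ rewrites the right-hand side as $\hnormsq{z - \tzmtmst} - \hnormsq{\tzmt - \tzmtmst} - \hnormsq{\tzmt - z}$. Likewise, the optimality condition for $\zmt = \Pi_\cZ[\tzmtmst - \etamt\Mmt]$, tested at the feasible point $\tzmt$, gives $\etamt\la \Mmt,\, \zmt - \tzmt\ra \leq \la \zmt - \tzmtmst,\, \tzmt - \zmt\ra$, and the same identity with $(a,b,c)=(\zmt,\tzmtmst,\tzmt)$ turns this right-hand side into $\hnormsq{\tzmtmst - \tzmt} - \hnormsq{\zmt - \tzmtmst} - \hnormsq{\zmt - \tzmt}$.

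Next I would split the target quantity as
\[
\la \zmt - z,\, \gmt\ra = \la \gmt,\, \tzmt - z\ra + \la \Mmt,\, \zmt - \tzmt\ra + \la \gmt - \Mmt,\, \zmt - \tzmt\ra ,
\]
bound the last term by $\|\gmt - \Mmt\|_*\cdot\|\zmt - \tzmt\|$ via the (generalized) Cauchy--Schwarz inequality, divide the two displayed variational inequalities by $\etamt > 0$, and add everything. The one thing worth noticing is that the leftover terms $-\hnormsq{\tzmt - \tzmtmst}$ and $+\hnormsq{\tzmtmst - \tzmt}$ cancel exactly, so what survives is precisely the claimed bound: $\tfrac{1}{\etamt}\big(\hnormsq{z - \tzmtmst} - \hnormsq{z - \tzmt}\big)$ on the positive side, $-\tfrac{1}{\etamt}\big(\hnormsq{\zmt - \tzmtmst} + \hnormsq{\zmt - \tzmt}\big)$, and the error term $\|\gmt - \Mmt\|_*\,\|\zmt - \tzmt\|$.

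There is no genuine obstacle here beyond bookkeeping. The only points requiring care are that both optimality conditions must be tested at $\tzmt$ (not at the generic $z$) in order to produce the matching negative squared-distance terms, and that the orientation of the differences in the three-point identity must be tracked so that the two $\hnormsq{\tzmt - \tzmtmst}$ contributions have opposite signs. This is a standard extragradient one-step estimate; the mild structural feature exploited is that both $\zmt$ and $\tzmt$ are projected from the common anchor $\tzmtmst$, which is exactly what makes those terms drop out.
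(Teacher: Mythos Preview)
Your proposal is correct and follows essentially the same route as the paper: the paper packages the first-order optimality condition together with the three-point identity into a single projection inequality $\langle g, a^* - b\rangle \le \hnormsq{b-c} - \hnormsq{b-a^*} - \hnormsq{a^*-c}$ for $a^* = \Pi_\cZ[c-g]$, applies it once with $(a^*,b,c)=(\zmt,\tzmt,\tzmtmst)$ and once with $(a^*,b,c)=(\tzmt,z,\tzmtmst)$, and then performs the same decomposition and Cauchy--Schwarz step you describe. One small slip in your closing commentary: only the optimality condition for $\zmt$ is tested at $\tzmt$; the one for $\tzmt$ is tested at the generic $z$, exactly as you wrote in the body of the argument.
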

  
  \begin{proof}[Proof of Lemma~\ref{lm:onestep}]
      For any $c,g\in\cZ$, consider the update of the form $a^* = \Pi_{\cZ}[c - g]=\operatorname*{argmin}_{z\in \cZ} \,\la g, z\ra + \hnormsq{z - c}$. It holds for all $b\in\cZ$,
      \$
      \langle g, a^* - b \rangle \leq \hnormsq{b-c} - \hnormsq{b-a^*} - \hnormsq{a^* - c}.
      \$
      By the update rule of $\zmt$ and $\tzmt$, we have (taking $a^* \leftrightarrow \zmt$, $b \leftrightarrow \tzmt$, $g\leftrightarrow \etamt \Mmt$, $c \leftrightarrow \tzmtmst$)
      \#
      \la \etamt \Mmt, \zmt - \tzmt\ra \leq \hnormsq{\tzmt -\tzmtmst} - \hnormsq{\tzmt - \zmt} - \hnormsq{\zmt - \tzmtmst} , 
      \label{eq:eq1}
      \#
      and for all $z\in\cZ$ (taking $a^* \leftrightarrow \tzmt$, $b \leftrightarrow z$, $g\leftrightarrow \etamt\gmt$, $c \leftrightarrow \tzmtmst$)
      \#
          \la \etamt \gmt, \tzmt - z\ra \leq \hnormsq{z-\tzmtmst} - \hnormsq{z- \tzmt} - \hnormsq{\tzmt - \tzmtmst}.
          \label{eq:eq2}
      \#
      Finally we apply the Cauchy-Schwarz inequality and plug in Eqs.~\eqref{eq:eq1} and \eqref{eq:eq2}.
      \$
      \la \gmt, \zmt - z\ra 
      & = \la \gmt , \zmt - \tzmt\ra + \la \gmt, \tzmt - z\ra 
      \\
      & = \la \gmt - \Mmt, \zmt - \tzmt\ra + \la \gmt, \tzmt - z\ra + \la \Mmt, \zmt - \tzmt\ra
      \\
      & \leq  \| \gmt - \Mmt\|_* \cdot \| \zmt - \tzmt\| + \la \gmt, \tzmt - z\ra + \la \Mmt, \zmt - \tzmt\ra
      \\
      & \leq  \| \gmt - \Mmt\|_* \cdot \| \zmt - \tzmt\| \
      \\
      & \quad\quad
          + \ooetamt \Big({\hnormsq{\tzmt -\tzmtmst}} - \hnormsq{\tzmt - \zmt} - \hnormsq{\zmt - \tzmtmst}  \Big )
      \\
      & \quad\quad
      + \ooetamt \Big( \hnormsq{z-\tzmtmst} - \hnormsq{z- \tzmt} - {\hnormsq{\tzmt - \tzmtmst}} \Big ) 
      \\
      & = \frac{1}{\etamt} \Big(\hnormsq{z-\tzmtmst} - \hnormsq{z-\tzmt}\Big) 
      - \frac{1}{\etamt}\Big(\hnormsq{\zmt-\tzmtmst} + \hnormsq{\zmt-\tzmt}\Big)
      \\& \quad\quad + \|\gmt-\Mmt \|_* \cdot \|\zmt -\tzmt\|.
      \$
      This finishes the proof of Lemma~\ref{lm:onestep}.
  \end{proof}
  
  \subsection{Proof of Theorem~\ref{thm:nonsmooth}}
  \label{sec:proof:thmnonsmooth}
  \begin{proof}[Proof of Theorem~\ref{thm:nonsmooth}, Non-smooth Case] 
  The proof strategy follows closely that of Bach and Levy \cite{bach2019universal}.
  \textbf{Step 1.} We apply the Lemma~\ref{lm:onestep} and sum over all $m\in[M]$ and $t\in[T]$. Define 
  \$\ximt \defeq G(\zmt) - \gmt = G(\zmt) - \tilde{G}(\zmt).\$ 
  For all $z\in \cZ$,
  \#
  \sumT\sumM \big\langle \zmt - z, G(\zmt) \big\rangle 
  &= \sumT\sumM \big\langle  \zmt - z, \ximt \rangle+ \sumT\sumM \langle \zmt - z, \gmt \rangle
  \\ 
  &\leq \underbracket{\sumT\sumM \big\langle  \zmt - z, \ximt \big\rangle}_{{I}(z)} 
  \label{eq:defI}
  \\
  &\quad + \underbracket{\sumT\sumM \frac{1}{\etamt}\Big(\hnormsq{z-\tzmtmst} - \hnormsq{z-\tzmt}\Big) }_{{II}(z)}
  \label{eq:defII}
  \\ 
  & \quad \underbracket{ - \sumT\sumM  \frac{1}{\etamt}\Big(\hnormsq{\zmt-\tzmtmst} + \hnormsq{\zmt-\tzmt}\Big)}_{{III}} 
  \label{eq:defIII}
  \\
  & \quad + \underbracket{\sumT\sumM \|\gmt-\Mmt \|_* \cdot \|\zmt -\tzmt\|}_{{IV}}. \label{eq:defIV}
  \#
  Now we use Lemma~\ref{lm:gaptoregret} and obtain
  \#
  TM\cdot \E[\dualgap(\bar z)] 
  &\leq \E[\sup_{z\in\cZ} \{ {I}(z)+{II}(z) + {III}+{IV} \}]
  \\
  &\leq \E[\sup_{z\in\cZ} I(z)] + \E[\sup_{z\in\cZ} II(z)] + \E[III]+\E[IV]
  \label{eq:boundnonsmooth}
  \#
  Next we upper bound each term in turns. Steps 2--5 rely heavily on the learning rate scheme. Define 
  \$(Z^m_t)\sq  \defeq \frac{\| \zmt - \tzmtmst \|\sq + \|\zmt - \tzmt \|\sq}{{5}( \etamt)\sq} \$ 
  for all $t\in[T]$ and $m\in[M]$. By Lemma~\ref{lm:bdimprovement} we know $Z^m_t\leq G$ almost surely. This is due to 
  \$
  \| \zmt - \tzmtmst \|\sq + \|\zmt - \tzmt \|\sq 
  \leq \| \zmt - \tzmtmst \|\sq + 2 \|\zmt - \tzmtmst \|\sq + 2 \| \tzmtmst - \tzmt\|\sq \leq 5 G\sq \etamtsq.
  \$

  Moreover, for the nonsmooth case ($\alpha = 1$), $\etamt$ can be expressed by
  \# \label{eq:etatoZ}
  \etamt = \frac{D}{\sqrt{G_0\sq + \sum_{\tau=1}^{t-1} (Z^m_\tau)\sq}}.
  \#
  
  \textbf{Step 2.} Show $ \E[\sup_{z\in\cZ} I(z)] =  O(\sigma D\sqrt{MT})$. For all $z\in\cZ$,
  \$
  I(z) = \sumT\sumM \langle  \zmt - \tilde{z}^m_0, \ximt \rangle + \sumT\sumM \langle   \tilde{z}^m_0 - z, \ximt \rangle.
  \$
  The first term is a martingale difference sequence (MDS) and is zero in expectation. For the second term, we use the Cauchy–Schwarz inequality. For all $z\in\cZ$,
  \$
  \E\bigg[\sup_z \sumT\sumM \langle   \tilde{z}^m_0 - z, \ximt \rangle\bigg]
  & = \E\bigg[\sup_z\Big\langle    \tilde{z}_0 - z, \sumT\sumM \ximt \Big\rangle\bigg]
  \\
  & \leq \E\bigg[ \sup_z \|\tilde{z}_0 - z\| \cdot \bigg\| \sumT\sumM \ximt \bigg\|_* \bigg]
  \\
  & \leq  D \cdot \sqrt{\E\bigg[ \bigg\| \sumT\sumM   \ximt \bigg\|_*^2 
  \bigg]} \leq \sigma D\sqrt{MT}.
  \$
  In the last equality, we use the fact that $\{\ximt\}$ is an MDS. This establishes $\E[\sup_z I(z)] \leq \sigma D \sqrt{MT}$.
  
  \textbf{Step 3.} Show $\E[\sup_{z\in\cZ} II(z)] = O( DG\cdot M\sqrt{T})$. For all $z\in\cZ$,
  \tiny
  \$
  II(z) 
  & =\sumT\sumM \ooetamt \Big(\hnormsq{z-\tzmtmst} - \hnormsq{z-\tzmt}\Big)
  \\
  & =\sumM\sumTnotinSp \ooetamt \Big(\hnormsq{z-\tzmtmst} - \hnormsq{z-\tzmt}\Big) + \sumM\sumTinSp \ooetamt \Big(\hnormsq{z-\tzmtmst} - \hnormsq{z-\tzmt}\Big) 
  \\
  & =\sumM\sumTnotinSp \ooetamt \Big(\hnormsq{z-\textcolor{red}{\tzmtm}} - \hnormsq{z-\tzmt} \Big) + \sumM\sumTinSp \ooetamt \Big(\hnormsq{z-\textcolor{red}{\tzmtmo}} - \hnormsq{z-\tzmt}\Big) 
  \\
  &= \underbracket{\sumM\sumT \ooetamt \Big(\hnormsq{z-\tzmtm} - \hnormsq{z-\tzmt}\Big) }_{A} + 
  \underbracket{\sumM\sumTinSp \ooetamt \Big(\hnormsq{z-\tzmtmo} - \hnormsq{z-\tzmtm}\Big)}_{B}
  \$ \normalsize
  where we used the definition of $\tzmtmst$ for two cases $t\in S+1$ and $t \notin S+1$ (Line~\ref{line:sync} and \ref{line:nosync} in algorithm).
  
  We upper bound $A$ and show $B\leq 0$.
  
  Recall for $t\in S+1$, we have $\tzmtmst = \tzmtmo=\sumM w_m \cdot \tzmtm$, and for $t\notin S+1$, we have $\tzmtmst = \tzmtm$. For the first term $A$ we use $\hnormsq{z - \tzmt} \leq D\sq $ and then telescope.
  \$
  A 
  & = \sumM \bigg[ \frac{1}{\eta^m_1} \Big(\hnormsq{\tilde{z}^m_0 - z }\Big)  - \frac{1}{\eta^m_T} \Big(\hnormsq{\tilde{z}^m_T - z}\Big)+ \sum_{t=2}^T \Big(\frac{1}{\etamt} - \frac{1}{\eta^m_{t-1}}\Big) \Big(\hnormsq{\tzmtm - z}\Big) \bigg]
  \\
  & \leq \sumM \bigg[ \frac{D\sq}{\eta^m_1}  + \sum_{t=2}^T \Big(\frac{1}{\etamt} - \frac{1}{\eta^m_{t-1}}\Big) D\sq  \bigg]
  \\
  & \leq  \sumM \bigg[ \frac{D\sq}{\eta^m_1}    + \frac{D\sq}{\eta^m_T}\bigg]
  \$
  For each $m$, we have $D\sq / \eta^m_1 = DG_0$. For $D\sq /\eta^m_T$ we use the learning rate scheme. Recall the definition of $\Zmt$.
  Then
  \$
  \frac{D\sq}{\eta^m_T} 
  = D \sqrt{G_0\sq + \sum_{t=1}^{T-1}(Z^m_t)\sq}
  \leq D\sqrt{G_0 + G\sq T}
  \leq DG_0 + DG\sqrt{T}.
  \$
  This implies $A \leq M(2DG_0 + DG\sqrt{T})= O( DG\cdot M \sqrt{T})$.
  
  For the term $B$, we use the definition of $\tzmtmo$ and the weights $\{w_m\}$ to show $B\leq 0$. For each $t$, since $\tzmtmo$ the same for all workers,
  \$
  \sumM \ooetamt \Big(\hnormsq{z-\tzmtmo}\Big)
  & =
  \Big(\sumM \ooetamt\Big) \Big(\hnormsq{z-\tzmtmo} \Big)
  \\
  &= \Big(\sumM \ooetamt\Big) \Big( \tfrac12 \big\| {\textstyle \sum_{m=1}^M {w_m}^{1/2} \cdot {w_m}^{1/2}(z-\tzmtm ) } \big\|\sq \Big)
  \\
  & \leq  \Big(\sumM \ooetamt\Big) \Big(\sumM w_m\Big) \Big(\sumM w_m\cdot \tfrac12 \| z - \tzmtm \|^2\Big)
  \\
  & = \sumM  \ooetamt \Big(\hnormsq{z - \tzmtm}\Big) .
  \$
  In the last equality we use $\sumM w_m = 1$ and $(\sumM 1/\etamt)w_{m'} = 1/\eta^{m'}_t$ for all $m'\in[M]$. This implies $B\leq 0$. This establishes $\E[\sup_z II(z)]\leq\E[\sup_z A]=O(DG\cdot M\sqrt{T})$.

  \textbf{Step 4.} Show $\E[III] \leq 0$. This is obviously true.
  
  \textbf{Step 5.} Show $\E[IV]=\tilde{O}(\gamma DG\cdot M\sqrt{T})$. Define $\gamma = G/G_0$. By \ref{as:bdsg} we have $\| \gmt - \Mmt\|_* \leq 2G$. It holds almost surely that
  \$
  IV 
  & \leq 2G\sumM \sumT \| \zmt - \tzmt\|
  \\
  & \leq 2G\sqrt{T} \cdot \sumM \sqrt{\sumT \| \zmt - \tzmt\|\sq }
  \\
  & \leq 2G\sqrt{T} \cdot \sumM \sqrt{\sumT (\etamt \Zmt)\sq }
  \\
  & =  2G \sqrt{T} \cdot D \cdot \sumM \sqrt{\sumT \frac{(\Zmt)\sq }{G_0\sq + \sumtautotm (Z^m_{\tau})\sq } }
  \\
  & \leq 2G D\sqrt{T} \cdot  \sumM \sqrt{2+ 4\gamma\sq + 2\log \Big(\frac{G_0\sq + \sumttoTm \Zmtsq }{G_0\sq} \Big) } 
  \tag{Lemma~\ref{lm:boundwithlog}}
  \\
  & \leq 2G D\sqrt{T} \cdot  \sumM \sqrt{2+ 4\gamma\sq + 2\log \Big(\frac{G_0\sq + G\sq T }{G_0\sq} \Big) } 
  \\
  & \leq 2G D \sqrt{T} \cdot \sumM \sqrt{2 + 4\gamma\sq + 2 \log (1 + \gamma\sq T)} 
  \\
  & = \tilde{O}(\gamma GD\cdot M\sqrt{T})
  \$
  Finally, we plug in the upper bounds for $I$--$IV$ and continue Eq~\eqref{eq:boundnonsmooth}.
  \$
  TM\cdot \E[\dualgap(\bar z)] = \tilde{O}(\gamma DG\cdot M\sqrt{T} + \sigma D \sqrt{MT}).
  \$
  This finishes the proof of Theorem~\ref{thm:nonsmooth}
  \end{proof}

  \subsection{Proof of Theorem~\ref{thm:smooth}}
  \label{sec:proof:thmsmooth}
  \begin{proof}[Proof of Theorem~\ref{thm:smooth}, Smooth Case]
  The proof strategy follows closely that of Bach and Levy \cite{bach2019universal}.
  Using the notation for Step~1 in the proof for the nonsmooth case, we have the bound
  \$
  TM \E[\dualgap(\bar z)] \leq \E[\sup_z\{ I(z) + II(z) + III + IV\}],
  \$
  where $I$--$IV$ are defined in Eqs.~\eqref{eq:defI}--\eqref{eq:defIV}. We deal with these terms in a different manner. 
  
  For the term $I(z)$ in Eq.~\eqref{eq:defI}, following Step~2 we have $\E[\sup_z I(z)] = O(\gamma \sigma D\sqrt{MT})$.
  
  Next, we define a stopping time. For each $m\in[M]$, let
  \#
  \tau^*_m \defeq \max \bigg\{ t\in[T]: \ooetamt \leq 1/(2L)\bigg\}.
  \#
  Recall our learning rate scheme for the smooth case
  \$
  \eta^m_1 = \frac{D\alpha}{G_0}, \quad \etamt = \frac{D\alpha}{\sqrt{G_0\sq + \sum_{\tau=1}^{t-1} (Z^m_\tau)\sq}}.
  \$
  
  For the term $II(z)$ in Eq.~\eqref{eq:defII}, we follow Step~3 and obtain for all $z\in\cZ$,
  \$
  II(z) \leq
   \sumM   \bigg( \frac{D\sq}{\eta^m_1}    + \frac{D\sq}{\eta^m_T} \bigg).
  \$
  By the definition of $\eta^m_1$, we have $\sumM D \sq / \eta^{m}_1 \leq DMG_0/\alpha$. For the second term, for fixed $m\in [M]$,
  \#
  \sumM D \sq  / \etamT 
  & = \sumM \frac{D}{\alpha}\sqrt{G_0\sq+  \sumttoTm \Zmtsq}
  \\
  & \leq \sumM \frac{D}{\alpha} \Bigg(G_0 + \sumT \frac{\Zmtsq}{\sqrt{G_0\sq + \sumtautotm \Zmtausq}} \Bigg) \tag{Lemma~\ref{lm:boundwithsqrt}}
  \\
  & = \frac{ MDG_0}{\alpha} + \underbracket{\sumM \sumT \frac{1}{\alpha \sq} \etamt \Zmtsq}_{\defeq \cA}
  \label{eq:defcA:nonsmooth}
  \# 
  So we have $\E[\sup_z II(z)] \leq 2\gamma MDG/\alpha + \E[\cA]$.
  
  For the term $III$ in Eq.~\eqref{eq:defIII}, we also split it into two parts by $\tau^*_m$.
  \#
  III & \defeq  - \sumT\sumM  \frac{1}{\etamt}\Big(\hnormsq{\zmt-\tzmtmst} + \hnormsq{\zmt-\tzmt}\Big)
  \\
  & = - \sumT\sumM \frac52 \etamt \Zmtsq
  \\
  & = - \underbracket{\sumM\sumttotaust \frac52 \etamt \Zmtsq}_{\geq 0 }
  - 
  \underbracket{\sumM \sumtfromtaustmtoT  \frac52 \etamt \Zmtsq}_{\defeq \cB_\tail}
  \label{eq:defBtail}
  \#
  For the term $IV$ in defined in Eq.~\eqref{eq:defIV}, we first introduce a margtingale difference sequence. For all $t\in[T],m\in[M]$, let
  \#
  \zeta^m_t \defeq \big(\gmt - G(\zmt) \big) + \big(\Mmt - G(\tzmtmst) \big).
  \#
  By the triangular inequality, we have 
  \#
  IV & \defeq \sumT\sumM \|\gmt-\Mmt \|_* \cdot \|\zmt -\tzmt\|
  \\
  & \leq  \underbracket{
  \sumT\sumM \|\zeta^m_t \|_* \cdot \|\zmt -\tzmt\|}_{\defeq V} + \sumT\sumM \| G(\zmt) - G(\tzmtmst) \|_* \cdot \|\zmt -\tzmt\|
  \label{eq:defV}
  \\
  & \leq  V + \sumT\sumM \Big( \frac{L}{2} \| \zmt - \tzmtmst \|\sq + \frac{L}{2} \|\zmt -\tzmt\|\sq \Big) 
  \label{eq:smoothnesskickin}
  \\
  & = V + \sumT\sumM \frac{5L}{2}\etamtsq \Zmtsq
  \\
  & = V + 
  \underbracket{\sumM \sumttotaust \frac{5L}{2}\etamtsq \Zmtsq}_{\defeq \cC_\head}
  + 
  \underbracket{\sumM \sumtfromtaustmtoT \frac{5L}{2}\etamtsq \Zmtsq}_{\defeq \cC_\tail} \label{eq:defcC}
  \#
  Eq.~\eqref{eq:smoothnesskickin} holds due to smoothness, i.e., for all $z,z'\in\cZ$, $\| G(z) - G(z')\|_*\leq L \| z- z'\|$. Using smoothness, we can verify Eq.~\eqref{eq:smoothnesskickin} as follows.
  \$
  & \| G(\zmt) - G(\tzmtmst) \|_* \cdot \|\zmt -\tzmt\|
  \\
  &\leq \frac{1}{2L} \| G(\zmt) - G(\tzmtmst) \|_*\sq + \frac{L}{2}\|\zmt -\tzmt\|\sq 
  \\
  & \leq \frac{L}{2} \| \zmt - \tzmtmst \|\sq + \frac{L}{2} \|\zmt -\tzmt\|\sq.
  \$
  To summarize, we have shown 
  \#
  TM \cdot \E[\dualgap(\bar z)] 
  & \leq \E[\sup_z\{ I(z) + II(z) + III + IV\}]
  \\
  & \leq O \Big(\gamma \sigma D \sqrt{MT}\Big) + {2\gamma MDG}/{\alpha} 
  \\
  & \quad + \E [\cA + \cC_\head + (- \cB_\tail + \cC_\tail) + V]  .
  \label{eq:smoothdecomp_1}
  \#
  
  \textbf{Step a.} Show $\E[\cA] \leq 8\gamma G D M / \alpha + 3DM\cVoneT/\alpha$. Recall its definition in Eq.~\eqref{eq:defcA:nonsmooth}.
  \$
  \cA 
  & \defeq \sumM\sumT \frac{1}{\alpha\sq} \etamt \Zmtsq
  \\
  & =\frac{D}{\alpha} \sumM  \sumT \frac{\Zmtsq}{\sqrt{G_0\sq + \sumtautotm \Zmtausq}}
  \\
  & \leq  \frac{D}{\alpha} \sumM \Bigg( 5\gamma G + 3\sqrt{G_0\sq + \sumttoTm  \Zmtsq}  \Bigg) \tag{Lemma~\ref{lm:boundwithsqrt}}
  \\
  & \leq  \frac{D}{\alpha} \sumM \Bigg( 8\gamma G + 3\sqrt{\sumttoTm  \Zmtsq}  \Bigg) 
  \$
  Note by Lemma~\ref{lm:bdimprovement} we know $\Zmtsq \leq (\|\gmt\|_*\sq + \|\Mmt\|_*\sq)/5\leq \|\gmt\|_*\sq + \|\Mmt\|_*\sq$. Recall the definition of $\cVmT$ in Eq.~\eqref{eq:defcVmT}. By the symmetry of the algorithm over all workers, we know $\cVoneT = \cVmT$ for all $m\in[M]$. Then 
  \$
  \E[\cA]& \leq 8 \gamma DMG/\alpha + \frac{3D}{\alpha} \sumM \E\Bigg[ \sqrt{\sumttoTm  \Zmtsq}\Bigg]
  \\
  & \leq 8 \gamma DMG/\alpha + \frac{3D}{\alpha} \sumM \E\Bigg[ \sqrt{\sumttoTm  \|\gmt\|_*\sq + \|\Mmt\|_*\sq }\Bigg]
  \\
  & = 8 \gamma DMG/\alpha + \frac{3D}{\alpha} \sumM \cVmT 
  = 8 \gamma DMG/\alpha + 3DM\cVoneT / \alpha.
  \$
  By our choice of $\alpha$ we have $\E[\cA]=O(\gamma D M^{3/2}G + DM^{3/2}\cVoneT) $.
  
  \textbf{Step b.} Show $\E[\cC_\head]=O(1)$. Recall its definition in Eq.~\eqref{eq:defcC}.
  \#
  \cC_\head
  & \defeq \sumM \sumttotaust \frac{5L}{2}\etamtsq \Zmtsq
  \\
  & = \frac{5\alpha\sq D\sq L}{2} \sumM \sumttotaust \frac{\Zmtsq}{G_0\sq + \sumtautotm \Zmtausq}
  \\
  &  \leq \frac{5\alpha\sq D\sq L}{2} \sumM  \bigg( 6\gamma\sq + 2\log \Big( \frac{G_0\sq + \sumttotaustm \Zmtausq}{G_0\sq}\Big) \bigg) \tag{Lemma~\ref{lm:boundwithlog}}
  \\
  & = \frac{5\alpha\sq D\sq L}{2} \sumM  \bigg( 6\gamma\sq + 2\log\Big( \frac{\alpha\sq D\sq}{G_0\sq (\eta^m_{\tau^*_m})\sq}\Big) \bigg)
  \\
  & \leq \frac{5\alpha\sq D\sq LM}{2}  \bigg (6\gamma\sq + 4\log\Big(\frac{\alpha D}{2G_0 L}\Big) \bigg) \label{eq:endofcChead}
  \#
  The last inequality is due to the definition of $\tau_m^*$.
  By our choice of $\alpha$ we have $\E[\cC_\head] = \tO(\gamma\sq LD\sq )$.
  
  \textbf{Step c.} Show $\cC_\tail- \cB_\tail \leq 0$. Recall $\cB_\tail$ is defined in Eq.~\eqref{eq:defBtail}. By definition,
  \$
  \cC_\tail- \cB_\tail 
  =
  \sumM \sumtfromtaustmtoT \Big(  \frac{5L}{2}\etamt - \frac52 \Big) \etamt \Zmtsq .
  \$
  We show $ \frac{5L}{2}\etamt - \frac52  \leq 0$ for all $t\in[T],m\in[M]$. Note that for all $t\geq \tau^*_m + 1$ we have $\etamt \leq 1/(2L)$. And so $\frac{5L}{2}\etamt - \frac52 \leq (5/4) - (5/2) = -5/4$. Summarizing, we have shown $\cC_\tail- \cB_\tail \leq 0$.
  
  \textbf{Step d.} Show $\E[V] = \tO(\gamma \sigma D \sqrt{MT})$. Recall its definition in Eq.\eqref{eq:defV}. Also note $\E[\| \zeta^m_t\|_*\sq ] \leq 4\sigma\sq$.
  \#
  \E[V] 
  & \defeq \E\Bigg[ \sumT\sumM \|\zeta^m_t \|_* \cdot \|\zmt -\tzmt\| \Bigg]
  \\
  & \leq  \E\Bigg[ \sqrt{  \sumT\sumM \|\zeta^m_t \|_* \sq } \Bigg ] \cdot \E\Bigg[  \sqrt{ \sumT\sumM \|\zmt -\tzmt\| \sq } \Bigg] 
  \\
  &  \leq \sqrt{  \sumT\sumM \E \big [\|\zeta^m_t \|_* \sq  \big]} \cdot \E\Bigg[  \sqrt{ \sumT\sumM \|\zmt -\tzmt\| \sq } \Bigg] 
  \\
  & \leq 2 \sigma \sqrt{MT} \cdot  \E\Bigg[  \sqrt{\sumM\sumT \| \zmt - \tzmt \|\sq}\Bigg ] 
  \\
  & \leq 2 \sigma \sqrt{MT} \cdot  \E\Bigg[  \sqrt{\sumM\sumT \| \zmt - \tzmtmst \|\sq+\| \zmt - \tzmt \|\sq}\Bigg ]
  \\
  & = 2 \sigma \sqrt{MT} \cdot \E \Bigg[ \sqrt{\sumM \sumT 5 \cdot \etamtsq \Zmtsq }\Bigg ]
  \\
  & =  2\sqrt{5} \cdot \sigma \sqrt{MT}  \cdot D\alpha \cdot \E\Bigg[  \sqrt{\sumM \sumT  \frac{\Zmtsq }{G_0\sq + \sumtautotm \Zmtausq}}\Bigg ]
  \\
  & \leq 6  \cdot \sigma \sqrt{MT}  \cdot D\alpha  \cdot \E\Bigg[  \sqrt{\sumM \bigg(6\gamma\sq + 2\log \Big(\frac{G_0\sq + \sumttoTm \Zmtsq }{G_0\sq} \Big) \bigg)}\Bigg ] \tag{Lemma~\ref{lm:boundwithlog}}
  \\
  &  \leq 6\sigma \sqrt{MT}  \cdot D\alpha \cdot  \sqrt{ M (6\gamma\sq + 2 \log(1 + \gamma\sq T) )} .
  \label{eq:boundVend}
  \#
  By our choice of $\alpha$, we have $\E[V] = \tO( \gamma \sigma D \sqrt{MT})$.
  
  Continuing Eq.~\eqref{eq:smoothdecomp_1}, we have 
  \$
  & TM\cdot \E[\dualgap(\bar z )] 
  \\
  & \leq O \Big(\gamma \sigma D \sqrt{MT}\Big) + {2\gamma MDG}/{\alpha} 
  + \E [\cA + \cC_\head + (- \cB_\tail + \cC_\tail) + V] 
  \\
  & = \tO \Big(
      \gamma \sigma D \sqrt{MT}
  + \underbracket{\gamma D M^{3/2}G + DM^{3/2}\cVmT}_{\cA}
  + \underbracket{\gamma \sq LD\sq }_{\cC_\head} 
  + \underbracket{\gamma \sigma D \sqrt{M T}}_{V}
  \Big).
  \$
  
  This finishes the proof of Theorem~\ref{thm:smooth}.
      
  \end{proof}
  
  \begin{remark}[Getting rid of $\cVoneT$]
  We could also use the free parameters $\alpha$ (base learning rate) and obtain the following near linear speed-up result.
  
      \begin{theorem}[Smooth Case, free of $\cVoneT$] 
          \label{thm:smooth_noV}
          
          Assume \ref{as:bddomain}, \ref{as:bdsg}, \ref{as:bdvar} and \ref{as:smooth}. Let $\sigma, D, G,L$ be defined therein. For any $\epsilon \in (0,\frac12)$, let $\bar z = \ALGO(G_0, D;K,M,R;T^\epsilon/\sqrt{M})$. If $T\geq M^{1/(2\epsilon)}$, then
      \[\E[\operatorname*{DualGap}(\bar z)] = 
      \tilde{O}\bigg( \frac{\sigma D}{{\sqrt{MT^{1-2\epsilon}}}}
      + \frac{\gamma\sq LD\sq}{T^{1-2\epsilon}}
      + \frac{LD\sq M }{T}
      + \frac{\gamma G D M^{3/2}}{T^{1+\epsilon}}
      \bigg) \,,\]
      where $\tilde{O}$ hides absolute constants, logarithmic factors of problem parameters and logarithmic factors of $T$.
      \end{theorem}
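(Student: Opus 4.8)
The plan is to re-run the proof of \cref{thm:smooth} essentially verbatim, keeping the base learning rate $\alpha$ symbolic, and to change only the single place where the cumulative-gradient quantity $\cVoneT$ is introduced. The key observation is that the hypothesis $T\ge M^{1/(2\epsilon)}$ is \emph{exactly} the inequality $\alpha := T^\epsilon/\sqrt M\ge 1$. First I would reproduce Steps~1--d of the proof of \cref{thm:smooth} with $\alpha$ a free parameter, arriving at the decomposition \eqref{eq:smoothdecomp_1},
\[
TM\cdot\E[\dualgap(\bar z)]\ \le\ O\big(\gamma\sigma D\sqrt{MT}\big)+\frac{2\gamma MDG}{\alpha}+\E\big[\,\cA+\cC_\head+(-\cB_\tail+\cC_\tail)+V\,\big],
\]
with $\cA,\cC_\head,\cB_\tail,\cC_\tail,V$ as in \eqref{eq:defcA:nonsmooth}--\eqref{eq:defcC} and $\tau^*_m$ the stopping time defined there. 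None of Steps~2 ($\E[\sup_z I(z)]=O(\gamma\sigma D\sqrt{MT})$), b ($\cC_\head=\tO(\gamma^2\alpha^2LD^2M)$, using $\eta^m_{\tau^*_m}>1/(2L)$), c ($\cC_\tail-\cB_\tail\le0$, using $\etamt\le1/(2L)$ for $t>\tau^*_m$), or d ($\E[V]=\tO(\gamma\sigma D\,\alpha M\sqrt T)$) used the numerical value of $\alpha$, so they all survive unchanged.

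The only change is to \textbf{Step~a}, which previously bounded $\cA=\alpha^{-2}\sum_m\sum_{t=1}^T\etamt\Zmtsq$ via the trivial estimate $\cVoneT\le G\sqrt{2T}$. Instead I would split $\cA=\cA_\head+\cA_\tail$ at $\tau^*_m$. For the head, apply \cref{lm:boundwithsqrt} to $\sumttotaust\Zmtsq/\sqrt{G_0^2+\sumtautotm\Zmtausq}$ and then use $\sqrt{G_0^2+\sum_{\tau<\tau^*_m}\Zmtausq}=D\alpha/\eta^m_{\tau^*_m}<2LD\alpha$ together with $\Zmtsq\le G^2$ a.s.\ (\cref{lm:bdimprovement}); this yields the \emph{deterministic, $\cVoneT$-free} bound $\cA_\head\le 5\gamma DMG/\alpha+6LD^2M$. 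For the tail, $\cA_\tail=\alpha^{-2}\sum_m\sumtfromtaustmtoT\etamt\Zmtsq$, while $\cB_\tail=\tfrac52\sum_m\sumtfromtaustmtoT\etamt\Zmtsq$ and $\cC_\tail\le\tfrac54\sum_m\sumtfromtaustmtoT\etamt\Zmtsq$ (the latter since $\etamt\le1/(2L)$ for $t>\tau^*_m$, so $\tfrac{5L}{2}\etamtsq\le\tfrac54\etamt$). Hence
\[
\cA_\tail+\cC_\tail-\cB_\tail\ \le\ \Big(\tfrac1{\alpha^2}+\tfrac54-\tfrac52\Big)\sum_{m}\sumtfromtaustmtoT\etamt\Zmtsq\ \le\ 0
\]
as soon as $\alpha^2\ge\tfrac45$, which holds because $\alpha\ge1$ under the hypothesis. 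Thus the tail of $\cA$ is absorbed by the negative term $-\cB_\tail$ together with the tail of the smoothness term $\cC_\tail$, and what used to be the $\cVoneT$-contribution is replaced by the benign $6LD^2M$ summand.

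Collecting everything, $TM\,\E[\dualgap(\bar z)]=\tO\big(\gamma\sigma D\sqrt{MT}+\gamma DMG/\alpha+LD^2M+\gamma^2\alpha^2LD^2M+\gamma\sigma D\,\alpha M\sqrt T\big)$; substituting $\alpha=T^\epsilon/\sqrt M$ (so $1/\alpha=\sqrt M/T^\epsilon$, $\alpha\sqrt M=T^\epsilon$, $\alpha^2M=T^{2\epsilon}$) and dividing by $TM$ gives the stated bound term by term — the noise pieces $I$ and $V$ produce $\tO(\sigma D/\sqrt{MT^{1-2\epsilon}})$, $\cC_\head$ produces $\tO(\gamma^2LD^2/T^{1-2\epsilon})$, the $LD^2M$ piece of $\cA_\head$ produces $\tO(LD^2M/T)$, and the $2\gamma MDG/\alpha$ remainder together with the first piece of $\cA_\head$ produce $\tO(\gamma GDM^{3/2}/T^{1+\epsilon})$. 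I expect the main obstacle to be bookkeeping rather than any new idea: one must carry enough numerical slack through Steps~a--d to certify the sign inequality $\cA_\tail+\cC_\tail-\cB_\tail\le0$ — it is exactly this inequality, and nothing deeper, that forces $\alpha\ge1$, i.e.\ $T\ge M^{1/(2\epsilon)}$ — and one must check that the head term $\cA_\head$ genuinely contributes only the $LD^2M$ summand with no hidden growth in $T$.
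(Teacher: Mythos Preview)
Your proposal is correct and follows essentially the same route as the paper's own proof: both keep $\alpha$ symbolic, split $\cA$ at the stopping time $\tau^*_m$ into $\cA_\head+\cA_\tail$, bound $\cA_\head\le 5\gamma GDM/\alpha+6LD^2M$ via \cref{lm:boundwithsqrt} together with $\eta^m_{\tau^*_m}\ge 1/(2L)$, and then use $\alpha\ge1$ (equivalently $T\ge M^{1/(2\epsilon)}$) to certify $\cA_\tail+\cC_\tail-\cB_\tail\le0$ so that the $\cVoneT$ term never appears. The only cosmetic difference is that the paper writes the sign inequality as $\tfrac{1}{\alpha^2}+\tfrac{5L}{2}\etamt-\tfrac52\le -\tfrac14$ directly, whereas you first replace $\tfrac{5L}{2}\etamt$ by $\tfrac54$; the content is identical.
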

  
      \begin{proof}[Proof of Theorem~\ref{thm:smooth_noV}]
          We decompose the term $II$ in Eq.\eqref{eq:defII} in a different way. Recall in Step~3 we have shown for all $z\in\cZ$,
          $
          II(z) \leq
           \sumM   \frac{D\sq}{\eta^m_1}    + \frac{D\sq}{\eta^m_T} .
          $
          For the second term, for fixed $m\in [M]$,
          \#
          \sumM D \sq  / \etamT 
          & = \sumM \frac{D}{\alpha}\sqrt{G_0\sq+  \sumttoTm \Zmtsq}
          \\
          & \leq \sumM \frac{D}{\alpha} \Bigg(G_0 + \sumT \frac{\Zmtsq}{\sqrt{G_0\sq + \sumtautotm \Zmtausq}} \Bigg) \tag{Lemma~\ref{lm:boundwithsqrt}}
          \\
          & = \frac{ MDG_0}{\alpha} + \sumM \sumT \frac{1}{\alpha \sq} \etamtsq \Zmtsq
          \\
          & \leq \frac{\gamma MDG}{\alpha} + 
          \underbracket{\sumM\sumttotaust \frac{1}{\alpha\sq} \etamtsq \Zmtsq}_{\defeq\cA_\head}
          + 
          \underbracket{\sumM \sumtfromtaustmtoT \frac{1}{\alpha \sq} \etamtsq \Zmtsq}_{\defeq\cA_\tail} \label{eq:defcA}
          \# 
          So we have $\E[\sup_z II(z)] \leq 2\gamma MDG/\alpha + \E[\cA_\head + \cA_\tail]$.
          Then, following the proof in the smooth case, we have 
          
          \#
  TM \cdot \E[\dualgap(\bar z)] 
  & \leq \E[\sup_z\{ I(z) + II(z) + III + IV\}]
  \\
  & \leq O \Big(\gamma \sigma D \sqrt{MT}\Big) + {2\gamma MDG}/{\alpha} 
  \\
  & \quad + \E [\cA_\head + \cC_\head + (\cA_\tail - \cB_\tail + \cC_\tail) + V]  .
  \label{eq:smoothdecomp_1}
  \#

  Recall our choice of $\alpha = T^\epsilon / \sqrt{M}$.

  Show $\E[\cA_\head]=\tO(1)$. Recall its definition in Eq.~\eqref{eq:defcA}.
  \$
  \cA_\head 
  & \defeq \sumM\sumttotaust \frac{1}{\alpha\sq} \etamtsq \Zmtsq
  \\
  & =\frac{D}{\alpha} \sumM  \sumttotaust \frac{\Zmtsq}{\sqrt{G_0\sq + \sumtautotm \Zmtausq}}
  \\
  & \leq  \frac{D}{\alpha} \sumM \Bigg( 5\gamma G + 3\sqrt{G_0\sq + \sumttotaustm \Zmtsq}  \Bigg) \tag{Lemma~\ref{lm:boundwithsqrt}}
  \\
  & = \frac{D}{\alpha} \sumM \Big ( 5\gamma G + \frac{3D\alpha }{\eta^m_{\tau^*_m}}\Big)
  \\
  & \leq \frac{D}{\alpha} \sumM \Big ( 5\gamma G + 6\alpha LD \Big) = \frac{5\gamma GDM}{\alpha} + 6LD\sq M.
  \$
  By our choice of $\alpha$ we have $\E[\cA_\head]\leq 5\gamma GDM^{3/2}T^{-\epsilon} + 6LD\sq M $.

  For $\cC_{\head}$ defined in Eq.~\eqref{eq:defcC}, following Eq~\eqref{eq:endofcChead}, we have $\E[\cC_\head] = \tO(\gamma\sq LD\sq T^{2\epsilon})$.

  Show $\cA_\tail + \cC_\tail- \cB_\tail \leq 0$. Recall $\cB_\tail$ is defined in Eq.~\eqref{eq:defBtail}. By definition,
  \$
  \cA_\tail + \cC_\tail- \cB_\tail 
  =
  \sumM \sumtfromtaustmtoT \Big( \frac{1}{\alpha\sq} + \frac{5L}{2}\etamt - \frac52 \Big) \etamt \Zmtsq .
  \$
  We show $\frac{1}{\alpha\sq} + \frac{5L}{2}\etamt - \frac52  \leq 0$ for all $t\in[T],m\in[M]$. Note that 
  \$T\geq M^{1/ (2\epsilon)}\implies \alpha\sq = (T^\epsilon / \sqrt{M})\sq \geq 1 ,\$ 
  and that for all $t\geq \tau^*_m + 1$ we have $\etamt \leq 1/(2L)$. And so $\frac{1}{\alpha\sq} + \frac{5L}{2}\etamt - \frac52 \leq 1 + (5/4) - (5/2) = -1/4$. Summarizing, we have shown $\cA_\tail + \cC_\tail- \cB_\tail \leq 0$.
  
  For $V$ defined in Eq.~\eqref{eq:defV}, following Eq.~\eqref{eq:boundVend}, $\E[V] = \tO( \gamma \sigma D \sqrt{MT^{1+2\epsilon}})$.
  
  Putting together we have 
  \$
  & TM\cdot \E[\dualgap(\bar z )] 
  \\
  & \leq O \Big(\gamma \sigma D \sqrt{MT}\Big) + {2\gamma MDG}/{\alpha} 
  + \E [\cA_\head + \cC_\head + (\cA_\tail - \cB_\tail + \cC_\tail) + V] 
  \\
  & = \tO \Big(
      \gamma \sigma D \sqrt{MT}
  + \underbracket{\gamma GDM^{3/2}T^{-\epsilon} + LD\sq M}_{\cA_\head}
  + \underbracket{\gamma \sq LD\sq T^{2\epsilon}}_{\cC_\head} 
  + \underbracket{\gamma \sigma D \sqrt{M T^{1+2\epsilon}}}_{V}
  \Big).
  \$
  
  This finishes the proof of Theorem~\ref{thm:smooth_noV}
      \end{proof}
  \end{remark}

\section{Helper Lemmas}\label{appen:lemmas}

  \begin{lemma} \label{lm:boundwithlog}
  For any non-negative real numbers $a_1,\dots, a_n \in [0,a]$, and $a_0>0$, it holds
  \$
  \sum_{i=1}^{n} \frac{a_{i}}{a_{0}+\sum_{j=1}^{i-1} a_{j}} \leq 2+\frac{4 a}{a_{0}}+2 \log \Big(1+\sum_{i=1}^{n-1} a_{i} / a_{0}\Big).
  \$
  \begin{proof}[Proof of Lemma~\ref{lm:boundwithlog}]
      See Lemma A.2 of \cite{bach2019universal}.
      
  \end{proof}
  \end{lemma}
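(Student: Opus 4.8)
The plan is to prove this in the standard AdaGrad-telescoping fashion, splitting the sum at the index where the running denominator first catches up with the uniform increment bound $a$. Throughout I would write $S_0 = a_0$ and $S_i = a_0 + \sum_{j=1}^i a_j$, so that $S_i = S_{i-1} + a_i$ and the target inequality reads $\sum_{i=1}^n a_i/S_{i-1} \le 2 + 4a/a_0 + 2\log(S_{n-1}/a_0)$. The only analytic ingredient needed is the elementary inequality $x/(1+x) \le \log(1+x)$, valid for all $x \ge 0$; applied with $x = a_i/S_{i-1}$ it gives $a_i/S_i \le \log(S_i/S_{i-1})$ for every $i$, and hence, whenever $a_i \le S_{i-1}$ (equivalently $S_i \le 2 S_{i-1}$), also $a_i/S_{i-1} \le 2 a_i/S_i \le 2\log(S_i/S_{i-1})$.

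Next I would perform the split. Since the $a_i$ are nonnegative, $(S_i)$ is nondecreasing, so letting $h$ be the largest index with $S_{h-1} < a$ (and $h = 0$ if no such index exists, i.e.\ if $a_0 \ge a$), the set $\{1,\dots,h\}$ is exactly the set of indices with $S_{i-1} < a$ and $\{h+1,\dots,n\}$ exactly the set with $S_{i-1} \ge a$. For the head block $i \le h$ I would use only $S_{i-1} \ge a_0$, giving $\sum_{i=1}^h a_i/S_{i-1} \le a_0^{-1}\sum_{i=1}^h a_i = (S_h - a_0)/a_0$; and since $h$ is itself a head index (or the block is empty), $S_h = S_{h-1} + a_h < a + a = 2a$, so the head contributes at most $2a/a_0 \le 4a/a_0$.

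For the tail block $i \ge h+1$ one has $a_i \le a \le S_{i-1}$, so by the observation above $a_i/S_{i-1} \le 2\log(S_i/S_{i-1})$; because these indices are contiguous, telescoping gives $\sum_{i=h+1}^n a_i/S_{i-1} \le 2\log(S_n/S_h)$. If the tail is nonempty then $n$ lies in it, so $S_{n-1} \ge a \ge a_n$ and hence $S_n \le 2 S_{n-1}$; combining this with $S_h \ge a_0$ yields $2\log(S_n/S_h) \le 2\log(2 S_{n-1}/a_0) \le 2\log 2 + 2\log(S_{n-1}/a_0) \le 2 + 2\log(S_{n-1}/a_0)$ (when the tail is empty this term is $0$, trivially dominated). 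Adding the head and tail bounds and recalling $S_{n-1}/a_0 = 1 + a_0^{-1}\sum_{j=1}^{n-1} a_j$ gives the claim.

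The one genuinely delicate point is obtaining the logarithm in terms of the partial sum up to $n-1$ rather than up to $n$: this is precisely what forces the additive constant $2$, and it is handled by the bound $S_n \le 2 S_{n-1}$ that holds on the tail — once the running denominator dominates every single increment, one further step can at most double it. The $4a/a_0$ term originates in the head, where one must resist bounding the \emph{number} of head terms (that would cost an extra $\log(a/a_0)$ factor) and instead bound the total mass $\sum_{i \le h} a_i < 2a$ directly, which is available only because the head is an initial segment.
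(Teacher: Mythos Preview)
Your argument is correct. The split at the first index where the running sum $S_{i-1}$ reaches $a$, the crude $a_0^{-1}$ bound on the head producing $(S_h-a_0)/a_0<2a/a_0$, and the telescoping $a_i/S_{i-1}\le 2\log(S_i/S_{i-1})$ on the tail (followed by $S_n\le 2S_{n-1}$ to convert the endpoint to $n-1$) all go through as you describe; the edge cases $h=0$ and empty tail are handled.

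The paper itself does not prove this lemma: it simply cites Lemma~A.2 of Bach and Levy (2019). So there is nothing to compare against here beyond noting that your self-contained argument is the standard one underlying that cited result, and in fact your head bound is slightly sharper ($2a/a_0$ rather than $4a/a_0$) than the constant the statement records.
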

  
  \begin{lemma} \label{lm:boundwithsqrt}
      For any non-negative numbers $a_1,\dots, a_n \in [0,a]$, and $a_0>0$, it holds
      \$
      \sqrt{a_0 + \sum_{i=1}^{n-1} a_i} - \sqrt{a_0} \leq \sum_{i=1}^n \frac{a_i}{\sqrt{a_0 + \sum_{j=1}^{i-1}}a_j} \leq \frac{2a}{a_0} + 3\sqrt{a} + 3\sqrt{a_0 + \sum_{i=1}^{n-1} a_i}.
      \$
      \begin{proof}[Proof of Lemma~\ref{lm:boundwithlog}]
          See Lemma A.1 of \cite{bach2019universal}.
          
      \end{proof}
  \end{lemma}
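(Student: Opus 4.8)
The plan is to reduce the statement to two one‑line estimates for $t\mapsto\sqrt t$ together with a dyadic split of the index set. Put $S_0\defeq a_0$ and $S_i\defeq a_0+\sum_{j=1}^i a_j$ for $i\ge1$, so that $S_i-S_{i-1}=a_i\in[0,a]$; then the quantity in the lemma is $\Sigma\defeq\sum_{i=1}^n a_i/\sqrt{S_{i-1}}$, the left side is $\sqrt{S_{n-1}}-\sqrt{S_0}$, and (reading the leading summand on the right as $2a/\sqrt{a_0}$, which is the form actually invoked in the proof of Theorem~\ref{thm:smooth}, where $a_0=G_0^2$ turns it into $2\gamma G$) the right side is $2a/\sqrt{a_0}+3\sqrt a+3\sqrt{S_{n-1}}$. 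The only algebraic fact needed is $a_i/\sqrt{S_{i-1}}=(\sqrt{S_i}-\sqrt{S_{i-1}})\,(1+\sqrt{S_i/S_{i-1}})$, which is just $\sqrt{S_i}-\sqrt{S_{i-1}}=a_i/(\sqrt{S_i}+\sqrt{S_{i-1}})$ rewritten.

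\emph{Lower bound.} Since $S_i\ge S_{i-1}$ we have $1+\sqrt{S_i/S_{i-1}}\ge2$, so $a_i/\sqrt{S_{i-1}}\ge 2(\sqrt{S_i}-\sqrt{S_{i-1}})\ge\sqrt{S_i}-\sqrt{S_{i-1}}$; summing $i=1,\dots,n-1$ and discarding the nonnegative $i=n$ term telescopes to $\Sigma\ge\sqrt{S_{n-1}}-\sqrt{S_0}$.

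\emph{Upper bound.} Partition $\{1,\dots,n\}$ into moderate indices $\cG\defeq\{i:S_i\le4S_{i-1}\}$ and jump indices $\cB\defeq\{i:S_i>4S_{i-1}\}$. For $i\in\cG$ we have $\sqrt{S_i/S_{i-1}}\le2$, hence $a_i/\sqrt{S_{i-1}}\le3(\sqrt{S_i}-\sqrt{S_{i-1}})$; extending the sum over all $i$ (the extra terms are nonnegative) and telescoping yields $\sum_{i\in\cG}a_i/\sqrt{S_{i-1}}\le3\sqrt{S_n}\le3\sqrt{S_{n-1}}+3\sqrt a$ via $\sqrt{S_n}=\sqrt{S_{n-1}+a_n}\le\sqrt{S_{n-1}}+\sqrt a$. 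For $i\in\cB$ the running sum at least quadruples; enumerating $\cB=\{i_1<\dots<i_k\}$ and using that $(S_i)$ is nondecreasing, one shows by induction that $S_{i_j-1}\ge4^{\,j-1}a_0$, so $a_{i_j}/\sqrt{S_{i_j-1}}\le a/(2^{\,j-1}\sqrt{a_0})$ and $\sum_{i\in\cB}a_i/\sqrt{S_{i-1}}\le(a/\sqrt{a_0})\sum_{j\ge0}2^{-j}=2a/\sqrt{a_0}$. Adding the two bounds gives the claim.

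The one place that requires care is the jump part: the threshold ``$\times4$'' must be chosen so that simultaneously (i) on the complement $1+\sqrt{S_i/S_{i-1}}\le3$, which telescopes against $3\sqrt{S_n}$, and (ii) each jump inflates the accumulated mass enough that the geometric series over jumps sums to a constant multiple of $a/\sqrt{a_0}$ rather than diverging. Everything else is the two elementary $\sqrt{\cdot}$ inequalities plus telescoping, so I expect no genuine obstacle; this is precisely Lemma~A.1 of \cite{bach2019universal}, and no new ideas are needed.
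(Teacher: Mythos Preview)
Your argument is correct. The paper does not actually prove this lemma: its entire ``proof'' is the sentence ``See Lemma~A.1 of \cite{bach2019universal}.'' So there is nothing in the paper to compare against; you have supplied the missing details. Your dyadic split into moderate indices (where $S_i\le 4S_{i-1}$, so the ratio factor $1+\sqrt{S_i/S_{i-1}}$ is bounded by $3$ and the sum telescopes) and jump indices (where the running total at least quadruples, forcing a geometric decay in $a/\sqrt{S_{i-1}}$) is exactly the style of argument used in the adaptive-stepsize literature and presumably matches the Bach--Levy proof. The induction $S_{i_j-1}\ge 4^{j-1}a_0$ is clean, and the lower bound via $a_i/\sqrt{S_{i-1}}\ge 2(\sqrt{S_i}-\sqrt{S_{i-1}})$ is immediate.

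You are also right to flag the typo: the first term of the upper bound must be $2a/\sqrt{a_0}$, not $2a/a_0$. This is confirmed by the paper's own use of the lemma in Step~a of the proof of Theorem~\ref{thm:smooth}, where with $a_0=G_0^2$ and $a=G^2$ the bound is applied as $2a/\sqrt{a_0}+3\sqrt a = 2(G/G_0)G+3G\le 5\gamma G$; the printed $2a/a_0$ would be dimensionally inconsistent with the $5\gamma G$ that appears there.
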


  \begin{lemma}9 \label{lm:gaptoregret}
      For any sequence $\{ z_t\}_{t=1}^T\subset \cZ^o$, let $\bar z$ denote its mean. It holds
      \$
          T\cdot \dualgap(\bar z) \leq  \sup_{z\in \cZ}  \sumT \big\langle z_t - z,  G(z_t)\big\rangle .
      \$
  \end{lemma}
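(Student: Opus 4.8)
The plan is a standard convexity-plus-Jensen argument for convex--concave problems. Write $z_t=(x_t,y_t)$ and $\bar z=(\bar x,\bar y)$ with $\bar x=\tfrac1T\sum_{t=1}^T x_t$, $\bar y=\tfrac1T\sum_{t=1}^T y_t$, and recall that the oracle mean decomposes as $G(z_t)=[g^x_t,-g^y_t]$ where, by the stochastic-oracle assumption, $g^x_t\in\partial_x F(x_t,y_t)$ is a subgradient and $g^y_t\in\partial_y F(x_t,y_t)$ is a supergradient. First I would fix an arbitrary comparator $z=(x,y)\in\cX\times\cY$ and invoke Jensen's inequality twice: by convexity of $F(\cdot,y)$, $F(\bar x,y)\le\tfrac1T\sum_t F(x_t,y)$; by concavity of $F(x,\cdot)$, $F(x,\bar y)\ge\tfrac1T\sum_t F(x,y_t)$. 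Subtracting gives $F(\bar x,y)-F(x,\bar y)\le\tfrac1T\sum_t\bigl(F(x_t,y)-F(x,y_t)\bigr)$.

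Next I would linearize each summand. Splitting $F(x_t,y)-F(x,y_t)=\bigl(F(x_t,y)-F(x_t,y_t)\bigr)+\bigl(F(x_t,y_t)-F(x,y_t)\bigr)$ and using the supergradient inequality in $y$ and the subgradient inequality in $x$ yields $F(x_t,y)-F(x_t,y_t)\le\la g^y_t,\,y-y_t\ra$ and $F(x_t,y_t)-F(x,y_t)\le\la g^x_t,\,x_t-x\ra$. Adding these, and observing that $\la z_t-z,\,G(z_t)\ra=\la x_t-x,\,g^x_t\ra+\la y_t-y,\,-g^y_t\ra=\la g^x_t,\,x_t-x\ra+\la g^y_t,\,y-y_t\ra$, I get $F(x_t,y)-F(x,y_t)\le\la z_t-z,\,G(z_t)\ra$. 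Summing over $t$ and combining with the previous display gives $F(\bar x,y)-F(x,\bar y)\le\tfrac1T\sum_{t=1}^T\la z_t-z,\,G(z_t)\ra$ for every $z\in\cZ$.

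Finally I would take the supremum over $z=(x,y)\in\cX\times\cY$ on both sides. On the right this produces $\tfrac1T\sup_{z\in\cZ}\sum_{t=1}^T\la z_t-z,G(z_t)\ra$; on the left the two terms decouple, since $F(\bar x,y)$ depends only on $y$ and $F(x,\bar y)$ only on $x$, so $\sup_{(x,y)}\bigl(F(\bar x,y)-F(x,\bar y)\bigr)=\max_{y\in\cY}F(\bar x,y)-\min_{x\in\cX}F(x,\bar y)=\dualgap(\bar z)$. Multiplying through by $T$ yields the claim.

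I do not expect a genuine obstacle here; the argument is routine. The only two points deserving a line of justification are (i) that the oracle output $G(z_t)$ is, componentwise, a genuine subgradient in $x$ and supergradient in $y$ (guaranteed by the stochastic-oracle assumption, with the iterates taken in $\cZ^o$ so these subdifferentials are nonempty), and (ii) the decoupling of the outer supremum into an independent maximization over $y$ and minimization over $x$, which is immediate from the separation of variables noted above.
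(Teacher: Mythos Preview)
Your proposal is correct and follows essentially the same approach as the paper: Jensen's inequality in each variable to pass from $\bar z$ to the average of $F(x_t,y)-F(x,y_t)$, then the subgradient/supergradient inequalities applied to $G(z_t)=[g^x_t,-g^y_t]$ to linearize each summand, and finally the supremum over $z$. The only cosmetic difference is that the paper takes the supremum immediately after the Jensen step rather than at the end; your ordering (fix $z$, bound, then sup) is equivalent and arguably cleaner.
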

  \begin{proof}[Proof of Lemma~\ref{lm:gaptoregret}]
      This lemma depends on the convexity-concavity of the saddle function $F$.
      
      Denote $\bar z \defeq [\bar x, \bar y]$, $z_t \defeq [x_t,y_t]$. Note $\bar x = (1/T)\sumT x_t$ and $\bar y = (1/T)\sumT y_t$. By definition of the duality gap and the convexity-concavity of $F$,
      \$
      \dualgap(\bar z) 
      & \defeq \sup_{x\in\cX,y\in\cY} F(\bar x,y) - F(x,\bar y)
      \\
      & \leq  \sup_{x\in\cX,y\in\cY} \frac{1}{T}\sumT F(x_t,y) - \frac{1}{T}\sumT F(x, y_t) .
      \$
      Let $G(z_t)= G(x_t,y_t) \defeq [d_{x,t}, - d_{y,t}]$. Since $d_{x,t} \in \partial_x F(x_t,y_t)$, for all $x\in \cX$ and $y\in \cY$ , 
      \$
          F(x_t, y ) + \langle d_{x,t}, x-x_t\rangle \leq F(x,y).
      \$
      Similarly, for all $x\in \cX$ and $y\in \cY$, it holds
      \$
      F(x, y_t ) + \langle d_{y,t}, y-y_t \rangle \geq F(x,y).
      \$
      We have 
      \$
      T\cdot \dualgap(\bar z) 
      & \leq \sup_{x\in\cX,y\in\cY} \sumT
      \langle d_{x,t}, x_t - x\rangle - \langle d_{y,t} , y_t - y\rangle
      \\
      & =  \sup_{z\in \cZ} \sumT \langle G(z_t), z_t - z\rangle. 
      \$
      This completes the proof of Lemma~\ref{lm:gaptoregret}.
  \end{proof}

\section{Additional Experiments}\label{appen:experiments}
We implement our algorithm and conduct all the experiments on 
a computer with Intel Core i5 CPU @ 3.20GHz cores, 8GB RAM, and GPU @ GeForce RTX 3090. 
The deep learning framework we use is PyTorch 1.8.1. 
The OS environment was created by Conda over Ubuntu 20.04. 
We use Python 3.7. Python library requirement is specified 
in the configuration file provided in the supplemental materials. 
Due to the hardware limitation, we simulate the distributed environment
by creating object instances to simulate multiple clients and 
a central server on one GPU card.

\subsection{Stochastic bilinear minimax problem}\label{app:add_exp}

\begin{figure*}[tp]
	\centering
	\hspace{-.9cm}
	\subfigure[]{
		\includegraphics[width = 0.45\textwidth]
		{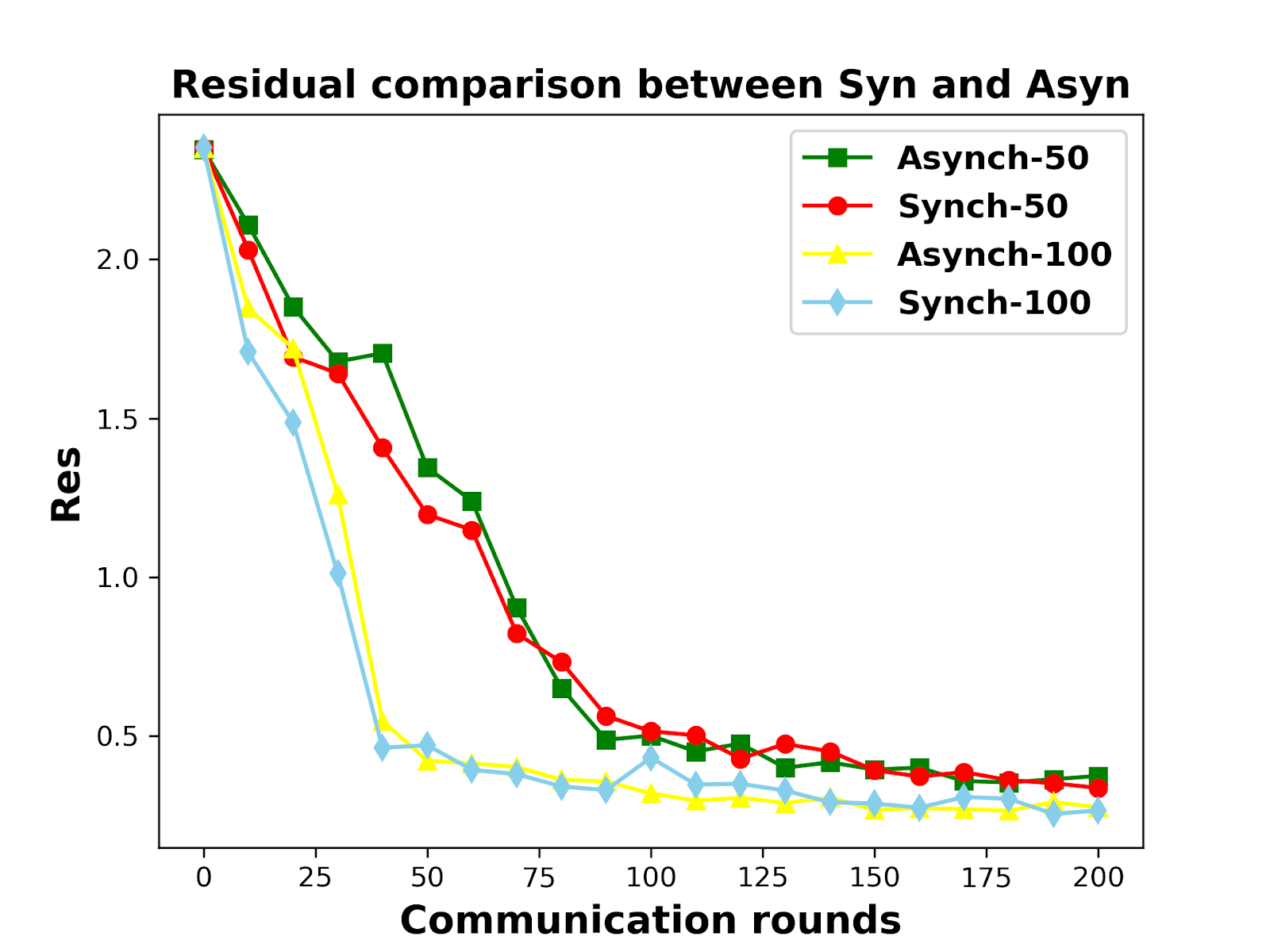}}
	\subfigure[]{
		\includegraphics[width = 0.45\textwidth]
		{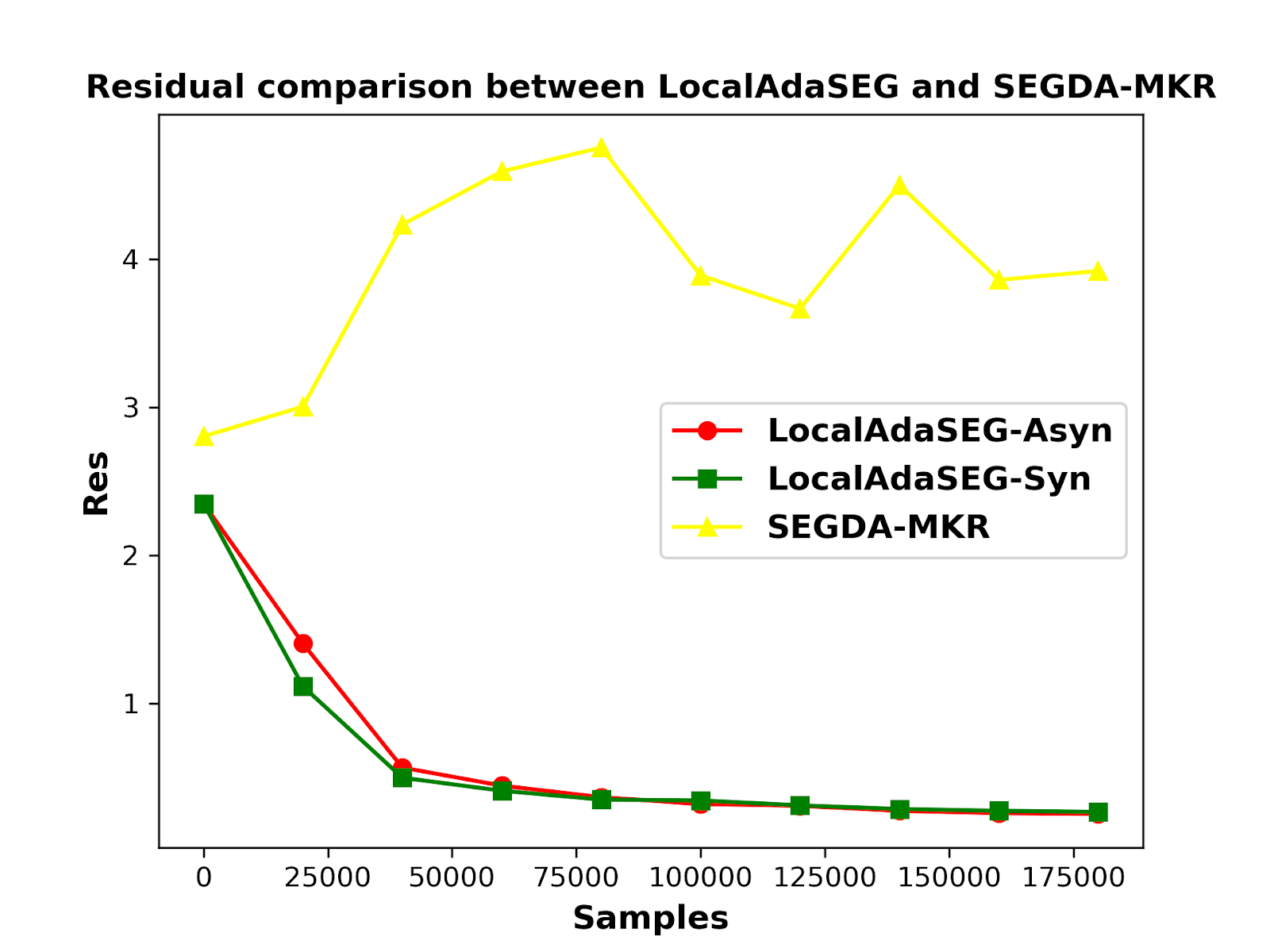}}
		
	\subfigure[]{
		\includegraphics[width = 0.45\textwidth]
		{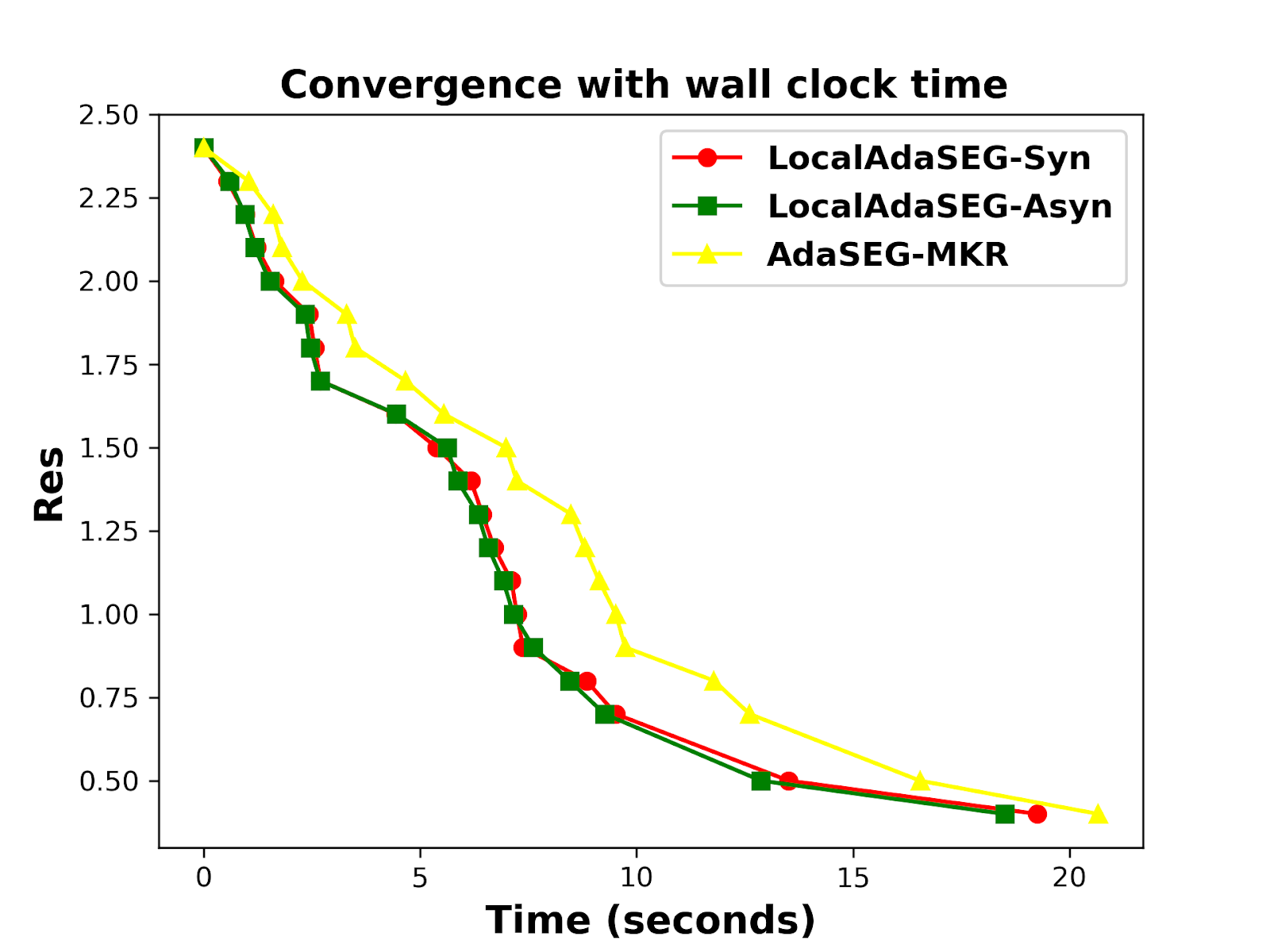}}
	\subfigure[]{
		\includegraphics[width = 0.45\textwidth]
		{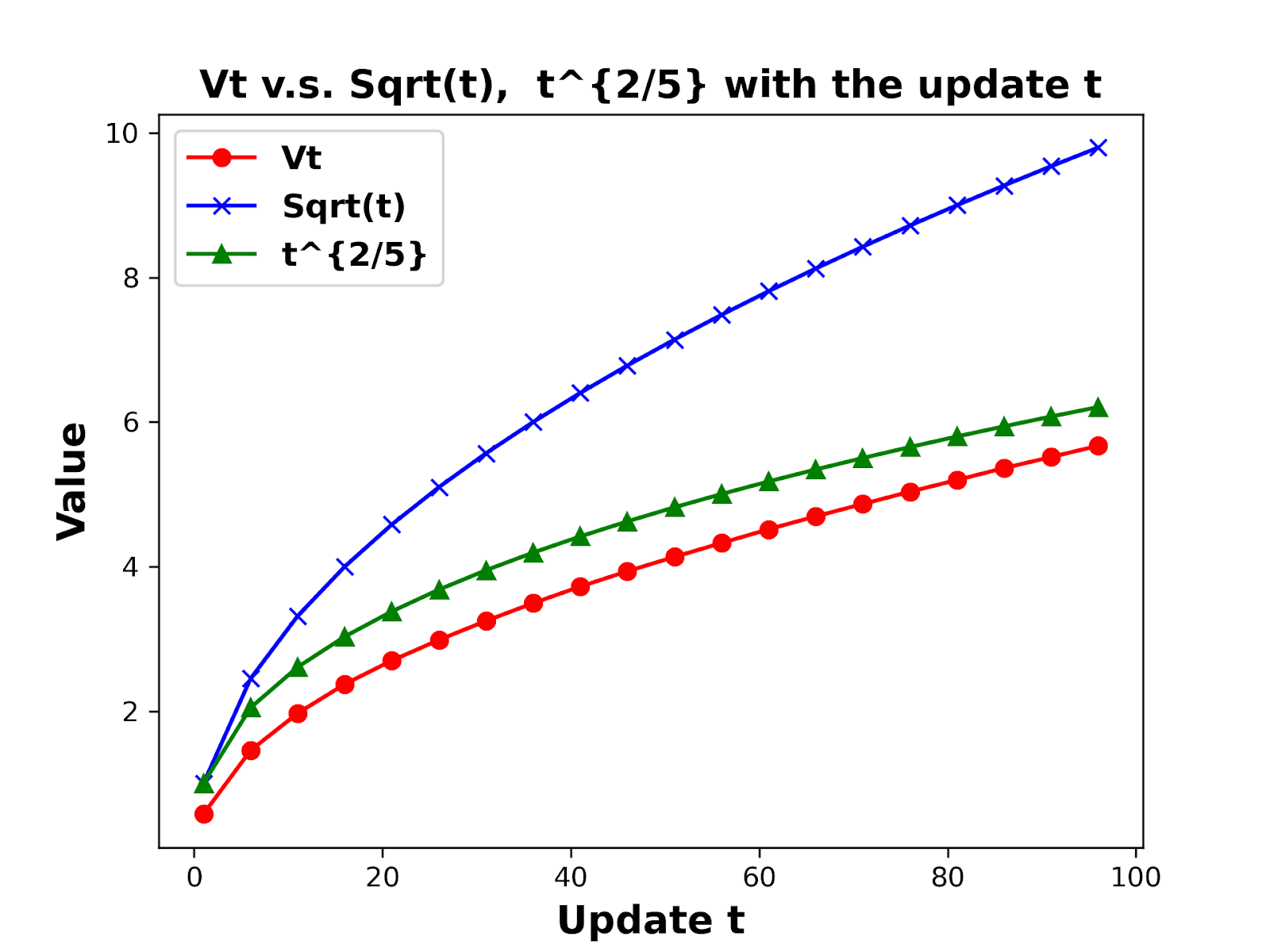}}
	
	\caption{\small (a) Residual comparison between synchronous and asynchronous cases with the communication rounds; (b) Residual comparison between $\ALGO$ (Asynchronous and Synchronous version) and SEGDA-MKR with samples; (c) Residual comparison between $\ALGO$ (Asynchronous and Synchronous version) and SEGDA-MKR with wallclock time; (d) Comparison of $V_t$, $\sqrt{t}$, $t^{2/5}$ with the update $t$. }
	\label{fig:additional_exp}
	\vspace{0cm}	
\end{figure*}

Experimentally, to validate the performance of our proposed method, we conduct an asynchronous variant of our proposed $\ALGO$ for the stochastic bilinear minimax problem. Specifically, we vary the number of local iterations $K$ in $M$ workers, where $M=4$ and the noise level $\sigma=0.1$.   In the case of 'Asynch-50', the local iteration $K$ is in the range of $\{50, 45, 40, 35 \}$  for each worker, and $K=50$ is adopted for all workers  in 'Synch-50'.
Similarly, in the case of 'Asynch-100', the local iteration $K$ varies in the range of $\{100, 90, 80, 70\}$. In the comparison, $K$ is fixed to 100 for each worker in the case of 'Synch-100'.  As can be seen from  \cref{fig:additional_exp} (a), both asynchronous and synchronous cases converge to an optimal point after several communication rounds. Compared with synchronous cases, asynchronicity only affects the convergence rate that is slower than the synchronous version with respect to the communication rounds.

Secondly, we compared our $\ALGO$ (both Asynchronous and Synchronous versions) with SEGDA of MKR iterations to solve bilinear minimax problems(refer to Section 6.1). Specifically, we choose $M =4$ workers, the noise level $\sigma=0.1$ and local iteration $K=50$ in the Synchronous case, and $K$ in the range of $\{50, 45, 40, 35, 30\}$ in the Asynchronous case. To provide fairness, we run vanilla  SEGDA with $M\times K \times R$ iterations on one worker with batchsize = 1, where $M$ denotes the number of workers, $K$ denotes the number of local iterations and $R$ represents the number of rounds. The experimental results are illustrated in \cref{fig:additional_exp} (b). As can be seen, the performance of SEGDA is unstable and worse than that of $\ALGO$ (Asyn. and Syn.). The reason is possible since the batchsize of stochastic gradient $bs=1$ in each iteration results in a large variance of stochastic gradient estimation.
Because there are several workers involved in the optimization in $\ALGO$, it has much more samples in each iteration than that of SEGDA-MKR. It indicates that the stochastic variance is smaller than that of SEGDA-MKR, resulting in the stable performance of $\ALGO$.

Thirdly, we also conduct experiments to validate the performance from the aspect of wallclock time on a bilinear minimax problem, where the number of workers $M=4$, and the noise level $\sigma=0.1$. We record the wallclock time of reaching the target residual value for synchronous $\ALGO$ ($K=50$), the asynchronous version ($K$ in the range of $\{50, 45, 40, 35, 30\}$), and the single thread version. The results are illustrated in \cref{fig:additional_exp} (c). As can be seen, compared with the single thread version, our proposed method speed-ups the convergence.  With respect to the wall clock time,  Asynchronous $\ALGO$($\ALGO$-Asyn) is slightly better than synchronous $\ALGO$(synchronous $\ALGO$-Syn). Since the tested bilinear minimax problem with noise level $\sigma=0.1$ is very simple (time cost is around 20 seconds), the differences in time cost between synchronous and asynchronous cases are not significant. 

Fourthly, we conduct the experiments with the bilinear case to evaluate the quantity of $Vt$ with the update t. Here, we adopt the same experimental settings as that of experiments in Section 6.1. The noise level $\sigma=0.1$ and the number of workers $M=4$. As can be seen from \cref{fig:additional_exp} (d), $Vt$ is really much smaller than the dominant variance term.

\begin{figure*}[tp]
	\centering

	\subfigure[]{
    	\includegraphics[width = 0.45\textwidth] 
    	{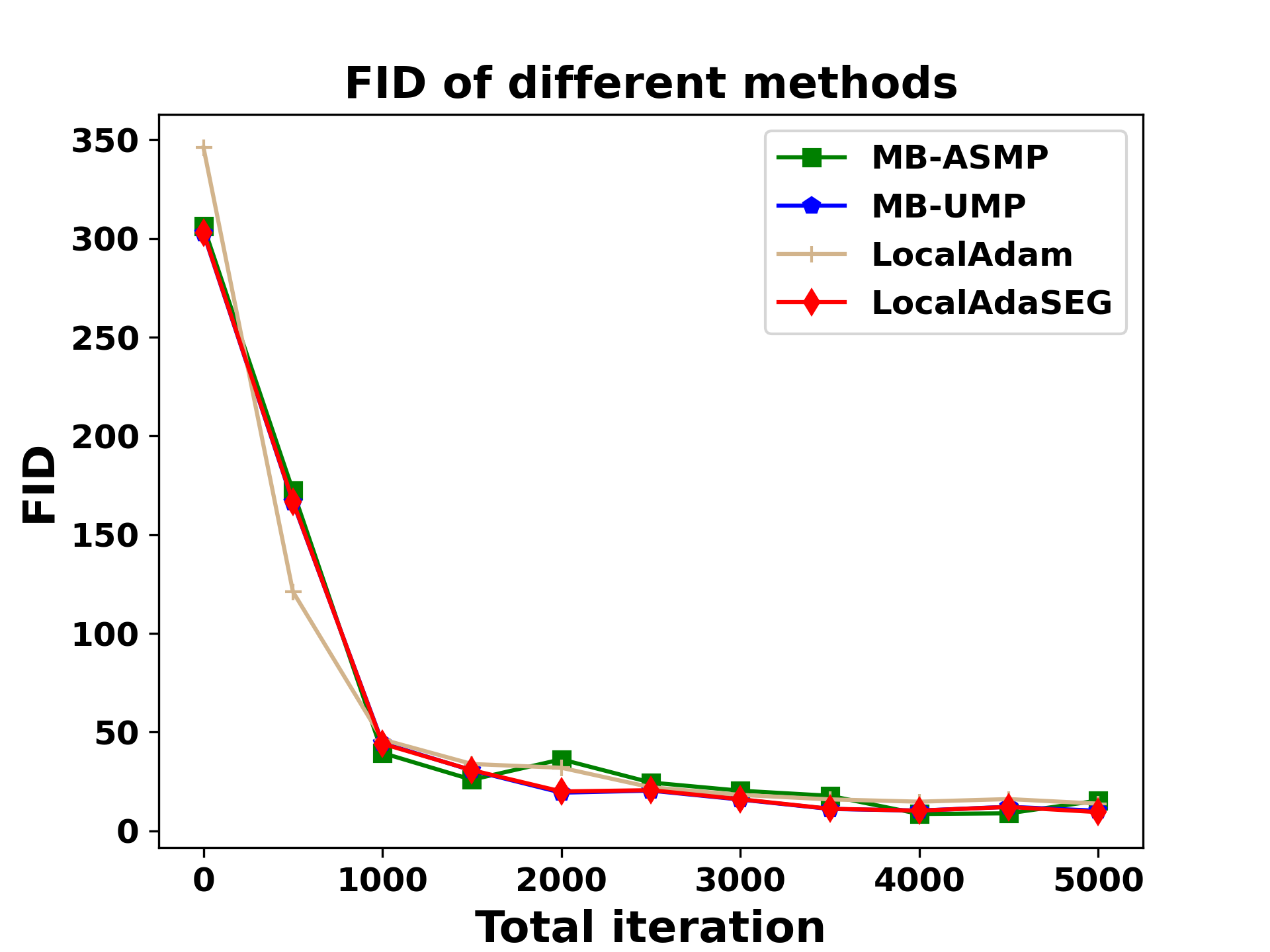}}
	\subfigure[]{
		\includegraphics[width = 0.45\textwidth]
		{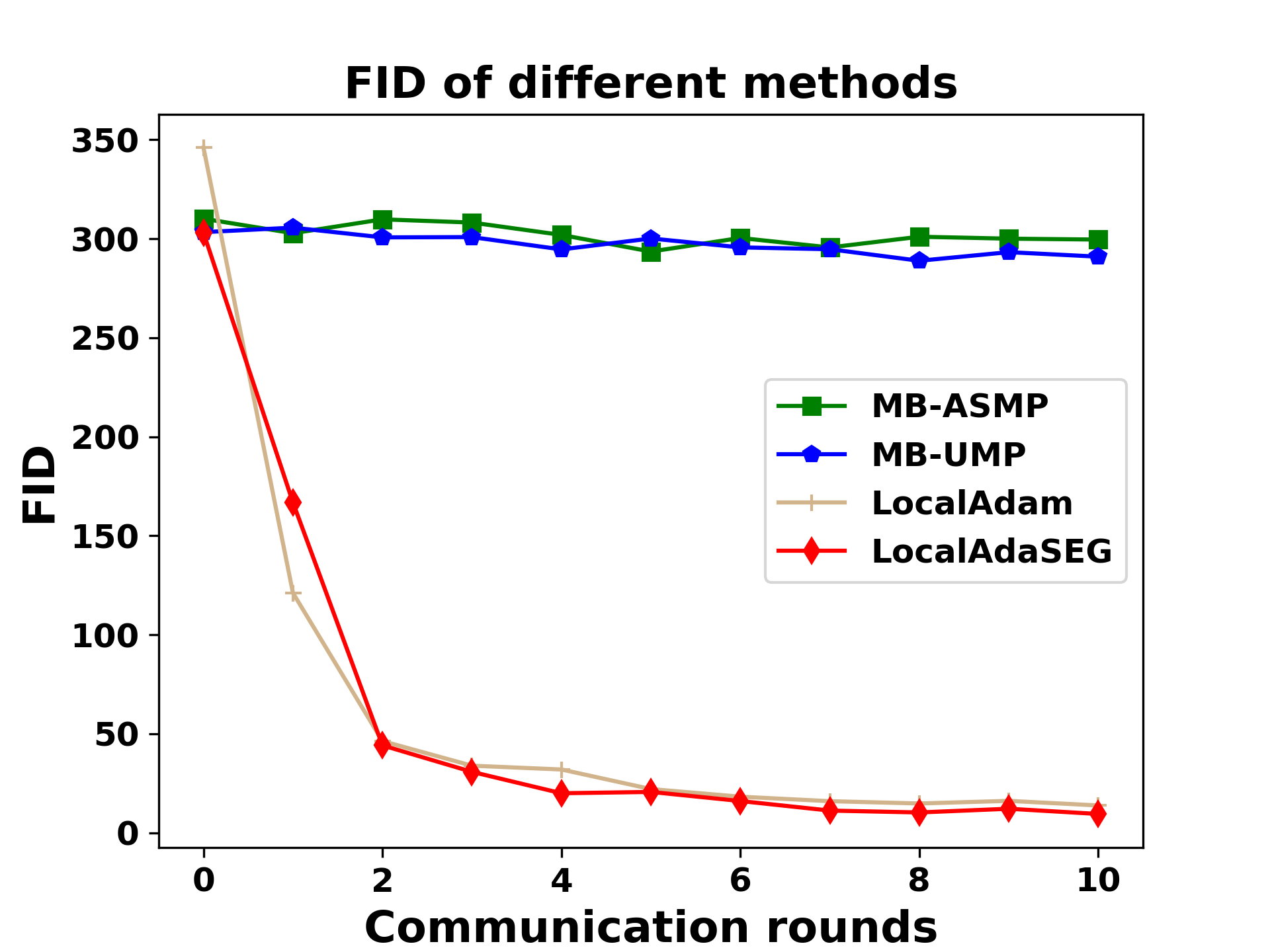}}

	\subfigure[]{
		\includegraphics[width = 0.45\textwidth]
		{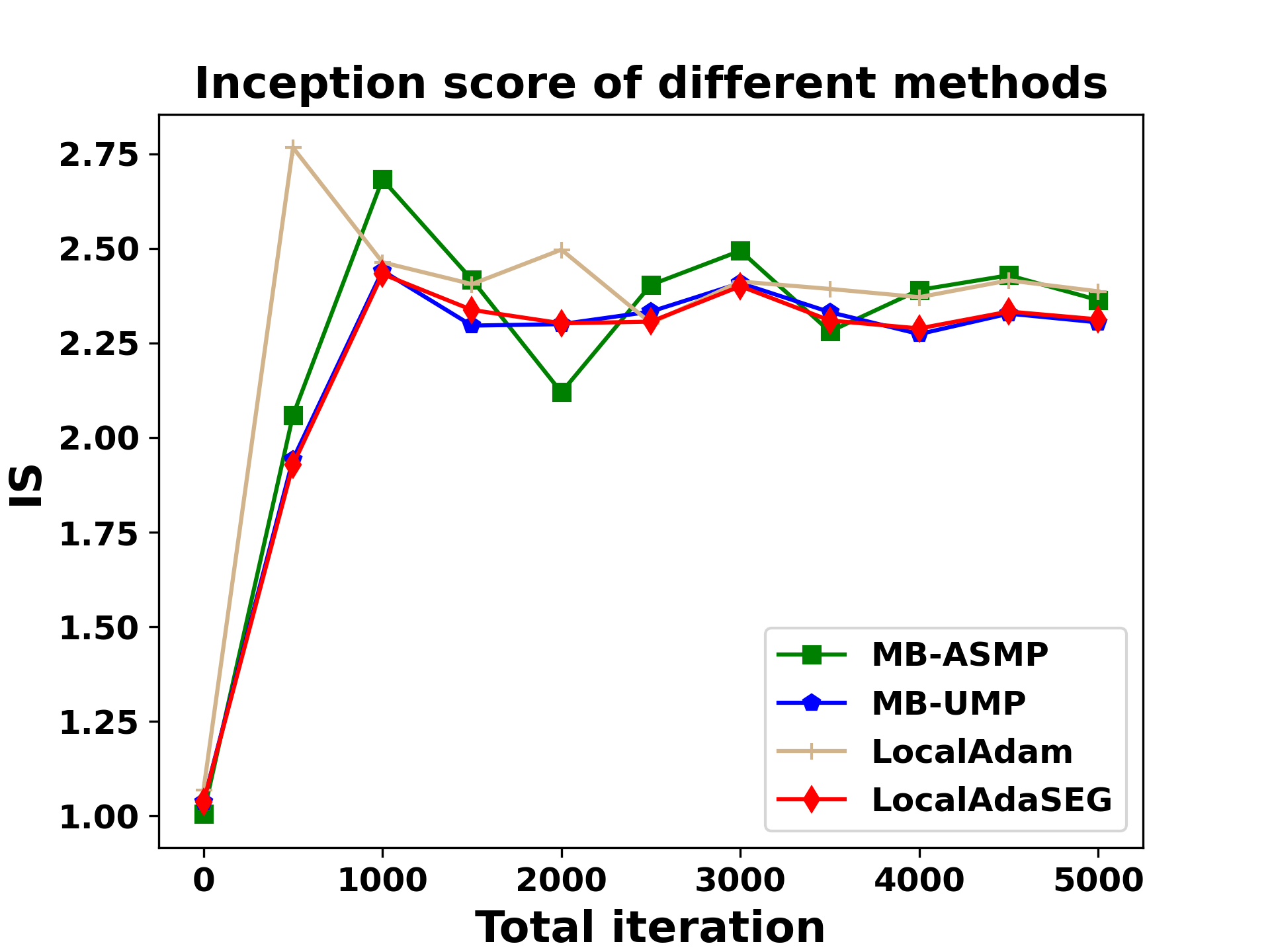}}
	\subfigure[]{
		\includegraphics[width = 0.45\textwidth]
		{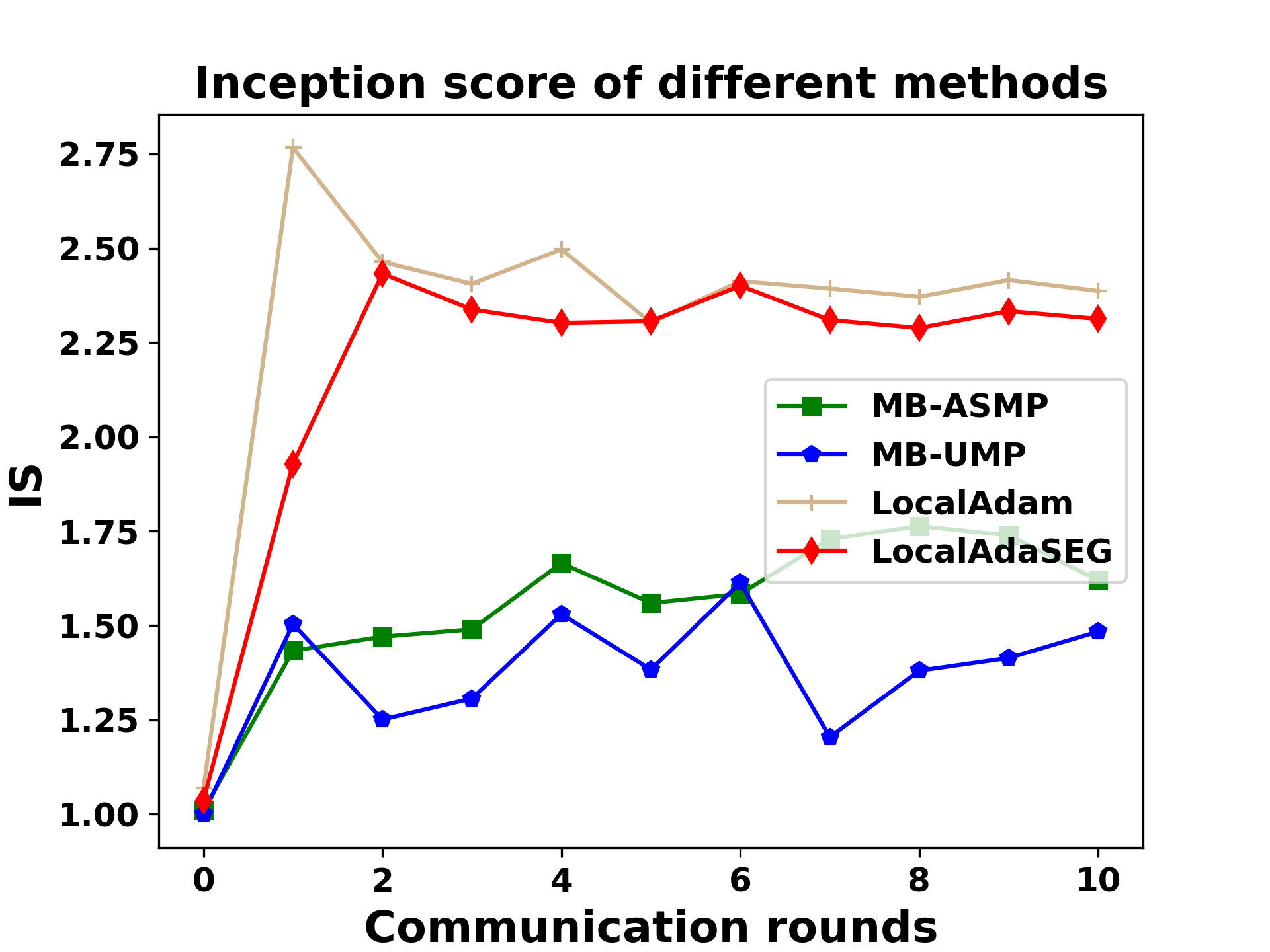}}

	\caption{\small Subfigures (a)-(b) and (c)-(d) show the results of WGAN trained with $\ALGO$ and existing optimizers. We plot FID and IS against the number of iterations and communications, respectively.}
	\label{fig:wgan_iid_agaisnt_SOTA}
\end{figure*}

\begin{figure*}[tp]
	\centering
	\subfigure[]{
     	\includegraphics[width = 0.45\textwidth]
    	{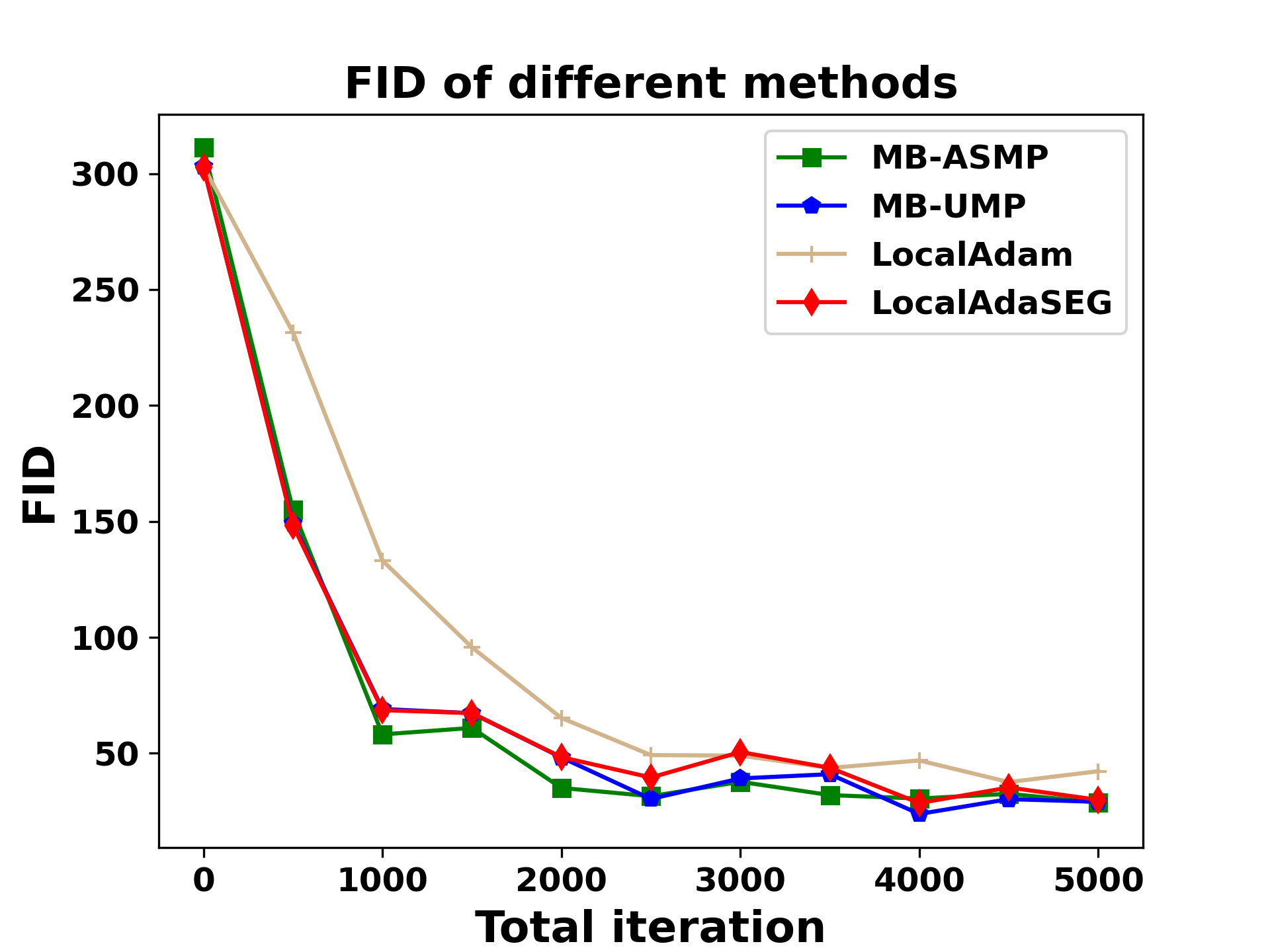}}
	\subfigure[]{
		\includegraphics[width = 0.45\textwidth]
		{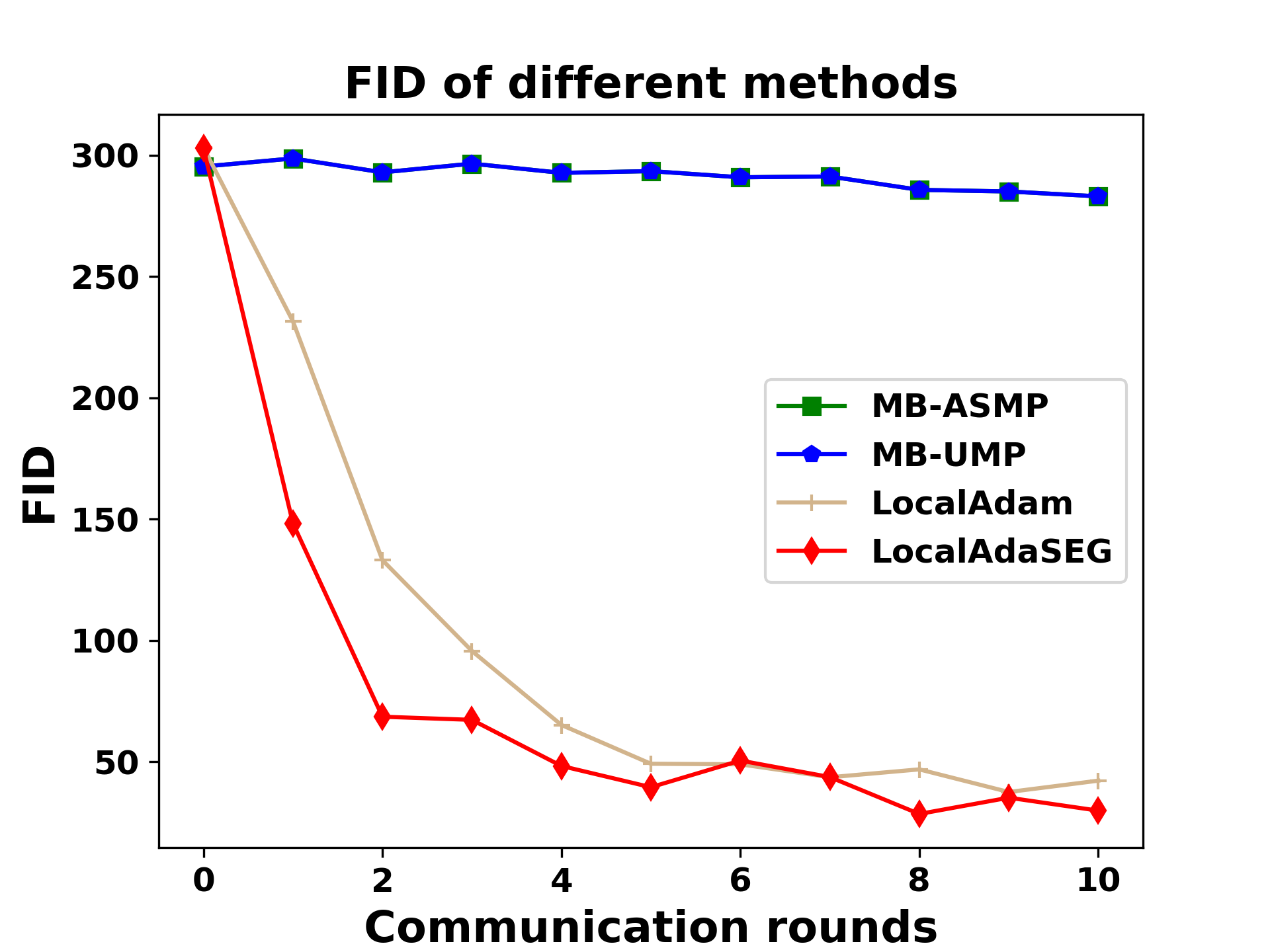}}

	\subfigure[]{
		\includegraphics[width = 0.45\textwidth]
		{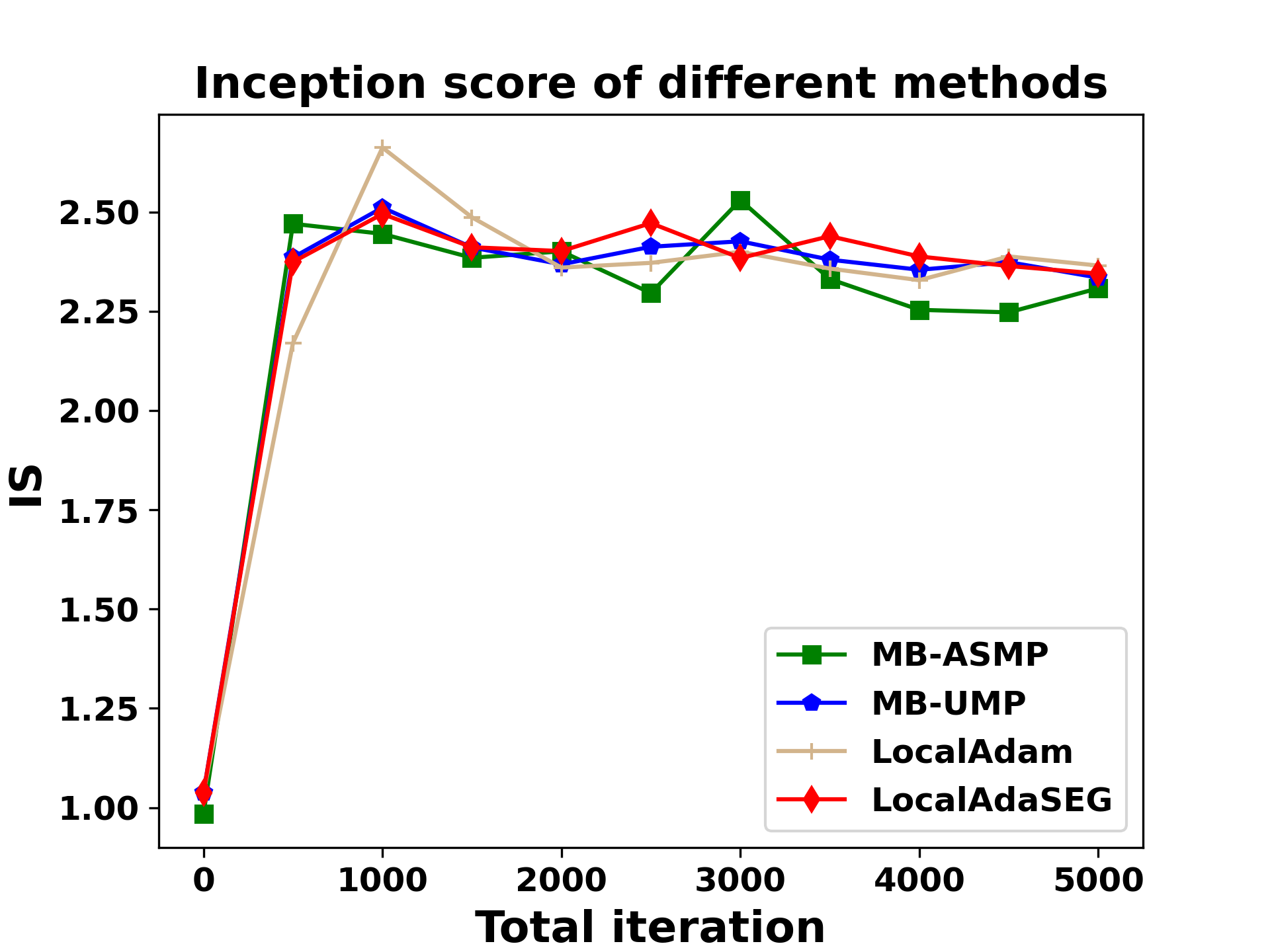}}
	\subfigure[]{
		\includegraphics[width = 0.45\textwidth]
		{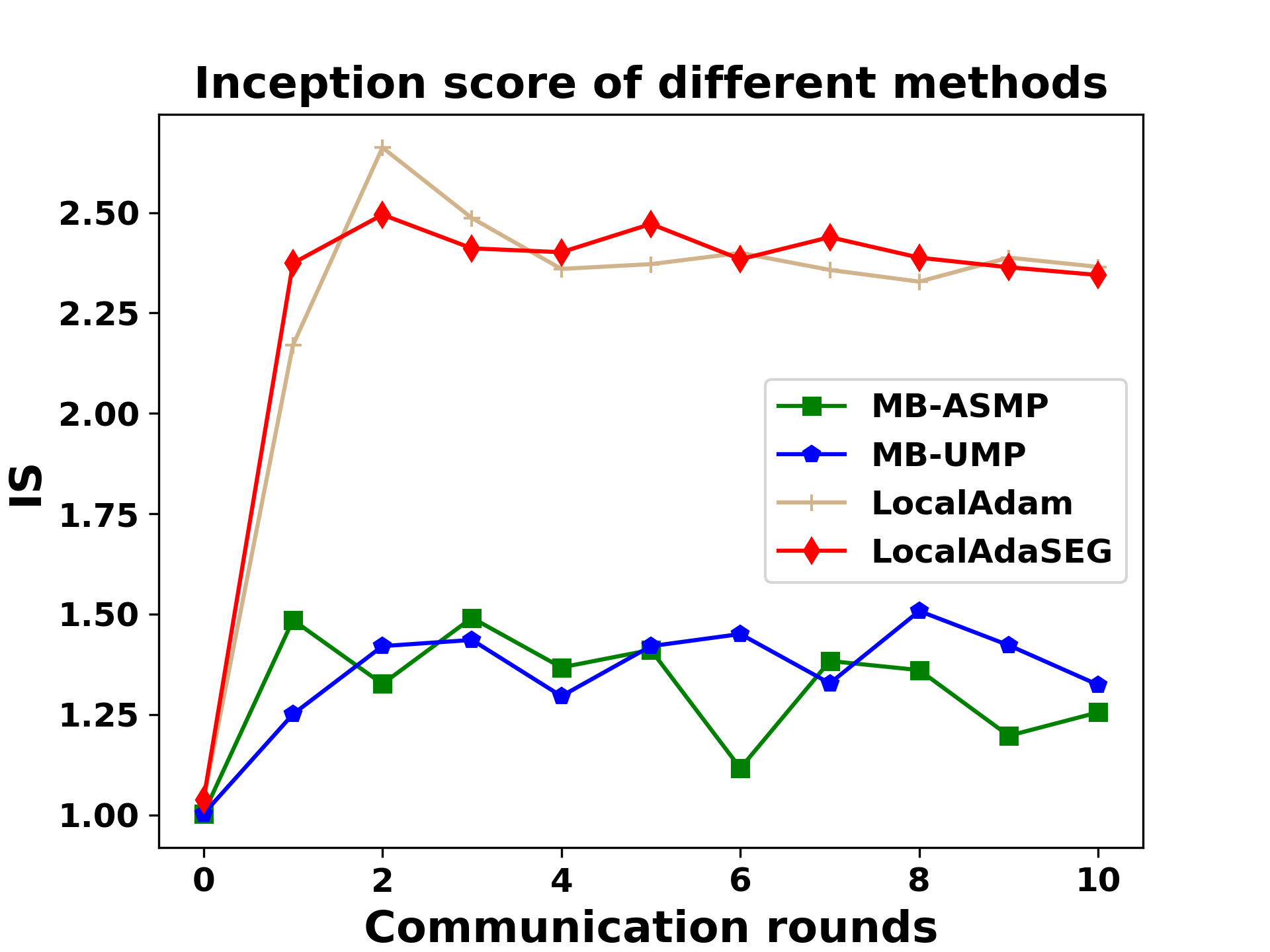}}
	\caption{\small Subfigures (a)-(b) and (c)-(d) show the results of Federated WGAN trained with $\ALGO$ and existing optimizers. We plot FID and IS against the number of iterations and communications, respectively.}
	\label{fig:wgan_noniid_agaisnt_SOTA}
\end{figure*}

\subsection{Wasserstein GAN}\label{wgan-discription}

Inspired by game theory, 
generative adversarial networks (GANs)
have shown great performance in many generative tasks
to replicate the real-world rich content, 
such as images, texts, and music. 
GANs are composed of two models, a generator and a discriminator, 
which are competing with each other to improve the
performance of a specific task.  In this experiment, 
we aim to train a digit image generator using the MNIST dataset.

It is challenging to train a GAN model due
to the slow convergence speed, instability of training
or even failure to converge. 
\cite{arjovsky2017towards, arjovsky2017wasserstein} proposed to use 
the Wasserstein distance as the GAN loss function to provide stable 
and fast training. To enforce the Lipschitz constraint on the
discriminator, we adopt WGAN with gradient penalty as 
our experimental model. The objective can be described as
\begin{align}\label{eq:wgan_loss}
	\mathop{\min}_{G} \mathop{\max}_{D} \bigg\{ \mathop{\mathbb{E}}_{x \sim \mathbb{P}_r} [D(x)] - \mathop{\mathbb{E}}_{z \sim \mathbb{P}_z} \big[D\big(G(z)\big)\big] -  \lambda  \big[\big(\| \nabla_{\hat{x}} D(\hat{x})   \|_2 - 1\big)^2\big]\bigg\}\,,
\end{align}
where $D$ and $G$ denote the generator and discriminator, 
$\mathbb{P}_r$ is the data distribution, and 
$\mathbb{P}_z$ represents the noise distribution
(uniform or Gaussian distribution).
The point
$\hat{x} \sim \mathbb{P}_{\hat{x}}$ is sampled
uniformly along straight lines between pairs of points
sampled from the real data distribution $\mathbb{P}_r$ and 
the generator distribution $\mathbb{P}_{\tilde{x}}$,
expressed as $\hat{x} := \epsilon x + (1-\epsilon) \tilde{x}$, 
where $\epsilon \sim U[0,1]$.

{\bf DCGAN.}
We implement WGAN with the DCGAN architecture,
which improves the original GAN  with convolutional layers. 
Specifically, the generator consists of $3$ blocks, 
which contain deconvolutional layers, 
batch normalization and activations. 
The details of the whole generator can be represented as 
sequential layers \emph{\{Linear, BN, ReLU, DeConv, BN, ReLU, DeConv, BN, ReLU, DeConv, Tanh\}},
where \emph{Linear}, \emph{BN}, \emph{DeConv} denote the linear,
batch normalization and deconvolutional layer, respectively. 
\emph{ReLU} and \emph{Tanh} represent the activation functions.
Similarly, the discriminator also contains $3$ blocks, 
which can be described as sequential layers
\emph{\{Conv, LReLU, Conv, LReLU, Conv, LReLU, Linear\}},
where \emph{Conv} and \emph{LReLU} denote the convolutional 
layer and Leaky-ReLU activation function, respectively.

{\bf Inception score (IS).}
Inception score (IS) is proposed to evaluate the performance of a GAN with 
an inception model. IS measures GAN from two aspects simultaneously. 
Firstly, GAN should output a high diversity of images.
Secondly, the generated images should contain clear objects. 
Specifically, we feed the generated images $x$ into a 
well-trained inception model to obtain the output $y$.
Then, IS can be calculated by the following equation:
\begin{align}
	\mathrm{IS} \defeq \exp\bigg(\mathop{\mathbb{E}}_{x \sim \mathbb{P}_g} \big[D_{\mathrm{KL}}\big(p(y \given x) \| p(y)\big)\big]\bigg),
\end{align}
where $\mathbb{P}_g$ is the generator model distribution. 
Essentially, IS computes the mutual information 
$I(y; x) = H(y) - H(y\given x)$, where $H(\cdot)$ denotes the entropy. 
The larger $H(y)$, the more diversity in the generated images.
The lower $H(y\given x)$ implies the input $x$ belongs to one class with a higher probability.
In summary, IS is bounded by $1 \leq \mathrm{IS} \leq 1000$. 
The higher IS implies a better performance of a GAN.

{\bf Fr\'echet Inception Distance (FID).}
Although IS can measure the diversity and quality of the generated images,
it still has some limitations, 
such as losing sight of the true data distribution,
failure to measure the model generalization. 
FID is an improved metric for GAN, 
which cooperates with the training samples and 
generated samples to measure the performance together. 
Specifically, we feed the generated samples and training samples 
into an inception model to extract the feature vectors, respectively.
Usually, we extract the logits value 
before the last sigmoid activation as the feature vector
with dimension $2048$. Essentially, FID is the Wasserstein 
metric between two multidimensional Gaussian distributions:
$\mathcal{N}(\mu_g, \Sigma_g)$ the distribution  of feature vectors from generated samples
and
$\mathcal{N}(\mu_r, \Sigma_r)$  the distribution of feature vectors from the training samples.
It can be calculated as
\begin{align}
	\mathrm{FID} \defeq \| u_r - u_g \|^2 + \mathrm{tr} \big(\Sigma_r + \Sigma_g - 2(\Sigma_r \Sigma_g)^{1/2}\big)
\end{align}
where $\mathrm{tr}(\cdot)$ denotes the trace of a matrix.
The lower the FID, the better the performance of a GAN. 


\begin{figure*}[tp]
	\centering

	\subfigure[]{
    	\includegraphics[width = 0.45\textwidth] 
    	{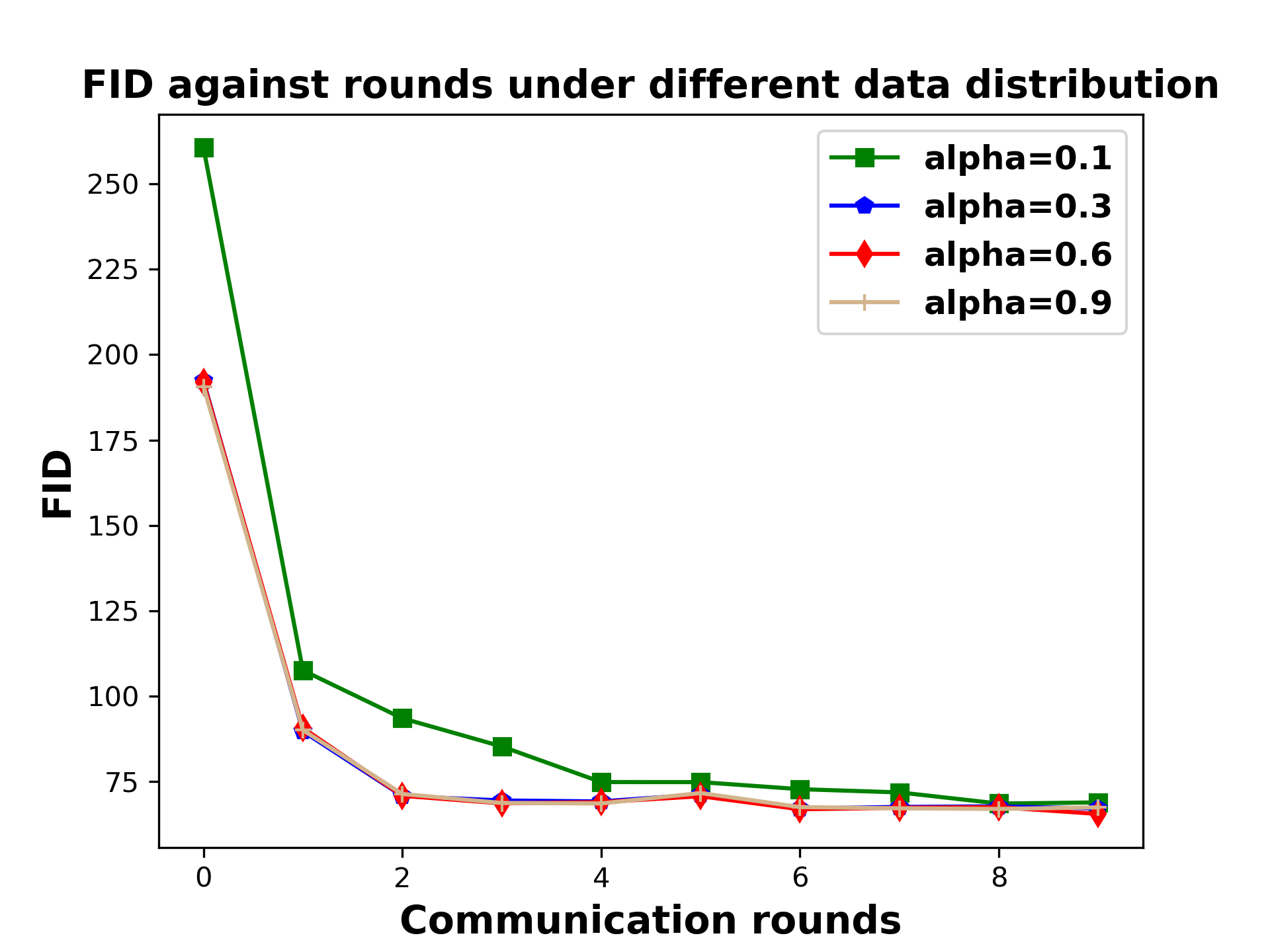}}
	\subfigure[]{
		\includegraphics[width = 0.45\textwidth]
		{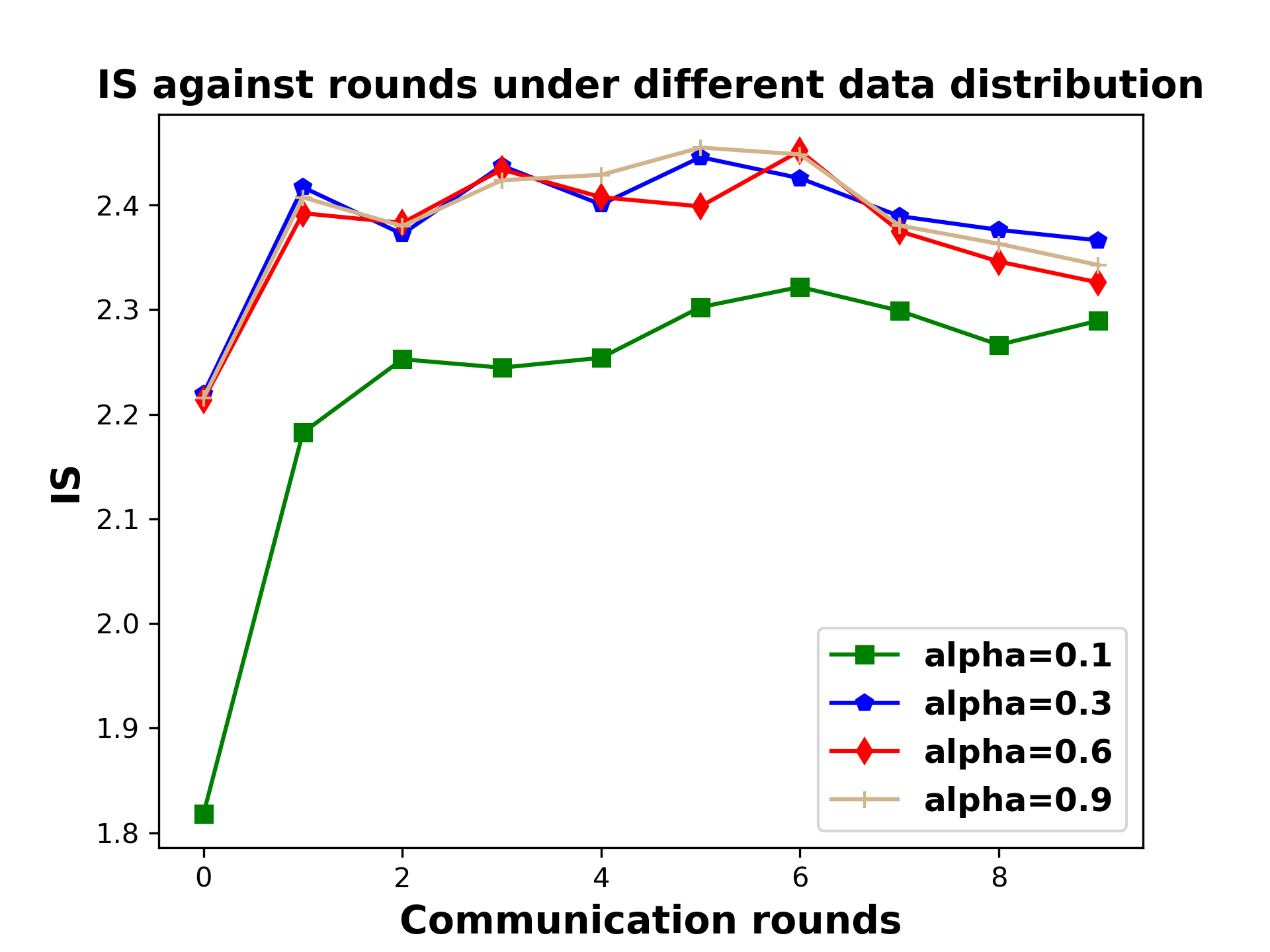}}

	\caption{\small Subfigures (a)-(b) show the FID and IS against communication rounds of WGAN trained over MNIST dataset under different Dirichlet distribution.}
	\label{fig:wgan_under_diff_dist}
\end{figure*}

{\bf Implementation details.}\ 
Experiments are conducted on the MNIST datasets of digits
from $0$ to $9$, with $60000$ training images of size $28 \times 28$.
We adopt the same network architecture of WGAN as that of DCGAN \cite{arjovsky2017wasserstein}. 
We simulate $M=4$ parallel workers and run $\ALGO$ with the batch
size $128$ and local iteration steps $K = 500$. 
In the homogeneous setting, the local data in each worker 
is uniformly sampled from the entire dataset. 
In the heterogeneous setting, 
we partition the MNIST dataset into $4$ subsets using the partition methods in \cite{lin2019don}.
Then each worker is loaded with a fraction of the dataset. 
Due to non-adaptive learning rates, LocalSGDA, LocalSEGDA, and MB-SEGDA
are hard to tune and do not achieve satisfactory performance for training WGAN. 
For a better illustration, we only show the performance of   
$\ALGO$, MB-UMP, MB-ASMP and Local Adam. To measure the efficacy 
of the compared optimizers, we plot FID and IS \cite{heusel2017gans} 
against the number of iterations and communications, respectively.

To investigate the convergence rate of $\ALGO$ under different data distributions, we conduct to train WGAN over MNIST dataset with different Dirichlet distributions.
In \cref{fig:wgan_under_diff_dist}, it  shows the FID and IS of training the WGAN under the Dirichlet distribution with various parameters $ \alpha \in \{0.1, 0.3, 0.6, 0.9\}$. It should be noted that when $\alpha$ increases, the data distribution trends closer to the homogeneous setting. As can be seen from \cref{fig:wgan_under_diff_dist}, it converges faster when the parameter $\alpha$ decreases.

Furthermore, we also compare $\ALGO$ with other existing optimizers under various Dirichlet distributions which parameter $\alpha \in \{0.1, 0.3\}$, respectively. As can be seen from \cref{fig:compare_wgan_under_diff_dist}, our proposed LocalAdaSEG also achieves a faster convergence compared with exiting minimax optimizers.

\begin{figure*}[tp]
	\centering
	\centering

	\subfigure[]{
    	\includegraphics[width = 0.45\textwidth] 
    	{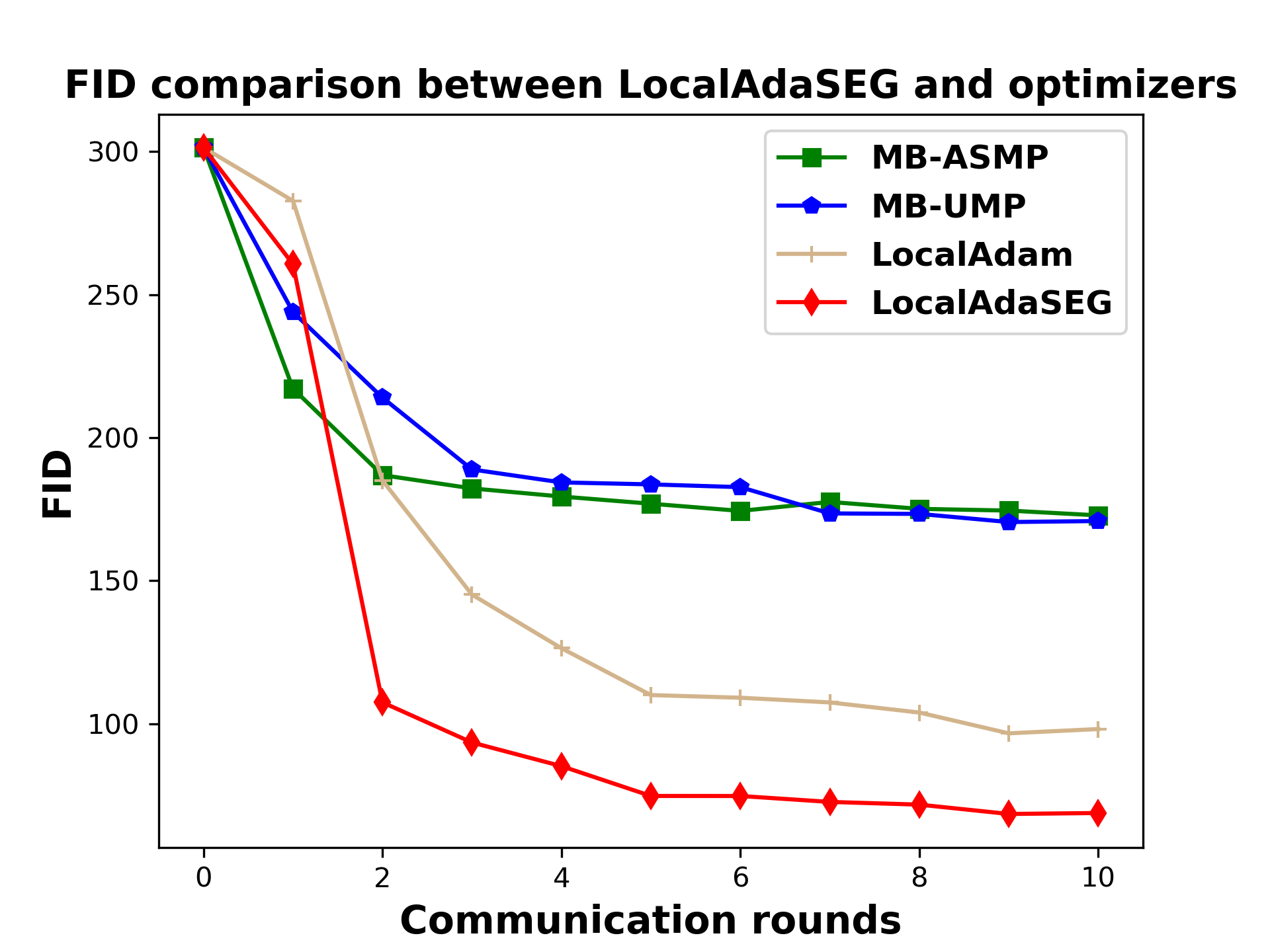}}
	\subfigure[]{
		\includegraphics[width = 0.45\textwidth]
		{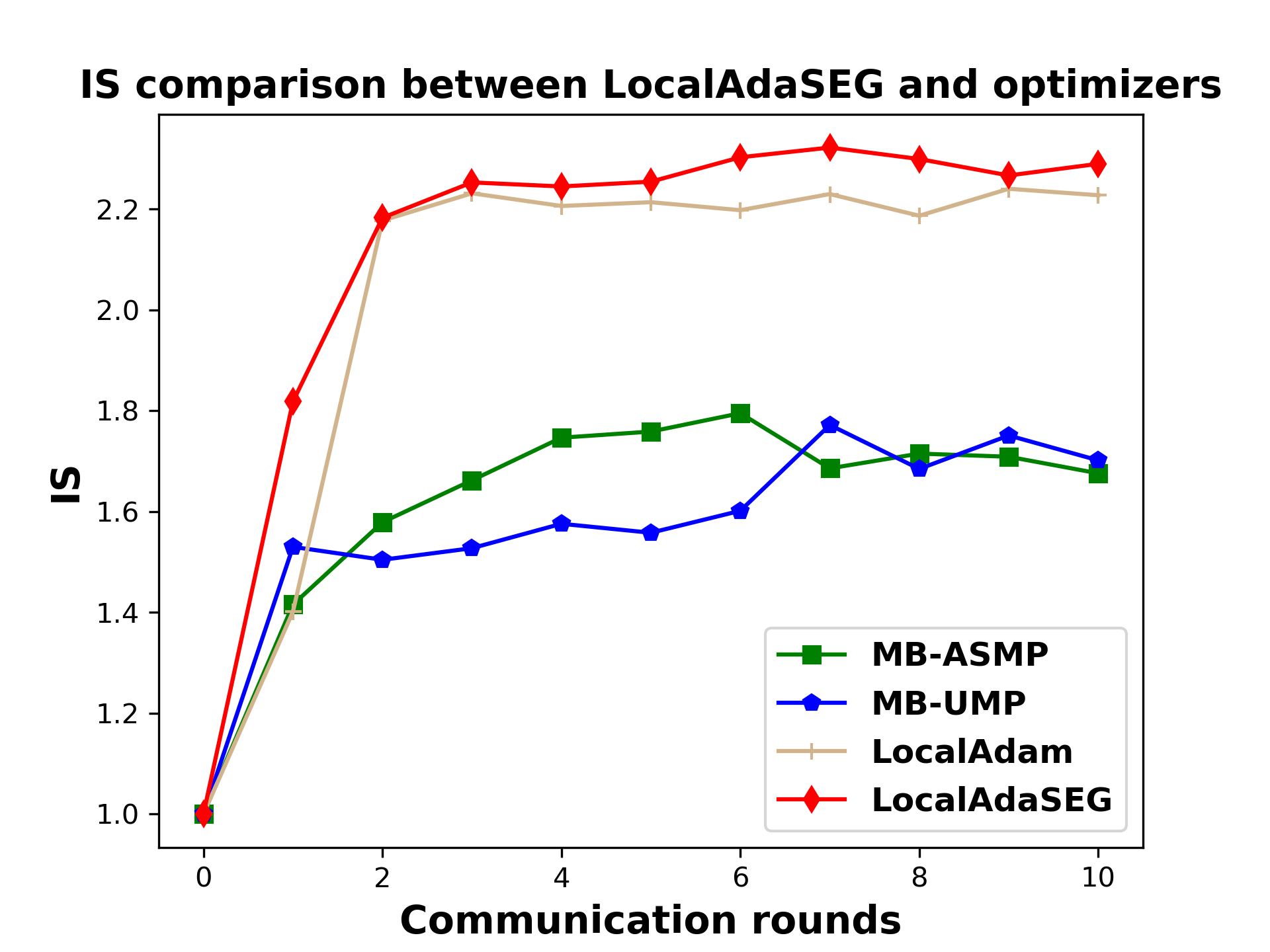}}

        \subfigure[]{
    	\includegraphics[width = 0.45\textwidth] 
    	{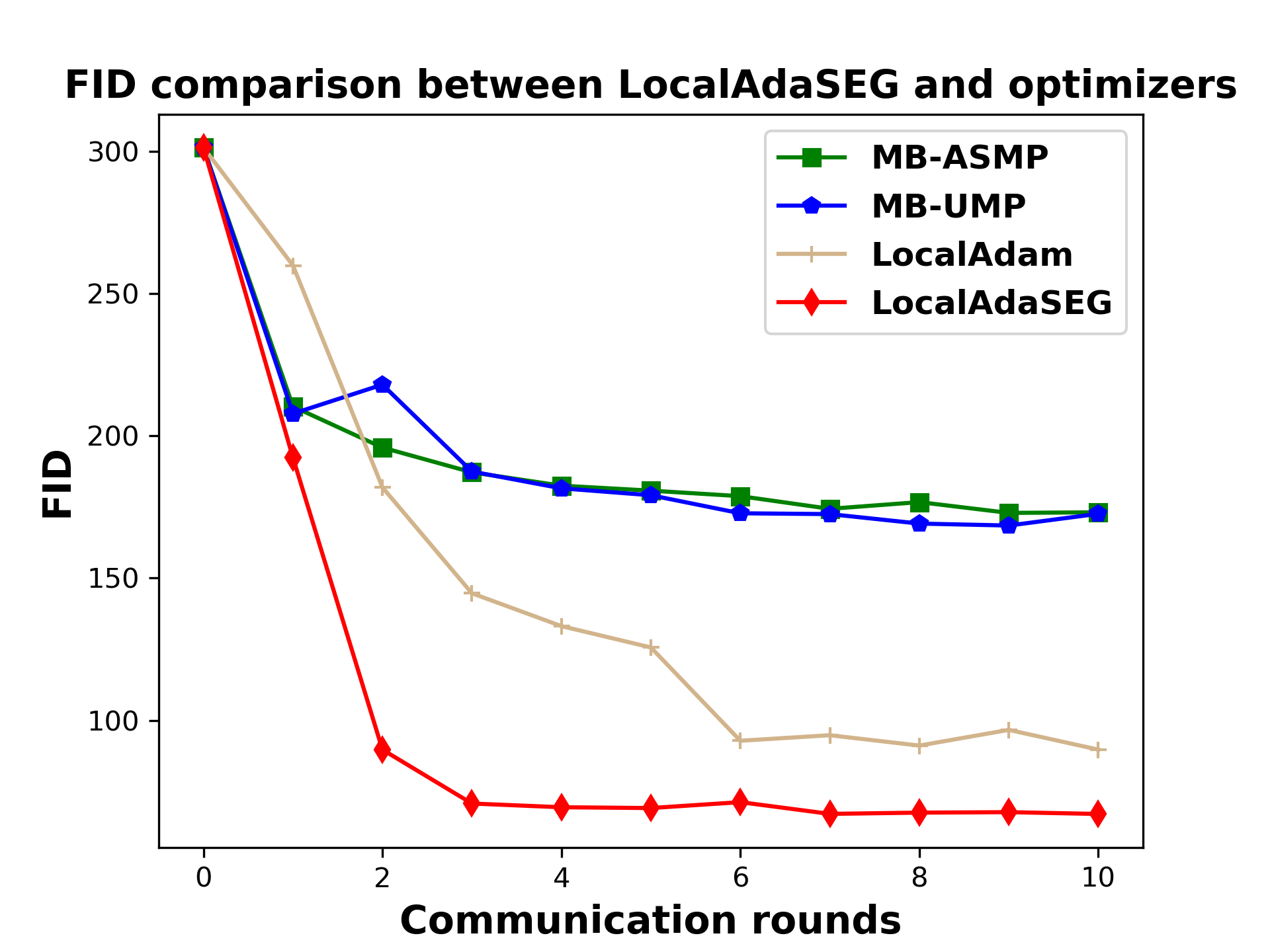}}
	\subfigure[]{
		\includegraphics[width = 0.45\textwidth]
		{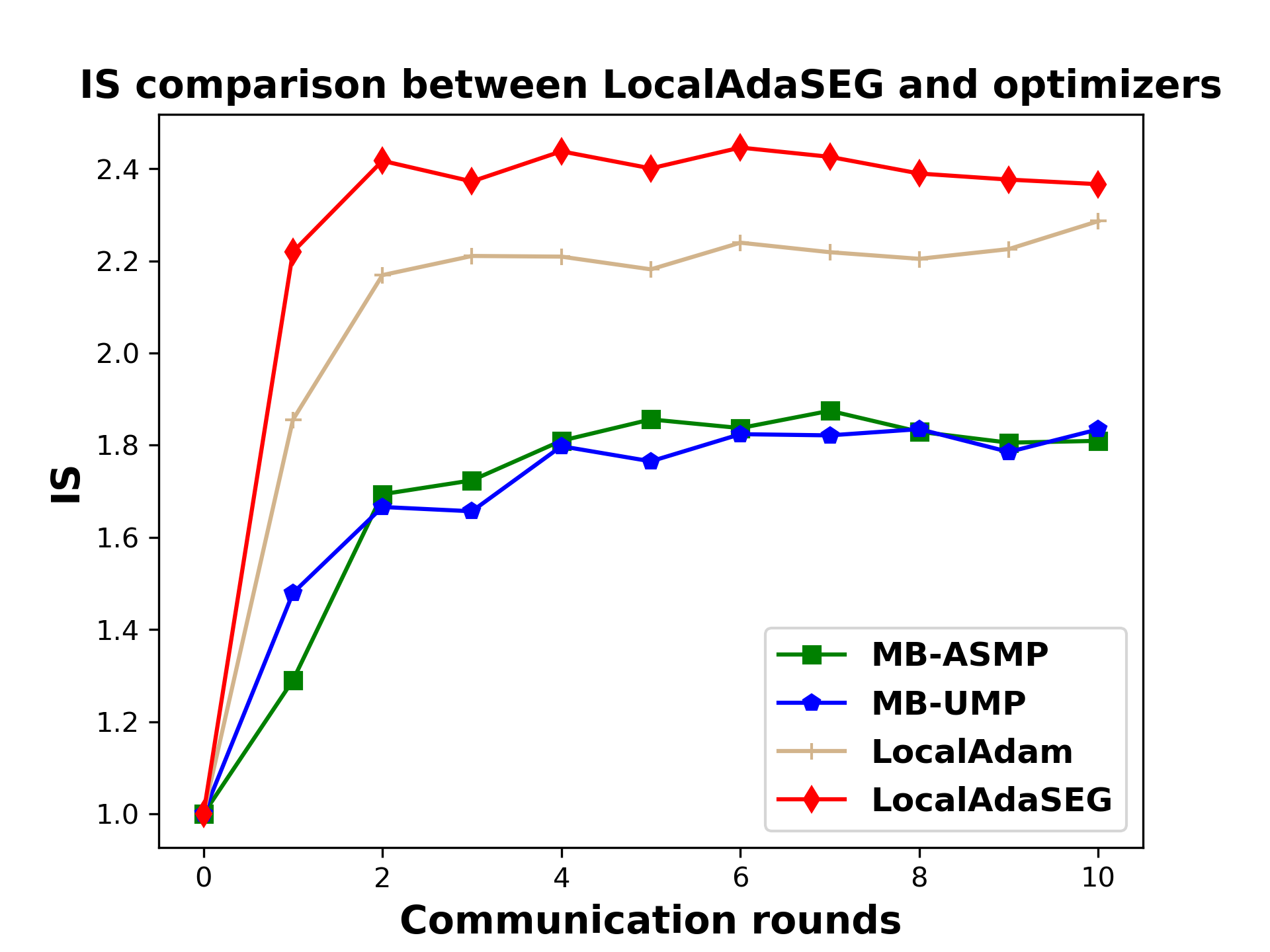}}

	\caption{\small Subfigures (a)-(b) show the FID and IS against communication rounds of WGAN trained over MNIST dataset with $\alpha = 0.1$; (c)-(d) show the FID and IS against communication rounds of WGAN trained over MNIST dataset with $\alpha = 0.3$.}
	\label{fig:compare_wgan_under_diff_dist}
\end{figure*}

\subsection{BigGAN on CIFAR10} \label{biggan_cifar10}

\textbf{BigGAN.}
Although a variety of GANs were investigated to effectively generate images, they are still restricted to small image synthesis and the training process remains dependent on augmentations and hyperparameters. BigGAN is a typical network to pull together a suite of recent best practices in training class-conditional images and scaling up the batch size and number of model parameters. The result is the routine generation of both high-resolution and high-fidelity natural images. Since utilizing the self-attention module and skip connections, BigGAN can not be simply considered as a sequential combination of layers. Here, we only describe the key module in the BigGAN. Specifically, the generator consists of $1$ linear layer, $N$ generator blocks, and $1$ sequential layer. Each generator block can be represented as several layers \emph{\{BN, ReLU, SNConv, BN, ReLU, SNConv, SNConv\}}, where \emph{SNConv} denotes 2d convolutional layer with the spectral norm.  Similarly, The discriminator also consists of linear layers and discriminator blocks. Each discriminator block can be listed as several layers \emph{\{ReLU, AvgPool, SNConv, SNConv, SNConv\}}, where \emph{AvgPool} denotes the average pooling.

\textbf{CIFAR10.}
The CIFAR-10 dataset consists of 60000 32x32 color images in 10 classes, with 6000 images per class. There are 50000 training images and 10000 test images. The test batch contains exactly 1000 randomly-selected images from each class. The training batches contain the remaining images in random order, but some training batches may contain more images from one class than another. Between them, the training batches contain exactly 5000 images from each class.

\textbf{Parameter Setup.}
We implement the BigGAN with the original CIFAR10 dataset, meaning that all the images are fed into the BigGAN without cropping and rotation. The experiments are conducted among $M=4$ parallel workers in a heterogeneous setting. It implies that all the training images are divided into $4$ parts by using Dirichlet distribution with parameter $\alpha = 0.6$. Each work runs the $\ALGO$ with the batch size $bs=125$ and local iteration steps $K=100$. The input noise dimension is $125$, and the channel of the generator block and discriminator is set to $16$. To show the performance of $\ALGO$, MB-UMP, MB-ASMP, and Local Adam, We plot FID and IS against the communication rounds in the \cref{fig:bigGAN_FID_IS}.

 \cref{fig:bigGAN_FID_IS} illustrates the FID and IS score of BigGAN over the CIFAR10 dataset. As can be seen that $\ALGO$ and Local Adam converge faster than other optimizers.

\begin{figure*}[tp]
	\centering

	\subfigure[]{
    	\includegraphics[width = 0.45\textwidth] 
    	{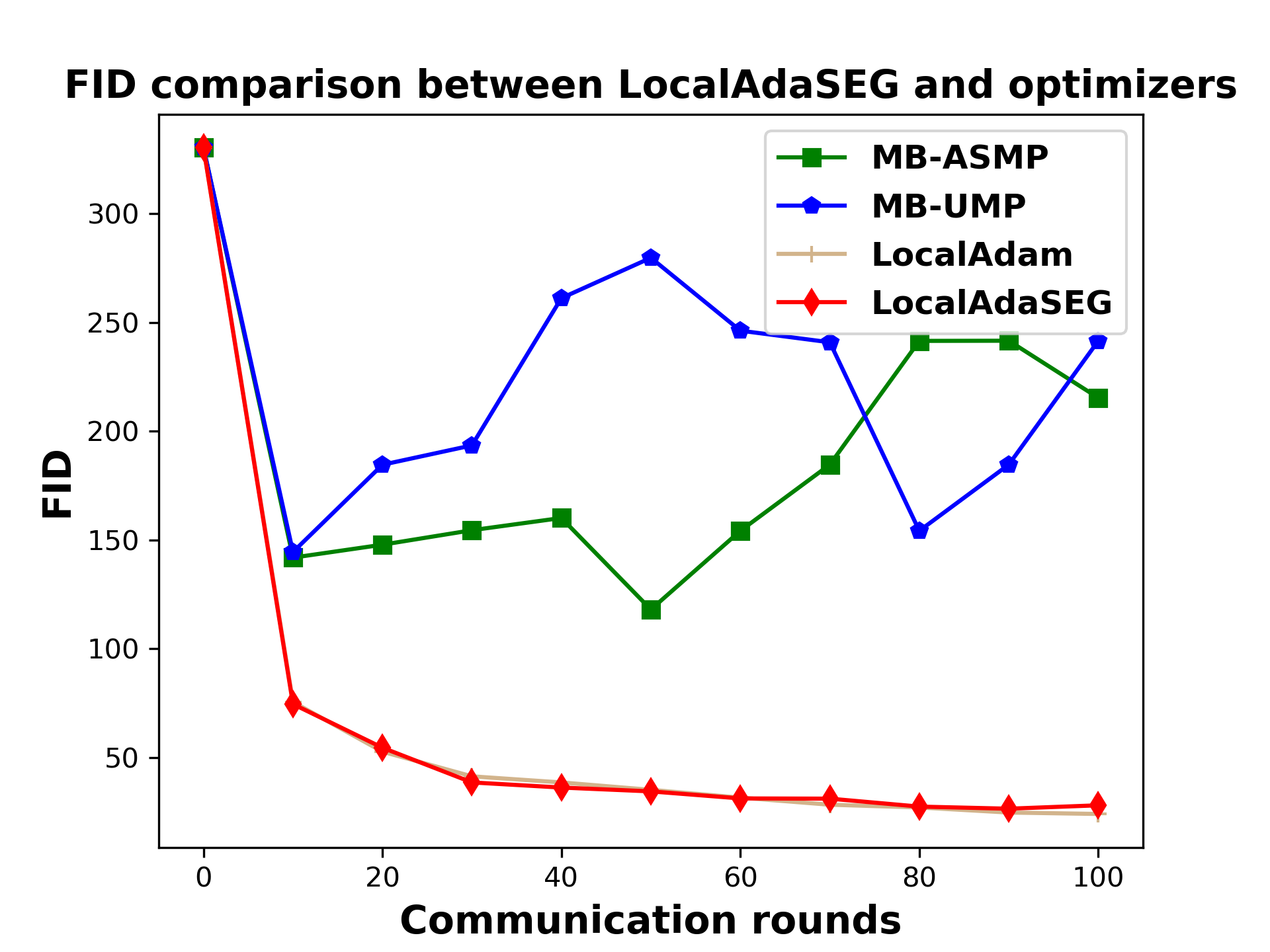}}
	\subfigure[]{
		\includegraphics[width = 0.45\textwidth]
		{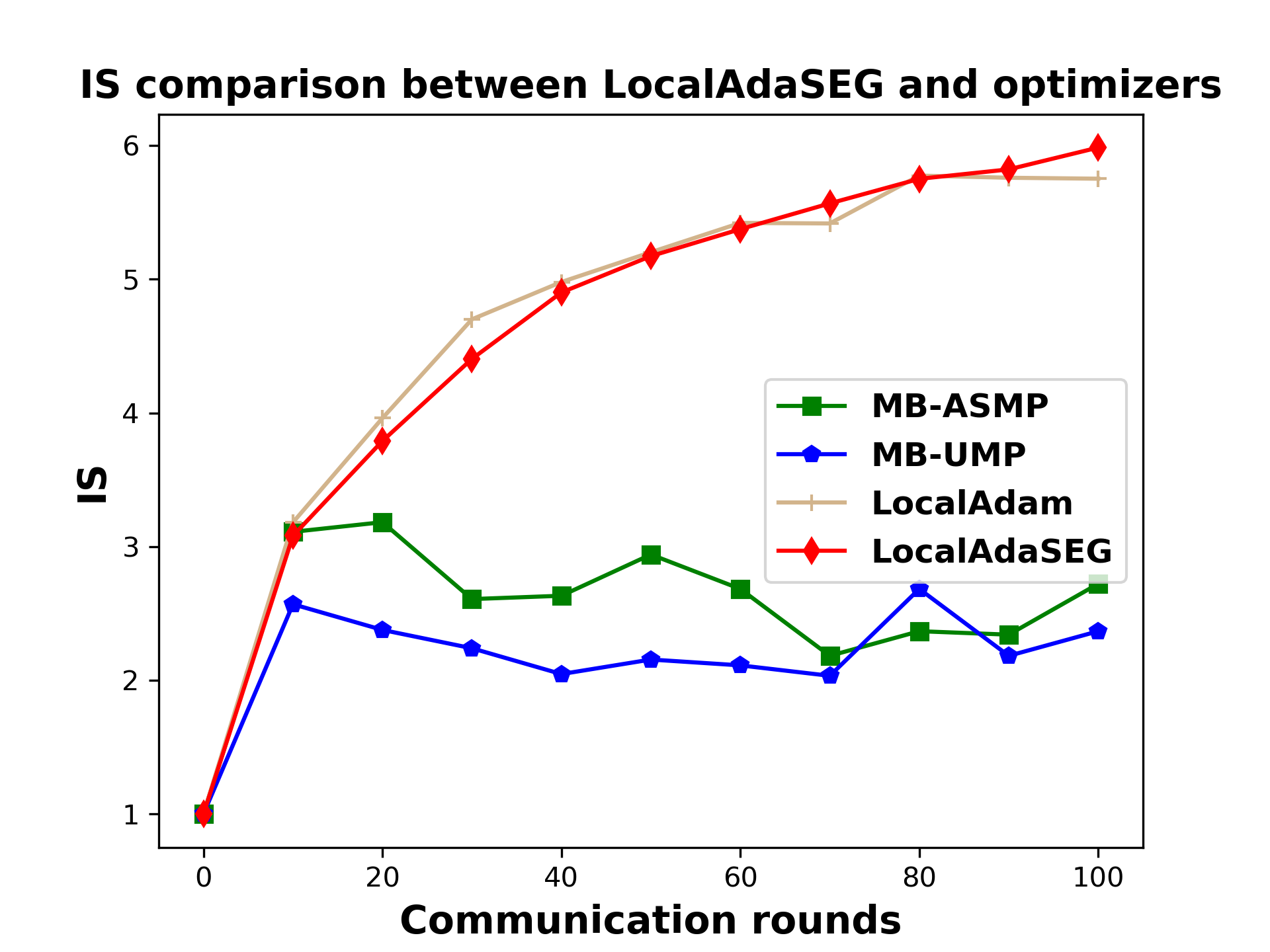}}

	\caption{\small Subfigures (a)-(b) show the FID and IS against communication rounds of BigGAN over the CIFAR10 dataset.}
	\label{fig:bigGAN_FID_IS}
\end{figure*}

\end{appendices}

\end{document}